\newtheorem{proposition}{Proposition}
\newtheorem{lemma}{Lemma}
\newtheorem*{theorem*}{Theorem}
\newtheorem*{lemma*}{Lemma}
\newtheorem*{proposition*}{Proposition}
\DeclareMathOperator{\tr}{tr}
\DeclareMathOperator*{\minimize}{minimize}
\DeclareMathOperator*{\subjectto}{subject\;to}
\newcommand{\norm}[1]{\left\lVert#1\right\rVert}
\newcommand{\RR}{\mathbb{R}}
\newcommand{\AllZeroRowCol}{$\sf{AllZeroRowCol}$}
\newcommand{\ZeroSingularValue}{$\sf{ZeroSingularValue}$}
\newcommand{\ConditionGrad}{$\sf{ConditionGrad}$}
\newcommand{\expnumber}[2]{{#1}\mathrm{e}{#2}}
\newcites{append}{Appendix References} %\citeappend
\title{Beyond NaN: Resiliency of Optimization Layers in The Face of Infeasibility}
\author{Wai Tuck Wong$^1$, Sarah Kinsey$^2$, Ramesha Karunasena$^1$, Thanh H. Nguyen$^2$, Arunesh Sinha$^3$}
\begin{document}

\maketitle

\begin{abstract}
Prior work has successfully incorporated optimization layers as the last layer in neural networks for various problems, thereby allowing joint learning and planning in one neural network forward pass. In this work, we identify a weakness in such a set-up where inputs to the optimization layer lead to undefined output of the neural network. Such undefined decision outputs can lead to possible catastrophic outcomes in critical real time applications. We show that an adversary can cause such failures by forcing rank deficiency on the matrix fed to the optimization layer which results in the optimization failing to produce a solution. We provide a defense for the failure cases by controlling the condition number of the input matrix. We study the problem in the settings of synthetic data, Jigsaw Sudoku, and in speed planning for autonomous driving. We show that our proposed defense effectively prevents the framework from failing with undefined output. Finally, 
%we highlight that the weakness should not be viewed in isolation; 
we surface a number of edge cases which lead to serious bugs in popular optimization solvers which can be abused as well.
\end{abstract}

\section{Introduction}

There is a recent trend of incorporating optimization and equation solvers as the \emph{final layer} in a neural network, where the \emph{penultimate layer} outputs parameters of the optimization or the equation set that is to be solved~\cite{amos2017optnet,donti2017task,NEURIPS2019_9ce3c52f,wilder2019melding,wang2019satnet,perrault2020end,li2020end,paulus2021comboptnet}. The learning and optimizing is performed jointly by differentiating through the optimization layer, which by now is incorporated into standard libraries. Novel applications of this method have appeared for decision focused learning, solving games, clustering after learning, with deployment in real world autonomous
driving~\cite{xiao2022differentiable} and scheduling~\cite{wang2022decision}. In this work, we explore a novel attack vector that is applicable for this setting, but we note that the core concepts in this attack can be applied to other settings as well. 
While a lot of work exists in attacks on machine learning, in contrast, we focus on a new attack that forces the decision output to be meaningless via specially crafted inputs. The failure of the decision system to produce meaningful output
can lead to catastrophic outcomes in critical domains such as autonomous driving where decisions are needed in real time. Also, such inputs when present in training data lead to abrupt failure of training. Our work \emph{exploits the failure conditions of the optimization layer} of the joint network in order to induce such failure. This vulnerability has not been exploited in prior literature. % on adversarial learning.

\emph{First}, we present a \emph{numerical instability attack}. Typically, an optimization solver or an equation set solver takes in parameters $\theta$ as input. 
In the joint network, this parameter $\theta$ is output by the learning layers and feeds into the last optimization layer (see Fig.~\ref{fig:general_architecture}).
At its core, the issue lies in using functions which are prone to numerical stability issues in its parameters (Appendix \ref{appendix:additional_attacks}).
Most optimization or equation solvers critically depend on the matrix $A$---part of the parameter $\theta$---to be sufficiently far from a singular matrix to solve the problem.  
Our attack proceeds by searching for input(s) that cause the matrix $A$ to become singular. The instability produces \emph{NaNs}---undefined values in floating-point arithmetic---which may result in undesired behavior in downstream systems that consume them.
We perform this search via gradient descent and test three different ways of finding a singular matrix in neighborhood of $A$; only one of which works consistently in practice. 

\begin{figure*}[t]
\centering
\includegraphics[width=0.85\textwidth]{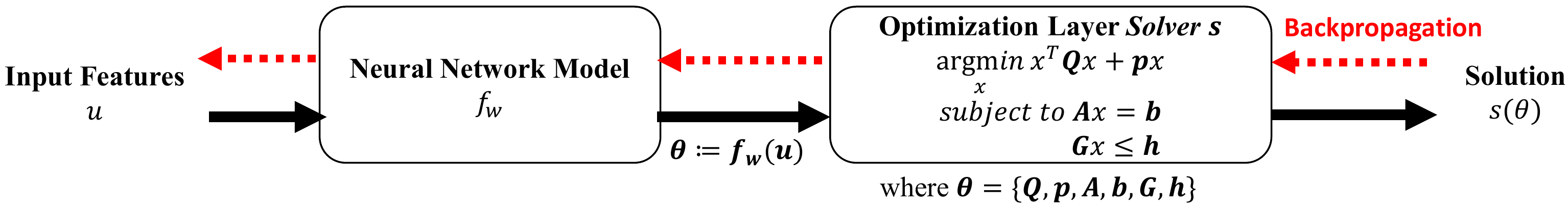}
\caption{Optimization layers in neural networks. The neural network takes input $u$. Some parameters ($Q,p,A,b,G,h$) of the optimization then depend on the output $\theta = f_w(u)$. 
%Note that this differs from prior work, where constraints are bound to a layer and do not change when a different test input is used. The output of the model $s(\theta)$ is the result of the optimization, and loss on the solution can be backpropagated to update model parameters. We also utilize this fact to backpropagate to our input features to construct attack samples when optimizing for instability.
}
\label{fig:general_architecture}
\end{figure*}

%Our second \emph{constraint infeasibility attack} is specialized for optimization with linear inequality constraints, say $A x \leq b$. Most work in optimization as a layer within neural networks use linear constraints. However, none of these handle the failure arising from infeasible constraint $A x \leq b$. While there is no closed form formula (in $A$) for the conditions when the the constraints $A x \leq b$ are infeasible, we utilize a characterization of infeasibility called the Farkas Lemma and derive a novel differentiable loss function using this result. Optimizing this loss function results in successful discovery of inputs that produce intermediate matrix $A$ that causes infeasibility.

\emph{Second}, to tackle the numerical instability attack, we propose a novel powerful defense via an efficiently computable intermediate layer in the neural network. This layer utilizes the singular value decomposition (SVD) of the matrix $A$ and, if needed, approximates $A$ closely with a matrix $A'$ that has bounded condition number; the bound is a hyperparameter. Large condition number implies closeness to singularity, hence
the bounded condition number guarantees numerical stability in the forward pass through the optimization (or equation) solver. Surprisingly, we find that the training performance with our defense in place surpasses the performance of the undefended model, even in the absence of attack, perhaps due to more stable gradients.
%While $A$ with a high condition number still has a valid solution in theory (but fails due to limited precision), an infeasible $A$ for the constraint $Ax \leq b$ implies no solution at all. Thus, we propose a method of achieving robustness against an infeasibility attack  by gracefully (without crashing) discarding such inputs.

\emph{Finally}, we show the efficacy of our attack and defense in (1) a synthetic data problem designed in~\citet{amos2017optnet} and (2) a variant of the Sudoku experiment used in~\citet{amos2017optnet} and (3) an autonomous driving scenario from~\citet{temp_Speedprofileplanning}, where failures can occur even without attacks and how augmenting with our defense prevent these failures. Lastly, we identify other sources of failure in these optimization layers by invoking edge cases in the solver (Appendix \ref{appendix:additional_attacks}). We list serious bugs in the solvers that we encountered.
%; the core ideas here in finding failure conditions will be applicable for attacking other layers of similar nature. 

%Overall, our main contributions are as follows:
%(1) we propose a novel vector of attack that an adversary can use to induce undefined decision output,
%(2) we present a simple yet guaranteed parameterized defense that prevents the failure condition and additionally improves training when the hyperparameter is tuned, and
%(3) we explore other critical bugs in the framework which can potentially be exploited by an adversary to manipulate the output of a network.

\section{Background, Notation, and Related Work}
\label{gen_inst}

\noindent
\textbf{Matrix Concepts and Notation.} 
% In this work we only consider real valued matrices. 
The identity matrix is denoted as $I$ and the matrix dimensions are given by subscripts, e.g., $I_{m \times m}$. The \emph{pseudoinverse}~\cite{laub2005matrix} of any matrix $A$ is denoted by $A^{+}$; if $A$ is invertible then $A^{+} = A^{-1}$.% Pseudoinverse $A^{+}$ generalizes the concept of matrix inverse $A^{-1}$ to any (especially non-square) matrix and $A^{+}$ always exists; if $A \in \mathbb{R}^{m \times n}$ then $A^+ \in \mathbb{R}^{n \times m}$. 
The \emph{condition number}~\cite{belsley1980condition} of non-singular matrix $A$ is defined as $\kappa(A) = \norm{A^+}\norm{A}$ for any matrix norm. We use $\kappa_2(A)$ when the norm used is 2-operator norm and $\kappa_F(A)$ when the norm used is the Frobenius norm. The (thin) SVD of a matrix $A$ is given by $A = U \Sigma V^T$ where $U, V$ have orthogonal columns ($U^T U = I = V^T V$) and $\Sigma$ is a diagonal matrix with non-negative entries. If $A$ is of dimension $m \times n$, then $U, \Sigma, V^T$ are of dimension $m \times r$, $r \times r$, $r \times n$ respectively. The diagonal entries of $\Sigma$ denoted as $\sigma_i = \Sigma_{i,i}$ are the singular values of the matrix $A$; singular values are always non-negative. The condition number directly depends on the largest and smallest singular value as follows: $\kappa_2(A) = \sigma_{\max}/\sigma_{\min}\;$. Also, $\norm{A}_2 = \sigma_{\max}$. $\tr(A)$ denotes the trace of a matrix. 

%\subsection{Background}
\noindent
\textbf{Embedding Optimization in Neural Networks.} Embedding a solver (for optimization or a set of equations) is essentially a composition of a standard neural network $f_w$ and the solver $s$, where $w$ represents weights. The function $f_w$ takes in input $u$
 and produces parameters $\theta$ for the problem that the solver $s$ solves. The solver layer takes $\theta$ as input and produces a solution $s(\theta)$. The composition $s \circ f_w$ can be jointly trained by differentiating through the solver $s$ (see Fig.~\ref{fig:general_architecture}). 
 The main enabler of this technique is efficient differentiation of the solver function $s$. Prior work has shown how to differentiate through solver $s$ where $s$ is a convex optimization problem~\cite{amos2017optnet}, linear equation solver~\cite{etmann2020iunets}, clustering algorithm~\cite{NEURIPS2019_8bd39eae}, and game solver~\cite{li2020end}. Such joint networks have been shown to provide better solution over separate learning and solving~\cite{perrault2020end}.  %Typically, these joint networks directly output the planning decision of the overall AI system.
 
 Many applications of optimization layers focus on training the network end-to-end with the final output representing some decision of the overall AI system, typically called \emph{decision focused learning}~\cite{donti2017task,wilder2019melding,wang2022decision}. Though this has performed well in certain settings, such networks have not been investigated in terms of robustness. %Our work studies and reproduces the conditions under which such optimizations fail and provides sound workarounds without hampering training accuracy. 

%\subsection{Related Work}
\noindent
\textbf{Adversarial Machine Learning.} There is a huge body of work on adversarial learning and robustness that studies vulnerabilities of machine learning algorithms, summarized in many surveys and papers~\cite{43405,42503,akhtar2018threat,biggio2018wild,he2018decision,papernot2018sok,li2019generative,anil2019sorting,tramer2020adaptive}. Our work is  different from prior work as our attack targets the \emph{stability} of the optimization solver that is embedded as a layer in the neural network and our defense stops the attack by preventing singularity. To the best of our knowledge, our work is the first work to explore this aspect.

%Further, we adapt existing datasets to present a novel yet natural setting in which such constraints are different at each instance and have to be inferred from the data, thus allowing the use of such optimization layers in more general settings.

\noindent
\textbf{Robustness against Numerical Stability in Optimization.}
Repairing is an approach proposed in recent work~\cite{barratt2021automatic} to compute the closest solvable optimization when the input generic convex optimization is infeasible. While possessing the same goal as our defense, this repairing approach is computationally prohibitive for use in neural networks as the repairing requires solving tens to hundreds of convex optimization problems just to repair a single problem instance. Optimization layers are considered slow even with just one optimization in the forward pass~\cite{amos2017optnet,NEURIPS2019_9ce3c52f,NEURIPS2020_6d0c9328}, hence multiple optimizations to repair the core optimization in every forward pass is not practical for neural networks. Our defense is computationally cheap due to the targeted adjustment of specific parameters of the optimization.

Pre-conditioning~\cite{wathen2015preconditioning} is a standard approach in optimization 
%for managing high condition number of matrix parameters. 
%There are many variants of pre-conditioning and some of these variants are computationally very expensive or apply in narrow setting with square matrices only; 
%Pre-conditioning 
that helps the solver deal with ill-conditioned matrices better than without pre-conditioning. However, \emph{even with preconditioning, solvers cannot handle specially crafted singular input matrices}. Our defense does not allow any input that the solver cannot handle.

\section{Methodology} \label{sec:methodology}

\noindent
\textbf{Threat Model:} We are given a trained neural network which is a composition of two functions $f_w$ and $s$, where $w$ represents neural network weights and $w$ is known to the adversary (i.e., the adversary has whitebox access to the model). The function $f_w$ takes in input $u$
and produces $\theta = f_w(u)$. $\theta$ defines some of the parameters to our solver (Fig. \ref{fig:general_architecture}). In this paper, we analyze a specific component of $\theta$ which corresponds to the intermediate matrix $A$ (in $Ax =b$). For example, if $\theta$ only consists of $A$, it can be formed by reshaping $\theta$, where the $i,j$ entry of $A$ is $\theta_{i,j}$. The solver layer takes $A$ as input and produces a solution $s(A)$. The attacker's goal is to craft any input $u^*$ such that $s(f_w(u^*))$ fails to evaluate successfully due to issues in evaluating $s$ stemming from numerical instability, effectively causing a denial of service. Note that the \emph{existence} of any such input $u^*$ is problematic and we allow latitude to the attacker to produce \emph{any} such input as long as syntactical properties are maintained, e.g., bounding image pixel values in 0 to 1. In this setting, an attacker can also craft an attack input that is close to some original input if needed (Fig. \ref{fig:attack_images}), e.g., when they need to foil a human in the loop defense. Even in this worst case scenario of allowing the attacker to provide any input, our proposed defense prevents NaNs in all cases.  %We aim for $u^*$ to be close to some given $u$ in order for such input to be realistic.

We emphasize the distinction between the goal of our attack inputs and that of adversarial examples. In traditional adversarial examples, small perturbations to the input image is sought in order to show the surprising effect that two images that appear the same to the human eye are assigned different class labels, but these misclassified labels can still be consumed by downstream systems. In contrast, in our work, the \emph{surprise} is the \emph{existence} of inputs that cause a complete failure in the outcome of the system, which to our knowledge have not been previously studied. Here, we show the existence of specially crafted inputs, which may be semantically close to a valid input, that evaluate to outputs that cause a complete denial of service, i.e., NaNs are produced, 
%and downstream systems cannot meaningfully interpret the output at all, 
leading to undefined behavior in the system. A naive remediation of a default safe action for NaN outputs can fail in complex domains (e.g.,
autonomous driving) which have context-dependent safe actions
(e.g., the safest action on a highway with a speed-limit
road sign depends on various conditions such as speed of the
car in front, need to change lane, etc.). It is thus impossible
to provide a rule-based safe default action since there can
be infinitely many contexts.
%which may prove dangerous in safety critical domains.

\subsection{Numerical Instability Attack}

%The range of the function $f_w$ determines whether the output $A$ can be ill-conditioned. As such the range is almost impossible to determine for a fixed $w$; in order to analyze an average case we state a prior result~\citep{chen2005condition} about the probability of a matrix $A \in \mathbb{R}^{m \times n}$ being ill-conditioned when sampled uniformly on the sphere of normalized matrices given by $\norm{A} = 1$ (or equivalently each element is sampled from standard normal, as shown in ~\cite{}):
%$$
%\left(\frac{c}{x}\right)^{|n-m| + 1} \leq P\left(\kappa(A) \geq \frac{n x}{|n-m| + 1} \right) \leq \left(\frac{C}{x}\right)^{|n-m| + 1}
%$$
%The above results shows that high condition number occupies a smaller space when $|n-m|$ is higher, thus, we hope to see numerical instability attacks work better on square matrices. We validate this claim in our experiments.

\begin{figure}[t]
\centering
\subfigure[Original Images]{
    \includegraphics[width=0.12\columnwidth]{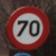} \includegraphics[width=0.26\columnwidth]{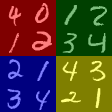}
}
\subfigure[Attack Images]{
    \includegraphics[width=0.26\columnwidth]{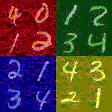}
    \includegraphics[width=0.12\columnwidth]{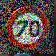}
}
\caption{Left shows original image $u$, right shows $u'=u+\delta$ which is semantically close. All attacks were found using \AllZeroRowCol{} with a upper bound on the perturbations.}
\label{fig:attack_images}
\end{figure}

In our attack, we seek to find an input $u^*$ that evaluates to a rank deficient intermediate matrix $A$ (Fig. \ref{fig:general_architecture}). For any $m \times n$ matrix $A$, $A$ is rank-deficient if its rank is strictly less than $\min(m,n)$. A rank deficient matrix is also singular, hence the system of equations $A x = b$ ($b \neq 0$) produces undefined values (NaN) when solved directly or as constraints in an optimization. Even matrices close enough to singularity can still produce errors due to the limited precision of computers. Depending on the neural network $f_w$ (Fig. \ref{fig:general_architecture}), finding $u$ that produces an arbitrary singular $A$ (e.g., $0_{m \times n}$) is not always possible (see Appendix~\ref{appendix:targetzero}). Our approach is guided by the following known result

\begin{proposition}[\citet{demko1986condition}] \label{prop:demko}
For any matrix $A$, the distance to closest singular matrix is $\min_B\{\norm{A-B}_2: B \mbox{ is singular}\} = \norm{A}_2 / \kappa_2(A) = \sigma_{\min}$
\end{proposition}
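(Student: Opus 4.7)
The plan is to prove the chain of equalities by first noting that the right-hand equality $\|A\|_2/\kappa_2(A) = \sigma_{\min}$ follows immediately from the paper's stated identities $\kappa_2(A) = \sigma_{\max}/\sigma_{\min}$ and $\|A\|_2 = \sigma_{\max}$. All the content therefore lives in the first equality, which I will establish by a matching upper and lower bound on $\min_B \{\|A-B\|_2 : B \text{ singular}\}$.

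For the upper bound, I will exhibit an explicit singular matrix $B$ at distance exactly $\sigma_{\min}$. Writing the thin SVD $A = U\Sigma V^T$ with columns $u_i$ of $U$ and $v_i$ of $V$, let $i^*$ index the smallest singular value. I set
\[
B \;=\; A - \sigma_{\min}\, u_{i^*} v_{i^*}^T \;=\; U(\Sigma - \sigma_{\min} e_{i^*}e_{i^*}^T)V^T.
\]
Then $B$ has one fewer nonzero singular value than $A$, so $B$ is singular (rank-deficient). Moreover $\|A-B\|_2 = \|\sigma_{\min} u_{i^*} v_{i^*}^T\|_2 = \sigma_{\min}$ since $u_{i^*}$ and $v_{i^*}$ are unit vectors, giving the desired upper bound.

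For the lower bound, I will use the variational characterization of the smallest singular value, $\sigma_{\min}(A) = \min_{\|x\|_2 = 1}\|Ax\|_2$. Take any singular $B$. Because $B$ is rank-deficient, it has a nontrivial right null vector $x$ with $\|x\|_2 = 1$ and $Bx = 0$. Then
\[
\|A - B\|_2 \;\geq\; \|(A-B)x\|_2 \;=\; \|Ax\|_2 \;\geq\; \sigma_{\min}(A).
\]
Taking the infimum over $B$ yields $\min_B\|A-B\|_2 \geq \sigma_{\min}$, matching the upper bound.

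The main subtlety I would have to address is the definition of ``singular'' for non-square matrices, since the proposition is stated for any $A$. I would adopt the standard convention that a matrix is singular iff it is not of full (column/row) rank; with this, the lower-bound argument still works by choosing a unit vector in the right null space (or by using $B^T$ on the taller side). The degenerate case where $A$ itself is already rank-deficient is not really an obstacle: then $\sigma_{\min}$ from the thin SVD is still positive, but the infimum on the left is $0$, so one should interpret $\sigma_{\min}$ in the statement as the smallest singular value when $A$ is padded to $\min(m,n)$ singular values (including zeros). In other words, the intended regime is $A$ full-rank, and I would state this as a remark rather than dwell on it, since the application in the paper (defending against rank-deficiency attacks) only needs the full-rank case.
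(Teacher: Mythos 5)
Your proof is correct, but note that there is no paper proof to compare it against: the paper states Proposition~\ref{prop:demko} as a known result imported from \citet{demko1986condition} (it is the classical Eckart--Young/Gastinel--Kahan distance-to-singularity fact) and never proves it, neither in the main text nor in Appendix~\ref{appendix:proofs}. Your two-sided argument is the standard one and is sound: the upper bound by zeroing the smallest singular value, $B = U(\Sigma - \sigma_{\min} e_{i^*}e_{i^*}^T)V^T$, and the lower bound via $\norm{A-B}_2 \geq \norm{(A-B)x}_2 = \norm{Ax}_2 \geq \sigma_{\min}$ for a unit null vector $x$ of $B$ (transposing when $m<n$, as you note). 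One small remark on your closing caveat: under the paper's own convention the thin SVD has $r=\min(m,n)$ singular values, so when $A$ is itself rank-deficient $\sigma_{\min}$ is simply $0$, both sides of the claimed equality vanish, and the proposition holds without restricting to the full-rank regime --- so the degenerate case needs no special treatment at all, only the convention that ``singular'' means rank strictly less than $\min(m,n)$, which matches the paper's usage.
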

Thus, increasing the condition number of $A$ moves $A$ closer to singularity; at singularity $\kappa_2(A)$ is $\infty$.
 Following Alg. \ref{alg:algorithm_attack}, we start with a given $u$ producing a well-conditioned matrix $A$ and aim to obtain $u^*$ producing singular $A'$ in the \emph{vicinity} of $A$ using three approaches: \AllZeroRowCol{}, \ZeroSingularValue{}, and \ConditionGrad{}. 
 %All three approaches should work in theory, but we find that \AllZeroRowCol{} works the most consistently in practice. In the following, we present these three approaches in detail.

\begin{algorithm}[tb]
\small
\caption{Numerical instability attack}
\label{alg:algorithm_attack}
\textbf{Input}: input features $u$, loss function $\ell$, victim model $f_w$ \\
\textbf{Parameters}: learning rate $\alpha$ \\
\textbf{Output}: attack input $u^*$
\begin{algorithmic}
\STATE Let $u^*= u$.
\WHILE{$\kappa_2(f_w(u^*)) \neq \infty $}
\STATE $l = \ell(f_w(u^*))$ \COMMENT{$\ell$ is a technique dependent loss function}
\STATE Update $u^*$ based on $\alpha{},\frac{\delta{}l}{\delta{}u^*}, \ell$
\ENDWHILE
\STATE \textbf{return} $u^*$
\end{algorithmic}
\end{algorithm}

\noindent
{\AllZeroRowCol{}:} An approach to obtain a rank-deficient matrix $A'$ from $A$ is to zero out a row (resp. column) in case $m < n$ (resp. $m > n$) in $A$. Then, we use $A'$ as a target matrix for which a gradient descent-based search is performed to find an input $u^*$, that yields $A' = f_w(u^*)$. In our experiments, we choose the first row/column to zero out, though choosing other rows/columns is equally effective. 

\noindent{\ZeroSingularValue{}:} From Prop.~\ref{prop:demko}, $A'$ is a \emph{closest singular matrix} if $\norm{A - A'}_2 = \sigma_{\min}$. An approach to obtain this rank-deficient matrix $A'$ from $A$ is to perform the SVD $A = U \Sigma V^T$, then zero out the smallest singular value in $\Sigma$ to get $\Sigma'$, and then construct $A' = U \Sigma' V^T$. It follows from the construction that $\norm{A - A'}_2 = \sigma_{\min}$. Then, using $A'$ as a target matrix a gradient descent-based search is performed to find $u^*$ that yields $A' = f_w(u^*)$. In theory, since $A'$ is a \emph{closest singular matrix} it should be easier to find by gradient descent compared to \AllZeroRowCol{}. However, this approach fails in practice because precision errors make $A'$ non-singular even though $\Sigma'$ has a zero singular value. 
%\thanh{Is there any advantage of ZeroSingularValue compared to AllZeroRowCol in theory?}

\noindent{\ConditionGrad{}:}  From Prop.~\ref{prop:demko}, we can also use gradient descent to find $u^*$ such that the matrix $A$ has a very high condition number. The overall gradient we seek is $\frac{\partial \log \kappa_2(A)}{\partial u}$, where we use $\log$ as condition numbers can be large. Following chain rule, we get $\frac{\partial \log \kappa_2(A)}{\partial u} = \frac{1}{\kappa_2(A)} \frac{\partial \kappa_2(A)}{\partial \theta} \frac{\partial \theta}{\partial u}$. Since $\theta = f_w(u)$, the third term is simply the gradient through the neural network. The second term can be obtained component wise in $\theta$ as $\frac{\partial \kappa_2(A)}{\partial \theta_{i,j}}$ for all $i,j$. The following result provides a closed form formula for the same (proof in Appendix \ref{appendix:proofs}).
\begin{lemma} \label{lemma:gradcond}
Let $A \in \mathbb{R}^{m \times n}$ with thin SVD $A = U \Sigma V^T$ and $\sigma_{\max} = \sigma_1 \geq \ldots \geq \sigma_r = \sigma_{\min}$ for $r = \min(m,n)$. Then, $\frac{\partial \kappa_2(A)}{\partial \theta_{i,j}}$ is given by $\tr\Big( \frac{\partial (\lvert\lvert A^{+}\rvert\rvert_2 * \lvert\lvert A\rvert\rvert_2)}{\partial A} \cdot\frac{\partial A}{\partial \theta_{i,j}}\Big)$ where
{\small
\begin{align*}
     &\frac{\partial (\lvert\lvert A^{+}\rvert\rvert_2 * \lvert\lvert A\rvert\rvert_2)}{\partial A}  = B^T   -  (A^{+} C A^{+})^T  + \\
    & \quad (A^+)^T  A^+ C (I - A^+A)  +  (I - AA^+) C A^+ (A^+)^T 
\end{align*} }
with $B \!=\! \lvert\lvert A^+ \rvert\rvert_2 V e_1 e^T_1 U^T$\!, $C \!=\! \lvert\lvert A\rvert\rvert_2 U e_r e^T_r V^T $ and $e_i$ is the unit vector with one in the $i^{th}$ position.
\end{lemma}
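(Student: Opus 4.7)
The plan is to derive $\nabla_A \kappa_2$ explicitly and observe that the outer trace in the statement is just the scalar chain rule $\frac{\partial \kappa_2}{\partial \theta_{i,j}} = \langle \nabla_A \kappa_2,\, \partial A/\partial \theta_{i,j}\rangle_F$. So the real task is to compute $\nabla_A(\|A^+\|_2 \cdot \|A\|_2)$. A single application of the product rule splits this into two pieces,
\begin{equation*}
\nabla_A(\|A^+\|_2 \cdot \|A\|_2) = \|A^+\|_2 \cdot \nabla_A \|A\|_2 + \|A\|_2 \cdot \nabla_A \|A^+\|_2,
\end{equation*}
and I would compute each separately.

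For the first piece, I would invoke the standard first-order perturbation identity for a simple top singular value, $d\sigma_{\max}(A) = u_1^T (dA) v_1$, which yields $\nabla_A \|A\|_2 = u_1 v_1^T = U e_1 e_1^T V^T$. Multiplying by $\|A^+\|_2$ gives exactly $B^T$ in the statement.

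The more involved piece is $\|A\|_2 \cdot \nabla_A \|A^+\|_2$, which I would obtain by chain-ruling through $A^+$. Since $A^+ = V\Sigma^{-1}U^T$ has $1/\sigma_r$ as its largest singular value, with corresponding left and right singular vectors $v_r$ and $u_r$, the same perturbation identity gives $\nabla_{A^+}\|A^+\|_2 = v_r u_r^T$. I would then substitute the standard Moore--Penrose differential
\begin{equation*}
dA^+ = -A^+(dA)A^+ + A^+(A^+)^T (dA)^T (I - AA^+) + (I - A^+A)(dA)^T (A^+)^T A^+,
\end{equation*}
evaluate $\langle v_r u_r^T,\, dA^+\rangle_F$ term by term, and use cyclic-trace and transpose identities to read off the gradient with respect to $A$. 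The leading $\|A\|_2$ absorbs into $u_r v_r^T$ to form $C = \|A\|_2 u_r v_r^T = \|A\|_2 U e_r e_r^T V^T$. Matching the three differential summands, in the order they appear above, produces $-(A^+ C A^+)^T$, $(I - AA^+) C A^+ (A^+)^T$, and $(A^+)^T A^+ C (I - A^+ A)$, which together with $B^T$ are exactly the four terms in the stated gradient.

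The main obstacle will be the bookkeeping in the third step: carefully tracking transposes and cyclic reorderings under the trace so that each differential contribution lands in the precise matrix form demanded. I would also flag two standing regularity assumptions implicit in the argument: $\sigma_{\max}$ and $\sigma_{\min}$ are both simple (so the singular-value perturbation formula applies) and $A$ currently has full rank $r = \min(m,n)$ (so $A^+$ is differentiable and the thin-SVD form is valid). Both are generic and hold in the attack regime, where we approach singularity from a well-conditioned starting matrix; outside this regime $\kappa_2$ is nonsmooth and the gradient is defined only in a subdifferential sense.
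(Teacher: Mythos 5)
Your proposal is correct and follows essentially the same route as the paper's proof: product rule on $\|A\|_2\|A^+\|_2$, first-order perturbation of the extreme singular values (the paper derives $d\|A\|_2=\tr(Ve_1e_1^TU^T\,dA)$ from the SVD differential and skew-symmetry of $U^TdU$, which is exactly the identity $d\sigma_{\max}=u_1^T(dA)v_1$ you quote), substitution of the Golub--Pereyra differential for $d(A^+)$, and cyclic/transpose trace bookkeeping to read off the four terms, followed by the chain rule in $\theta_{i,j}$. Your term-by-term matching and the regularity caveats (simple extreme singular values, full rank) are consistent with, and slightly more explicit than, the paper's argument.
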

\ConditionGrad{} still works less consistently than \AllZeroRowCol. This is mainly because the gradient descent often saturates at a condition number that is high but not large enough for instability. 

A low-dimension illustration of the approaches is in Fig.~\ref{fig:explaination}, which shows the 2D space of the two singular values $\sigma_1, \sigma_2$ of all $2 \times 2$ matrices. The condition number ($\sigma_{\max} / \sigma_{\min}$) approaches infinity near the axes and the $\infty$ condition number on the axes is difficult to reach in \ConditionGrad{}. The illustration also shows why \AllZeroRowCol{} works more consistently than \ZeroSingularValue{} as recovering a matrix from $\Sigma'$ involves multiplication which leads to loss of singularity (more so in high dimension) whereas \AllZeroRowCol{} directly obtains a singular matrix. This is reflected in our experiments later.

We note that simple approaches such as attempting to use gradient descent or other existing approaches to directly maximize model output to very high values fails due to saturation (see results in Appendix~\ref{appendix:maxpgd}). Further, the optimization output and ill-conditioning of $A$ can have no relation at all: 

\begin{lemma}\label{lemma.1}
For an optimization $\min_{\{x|Ax = b\}} f(x)$ with $f$ convex, the solution value (if it exists) can be made arbitrarily large by changing $\theta = \{A,b\}$ while keeping $A$ well-conditioned.
\end{lemma}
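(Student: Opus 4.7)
The plan is to prove the lemma by explicit construction: I will exhibit a one-parameter family of $(A,b)$ for which $A$ has condition number $1$ (perfectly well-conditioned) yet the optimal value of $\min_{Ax=b} f(x)$ tends to infinity as the parameter grows. The key observation is that the condition number of $A$ controls only the geometry of the linear map, not the \emph{location} of the affine feasible set $\{x : Ax = b\}$, and by sliding this feasible set far from the minimizer(s) of $f$ we can drive $f$ to arbitrarily large values on it.

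Concretely, I would first take $A = I_{n \times n}$ (so that $\kappa_2(A) = 1$, independent of anything else we choose). With this choice the constraint $Ax = b$ pins down the unique feasible point $x = b$, so the optimization problem has the trivial optimal value $f(b)$. Then, since $f$ is convex, either $f$ is bounded (a trivial case in which the lemma is vacuous unless we assume $f$ is nonconstant in which case we can still push the value near the supremum or, if $f$ is unbounded above, along a recession direction) or $f$ is unbounded above; in the latter case there exists a sequence $b_k$ with $\|b_k\| \to \infty$ and $f(b_k) \to \infty$. Selecting any such sequence yields $\theta_k = \{I, b_k\}$ with $\kappa_2(A) = 1$ and optimal value $f(b_k) \to \infty$, establishing the claim.

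For generality, if one prefers a non-square or underdetermined $A$, the same idea works: fix any matrix $A_0$ with orthonormal rows (so $\kappa_2(A_0) = 1$), and choose $b = A_0 v$ for a vector $v$ along a direction where $f$ grows unboundedly. Then $v$ is feasible and $\min_{A_0 x = b} f(x) \leq f(v)$; a slight refinement using a direction along which every feasible preimage has large $f$-value (e.g., picking $v$ in the row space of $A_0$, so it is the minimum-norm preimage) shows the optimal value itself grows without bound.

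The statement has essentially no obstacle — the whole point is that well-conditioning of $A$ is a property of $A$ alone and decouples from $b$, which controls the location of the feasible set. The only subtlety worth flagging is the hypothesis ``solution value if it exists,'' which I interpret as restricting attention to $f$ for which the minimum is attained on the feasible set (e.g., $f$ coercive, or the feasible set a single point as in the $A = I$ construction); the example above satisfies this by construction.
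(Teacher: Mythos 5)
Your proof is correct as far as the lemma's literal statement goes, but it takes a genuinely different route from the paper. You specialize $A$ to the identity (or a matrix with orthonormal rows), so that $\kappa_2(A)=1$ and the feasible set collapses to a single point $b$, and then you invoke the fact that a nonconstant convex function on $\RR^n$ is unbounded above to pick $b_k$ with $f(b_k)\to\infty$. The paper instead keeps an \emph{arbitrary} fixed well-conditioned $A$ and shifts only $b$: writing $x_0$ for the constrained minimizer, it replaces $b$ by $b' = b - kA\nabla f(x_0)$ and combines the first-order convexity inequality with the optimality condition $\nabla f(x_0)(y'-x_0)\ge 0$ to get $f(y_0)\ge f(x_0)+k\norm{\nabla f(x_0)}_2^2$, i.e., a quantitative, linear-in-$k$ growth of the optimal value for the \emph{same} matrix $A$. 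Your argument is shorter and more elementary; the paper's buys two things: it covers the underdetermined case $m<n$ that actually arises with optimization layers (where the feasible set is a nontrivial affine subspace rather than a point), and it shows the phenomenon for any well-conditioned $A$ you start from, not just a hand-picked one. Your ``for generality'' aside does not fully close that gap: knowing the minimum-norm preimage $v$ has large $f(v)$ does not bound $\min_{A_0x=b} f(x)$ from below (e.g., $f(x)=x_1^2$ with $A_0=(0\;1)$ gives optimal value $0$ for every $b$), so for non-square $A$ you would need to argue the direction is chosen so that $f$ is large on the whole affine set, which is essentially what the paper's gradient-shift argument does. Finally, both proofs implicitly exclude constant $f$ (the paper via $\nabla f(x_0)\neq 0$, you via the bounded/unbounded dichotomy); note that a convex function bounded above on all of $\RR^n$ is necessarily constant, so your parenthetical about ``pushing the value near the supremum'' is not needed.
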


Lemma~\ref{lemma.1} implies that $A$ can remain well-conditioned even though output $\min_{\{x|Ax = b\}} f(x)$ is large. Thus, specifically targeting to directly obtain a singular matrix $A$ is important for a successful NaN attack (proof in Appendix~\ref{appendix:proofs}).
%\thanh{I think we need to elaborate the meaning/importance of this lemma more.}

%We can also use $\kappa_F(A)$ instead of $\kappa_2(A)$, the gradient for which is presented in the appendix. 
%Overall, the algorithm is summarized in Algorithm~\ref{alg:stable}.\thanh{We can remove this algorithm.}

\begin{figure}[t]
\centering \includegraphics[width=0.95\columnwidth]{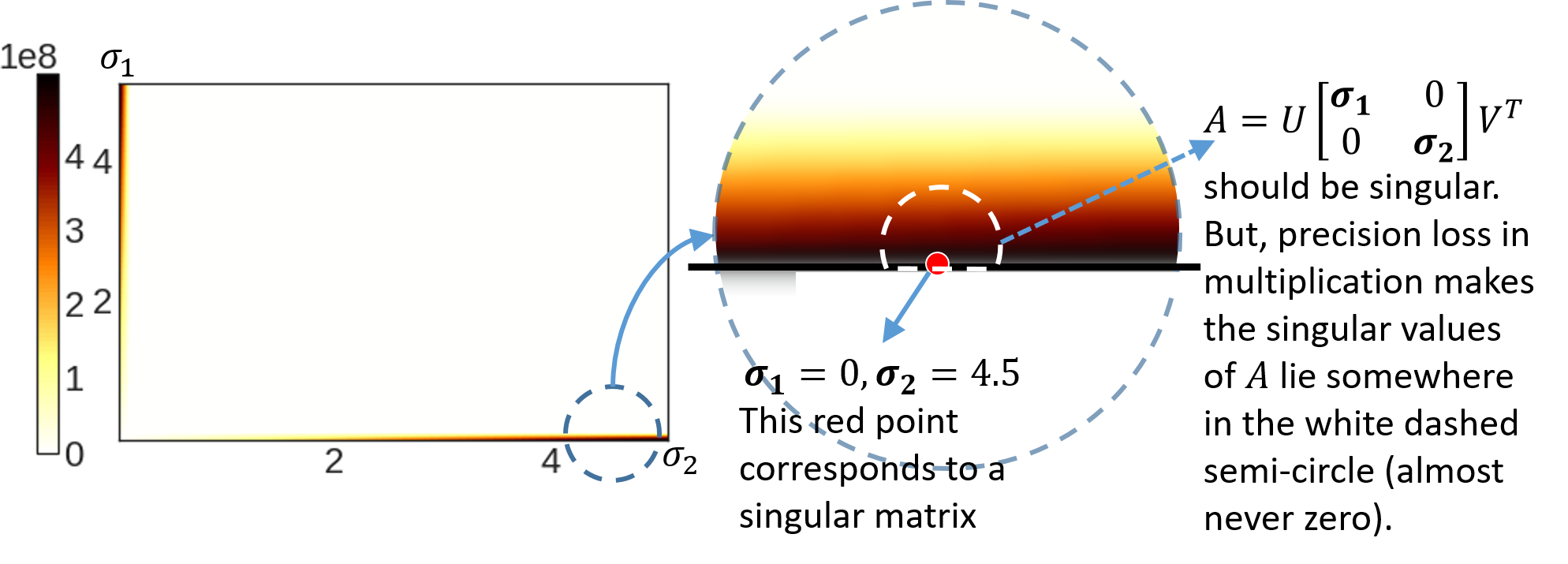}
\caption{Left is a heatmap of condition numbers ($\sigma_{\max} /\sigma_{\min}$) for 2D singular value space ($\sigma_1, \sigma_2$) of $2\times 2$ matrices (high condition number only near axes, as one of $\sigma_1$ or $\sigma_2$ approaches 0 since $\sigma_{\min} = \min \{\sigma_1,\sigma_2\}$). Right is an enlarged version of the smaller dashed circle.
}
\label{fig:explaination}
\end{figure}

\subsection{Defense Against Numerical Instability}
\label{sec:defense}

\begin{algorithm}[tb]
\small
\caption{Numerical instability defense}
\label{alg:algorithm_defense}
\textbf{Input}: model $f_w$, input features $u$ \\
\textbf{Parameter}: condition number bound $B$ \\
\textbf{Output}: well-conditioned $A'$ 
\begin{algorithmic}
\STATE Let $A' = A = f_w(u) = U\Sigma{}V^T$.
\IF {$\kappa_2(A) > B$}
\STATE For all $i$, let $\Sigma'_{i,i} = $ min($\sigma_i$, $\sigma_{max}/B$)
\STATE $A' = U\Sigma{}'V^T$
\ENDIF
\STATE return $A'$.
\end{algorithmic}
\end{algorithm}

First, we note that our goal is to fix the instability in the optimization used in the final layer, which is distinctly different from the general problem of instability of training neural networks~\cite{colbrook2022difficulty}. Next, we discuss defense for \emph{square matrices} $A$. For symmetric square matrices, the condition number can be stated in term of eigenvalues: $\kappa_2(A) = \frac{|\lambda|_{\max}}{|\lambda|_{\min}}$ where $|\lambda|_{\max}$ is the largest eigenvalue by magnitude. A typical heuristic to avoid numerical instability for square matrices is to add $\eta I$ for some small $\eta$~\cite{StableDNN}. 
%This is because the eigenvalues of $A + \eta I$ is $\lambda + \eta$ and hence would move the eigenvalues to improve the condition number. 
However, this approach \emph{only works for square} positive semi-definite (PSD) matrices. If some eigenvalue of $A$ happens to $-\eta$ then this heuristic actually makes the resultant matrix non-invertible (i.e., infinite condition number). Besides, clearly this heuristic \emph{does not apply for non-square matrices}. 

As a consequence, we propose a differentiable technique (Alg. \ref{alg:algorithm_defense}) that directly \emph{guarantees} the condition number of \emph{any} intermediate matrix to be a bounded by a hyperparameter $B$. In the forward pass, we perform a SVD of $A = U \Sigma V^T$; the computation steps in SVD are differentiable and the matrix $\Sigma$ gives the singular values $\sigma_i$'s. %\thanh{Does that mean $U$, $V$, and $\Sigma$ are trained from the same network?}. 
Recall that the condition number $\kappa_2=\sigma_{\max} / \sigma_{\min}$. The condition number can be controlled by clamping the $\sigma_i$'s to a minimum value $ \sigma_{\max}/B$ to obtain a modified $\Sigma'$. 
%where $B$ is a hyperparameter to be set to the desired maximum tolerance of the condition number. 
Then, we recover the approximate $A' = U \Sigma' V^T$. 
%Note that in the case that matrix $A$ already has its condition number lower than $B$ then there is no modification, i.e., $A' = A$, and if all singular values are zero then we add some small noise to all the singular values. Finally, $\sigma_{\max}$ can be obtained by ${\mathsf{torch.max}}$ in a \emph{differentiable} manner.
%by logsumexp: $(1/t)\log (\sum_i e^{t \sigma_i})$, where higher temperature $t$ makes $(1/t)\log (\sum_i e^{t \sigma_i})$ a better approximation of $\sigma_{\max}$. 
We present the following proposition (proof in Appendix \ref{appendix:proofs}).

\begin{figure*}[t]
\centering
\includegraphics[width=0.85\textwidth]{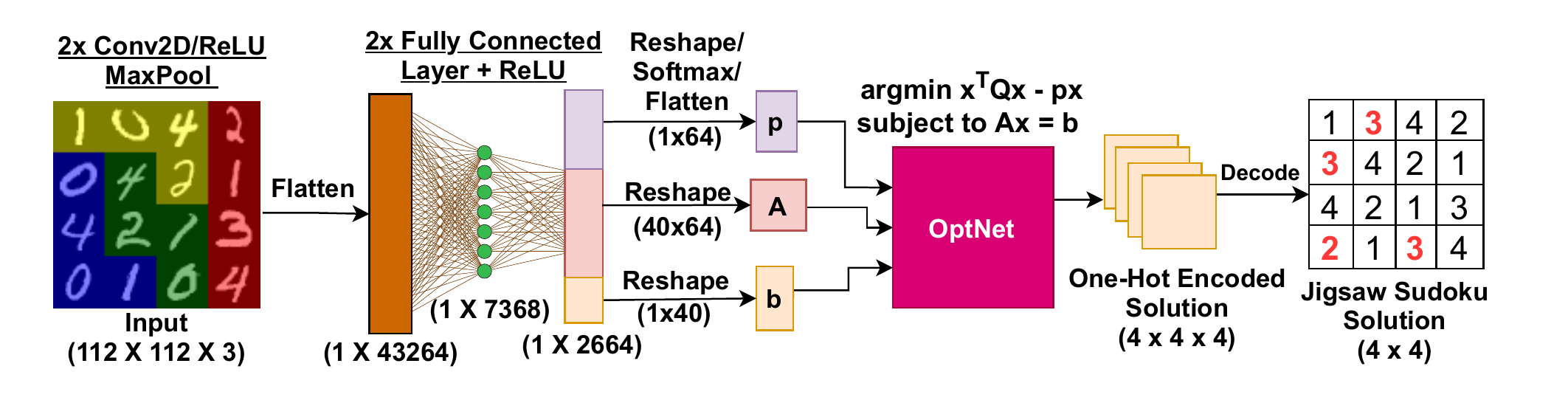}
\caption{The Jigsaw Sudoku network architecture. The input is an image of the Jigsaw Sudoku puzzle (0 indicates blank cell that needs to be filled with a value in $\{1, 2, 3, 4\}$) and 
%where it is passes through a 3x3 zero padding convolution layer with a ReLU activation, followed by a 2x2 max pooling layer. This is repeated before flattening and passing the result through to 2 fully-connected layers with ReLU activations. The resulting vector is then reshaped to $p$, representing the puzzle, and $A, b$, the equality constraints corresponding to the jigsaw Sudoku puzzle. These are then passed as parameters to the quadratic program $\argmin_{z}{z^TQz - pz}$ subject to the constraints $Az = b$. The 
the output is the solution to the puzzle given the constraint that no two numbers in the same colored region are the same. The solution to the blank cells given by the neural network is indicated in red.}
\label{fig:network_diagram_jigsaw_sudoku}
\end{figure*}

% The following result can be proved about the approximation introduced above.
\begin{proposition}
\label{prop:cond_def}
For the approximate $A'$ obtained from $A$ as described above and  $x'$ a solution for $A' x = b$, the following hold: (1)
$\norm{A' - A}_2 \leq  \sigma_{\max} /B $ 
and (2) $\frac{||x^* - x'||_2}{||x'||_2} \leq \kappa_2(A) / B$ for some solution $x^*$ of $A x = b$.
\end{proposition}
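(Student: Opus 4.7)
The plan is to split the proof into the two parts and handle them in sequence: part (1) is a direct SVD calculation, while part (2) is a textbook forward-error bound for linear systems that takes part (1) as input.

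For part (1), I would write $A - A' = U(\Sigma - \Sigma')V^T$ and exploit the orthonormality of the columns of $U$ and $V$ in the thin SVD. The identity $\norm{U M V^T}_2 = \norm{M}_2$ then reduces $\norm{A - A'}_2$ to $\norm{\Sigma - \Sigma'}_2$. Since $\Sigma - \Sigma'$ is diagonal, its spectral norm is just its largest absolute entry. By construction, the only singular values that change are those strictly below the threshold $\sigma_{\max}/B$, and each such entry is raised to exactly $\sigma_{\max}/B$; hence every nonzero entry of $\Sigma - \Sigma'$ lies in $(0,\sigma_{\max}/B]$, which gives the stated inequality.

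For part (2), I would use the classical perturbation argument comparing two linear systems that share a right-hand side but differ in their coefficient matrices. Subtracting $A x^* = b$ from $A' x' = b$ gives $A(x^* - x') = (A - A') x'$, and if $A$ is invertible so that $x^*$ exists, then $x^* - x' = A^{-1}(A - A') x'$. Taking 2-norms, applying submultiplicativity, and dividing by $\norm{x'}_2$ gives $\norm{x^* - x'}_2/\norm{x'}_2 \leq \norm{A^{-1}}_2 \norm{A - A'}_2$. Substituting $\norm{A^{-1}}_2 = 1/\sigma_{\min}$ along with the bound from part (1) and $\sigma_{\max} = \norm{A}_2$ collapses the right-hand side to exactly $\kappa_2(A)/B$.

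The main obstacle I anticipate is cleanly handling the degenerate cases rather than the calculations themselves. When $A$ is non-square or singular, $A^{-1}$ does not exist and $x^*$ must be interpreted via the pseudoinverse $A^+$; the argument then carries over on the range of $A$, and whenever $A$ is genuinely singular the bound is trivially satisfied because $\kappa_2(A) = \infty$. A secondary subtlety is confirming that $\norm{U M V^T}_2 = \norm{M}_2$ still holds for rectangular $U, V$ arising from a thin SVD, which I would justify from $U^T U = I = V^T V$ together with two applications of submultiplicativity bounding each side by the other.
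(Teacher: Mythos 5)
Your proof is correct, and part (1) matches the paper's argument exactly: both reduce $\norm{A'-A}_2$ to the largest change in a singular value via the shared $U,V$ factors, and bound that change by $\sigma_{\max}/B$. Part (2), however, takes a genuinely different route. The paper does not use the elementary residual manipulation at all; it cites results on the \emph{matrix lower bound} from Grcar (Theorem 5.3 and Lemma 2.2 of that reference), which give $\norm{x^*-x'}_2/\norm{x'}_2 \leq \norm{\Delta A}_2/\norm{A}_l$ and $1 \leq \norm{A}_l\,\norm{A^+}_2$, and then combines these with part (1) to get $\norm{\Delta A}_2\norm{A^+}_2 \leq \kappa_2(A)/B$; crucially, those cited results already cover rectangular and rank-deficient $A$, with the "solution" interpreted as the canonical $A^+b$, so the general case comes for free. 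Your derivation, by contrast, is self-contained and more transparent for invertible square $A$: from $Ax^*=A'x'=b$ you get $x^*-x'=A^{-1}(A'-A)x'$ and conclude by submultiplicativity. The one place where your write-up is thinner than the paper is exactly where the paper leans on the citation: for non-square or rank-deficient $A$ you only assert that the argument "carries over on the range of $A$." To make that precise within your own framework, note that the statement allows you to pick $x^*$: given $x'$ with $A'x'=b$ and assuming $Ax=b$ is solvable, set $x^* = x' + A^+(b-Ax')$; then $Ax^*=b$ and $x^*-x' = A^+(A'-A)x'$, so the same norm bound goes through with $\norm{A^+}_2$ in place of $\norm{A^{-1}}_2$. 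With that one line added, your elementary argument fully recovers the paper's conclusion without invoking the matrix-lower-bound machinery.
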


The second item (2) shows that approximation of the solution obtained from the solver depends on $\kappa_2(A)$, which can be large if $\kappa_2(A)$ is close to infinity. This error estimate can be provided to downstream systems which can be used in the decision on whether to use the solver's output.

\section{Experiments}
\label{experiments_section}

We showcase our attacks and defenses on three different domains: (i) synthetic data modelling an assignment problem, (ii) decision-focused solving of Jigsaw Sudoku puzzles, and (iii) real world speed profile planning for autonomous driving. We compare the success rate of the three attack methods, namely \AllZeroRowCol{}, \ZeroSingularValue{} and \ConditionGrad{}.
%For defense, we apply our defense and compare the results
We show that the defense is effective by comparing the condition numbers of the constraint matrix $A$ during test time. We also show that the attack fails with the defense for varying values of $B$: 2, 10, 100, and 200.
Further, 
%to prevent spurious NaNs during training, 
we augment our model with the defense during training time and show that it effectively prevents NaNs in training while not sacrificing performance compared to the \emph{original} model.
We discuss the results in detail at the end in Section~\ref{sec:results}.

For all experiments, we used the \texttt{qpth} batched QP solver as the optimization layer~\cite{amos2017optnet} and PyTorch 1.8.1 for SVD. 
%We note that SVD implementations differ in the final output matrix depending on how well-conditioned the input matrix is \cite{10.1137/050639193}, but such differences do not affect performance of our method (see Appendix). 
For the synthetic data, we ran the experiments on a cloud instance (16 vCPUs, 104 GB memory) on CPU. For other settings, we ran the experiments on a server (Intel(R) Xeon(R) Gold 5218R CPU, 2x Quadro RTX 6000, 128GB RAM) on the GPU. 

\subsection{Synthetic Data}

The setting used follows prior work ~\cite{amos2017optnet} to test varying constraint matrix sizes used in optimization.  We interpret this prior abstract problem as an assignment problem under constraints, where inputs are assigned to bins with constraints that are learnable.  The model learns parameters in the network to best match the bin assignment in the data. 
%We synthetically construct the data as follows: 
The input features of input $u$ are generated from the Gaussian distribution and assigned one out of $n$ bins uniformly. Bin assignment is a constrained maximization optimization, where only the constraint affects binning; thus, the objective is arbitrarily set to $ \norm{x}_2$ with the constraint $Ax = b$, where $A, b$ are learned and $x \in \mathbb{R}^n$ gives the bin assignment.  Here, $A$ has the size $m\times n$, where $m$ is the number of equalities and $n$ is the number of bins. A softmax layer at the end enforces an assignment constraint.

\noindent
\textbf{Experimental Setup:}
For the training of the network, in each of the randomly seeded training run, we draw 30 input feature vectors $u \in \RR^{500}$ from the Gaussian distribution and assign them uniformly to $n$ bins. We do the same for the test set comprising of 10 test samples. We ran the training over 1000 epochs using the Adam optimizer~\cite{DBLP:journals/corr/KingmaB14} with a fixed learning rate of $\expnumber{1}{-3}$. For the attack experiments, we ran each of the attacks for 5000 epochs on 30 input samples drawn from the Gaussian distribution on each of the 10 models that were trained. An attack is marked successful if any of the modified inputs produces a NaN. For the training of the defended models, we varied the hyperparameter $B$. The models are evaluated using \emph{cross-entropy loss} against the true bin in which the sample was assigned. The test loss is averaged over 10 randomly seeded runs. 

\noindent
\textbf{Results:} In this setting where an attacker can arbitrarily change the input vector at test time, we report the success rate of each of the attack methods in Table~\ref{tab:eq_attack_rate} and the loss results of models trained with the defense in Table~\ref{tab:eq_training_def} for the non-square matrix $A \in \RR^{40 \times 50}$ case and square matrix $A \in \RR^{50 \times 50}$ case. We see our methods are broadly applicable to all matrices as both the attack and the defense achieve their goals regardless of the shape of the matrix. Further, test performance in the \emph{baseline} $\eta I$ defense (with $\eta=10^{-8}$, applicable \emph{only} for square matrices) in the $A \in \RR^{50 \times 50}$ case is worse than \emph{both} the original and our proposed defense when $B=200$, with a higher loss at $4.86$ $\pm$ $1.74$ (Appendix~\ref{appendix:etai}).

\subsection{Jigsaw Sudoku}
\label{sec:jigsawsudoku}

%\begin{figure}[t]
%\centering
%\subfigure{
%\includegraphics[width=.225\textwidth]{jigsaw0.png}
%}
%\subfigure{
%\includegraphics[width=.225\textwidth]{jigsaw1.png}
%}
%\subfigure{
%\includegraphics[width=.225\textwidth]{jigsaw2.png}
%}
%\subfigure{
%\includegraphics[width=.225\textwidth]{jigsaw3.png}
%}
%\caption{\textbf{Jigsaw Sudoku Puzzles.} 0 indicates an empty spot that is to be filled with one of $1,2,3,4$. The colors indicate shape of constraint. Each shape should have only one of 1, 2, 3, or 4.}
%\label{fig:jigsaw_sudoku_examples}
%\end{figure}

%\thanh{I am still confused about the MNIST Jigsaw Sudoku. As I understand from Google search :D, each region must be fill with value from 1 to 9 but it seems we consider a smaller range. Also, what is the meaning of initial digit numbers in MNIST dataset? We need to explain what is the learning and what is the optimization here. What is the meaning of the input. etc}
 Sudoku is a constraint satisfaction problem, where the goal is to find numbers to put into cells on a board (typically $9\times9$) with the constraint that no two numbers in a row, column, or square are the same. In prior work~\cite{amos2017optnet}, optimization layers were used to learn constraints and obtain solutions satisfying those constraints on a simpler $4\times4$ board. We note that in the above setting, the constraints $(A,b)$ are fixed and do not vary with the input Sudoku instances and hence, our test time attack does not apply in this case. Instead, we consider a popular variant of the $4\times4$ Sudoku---Jigsaw Sudoku---where constraints are not just on the rows and columns, but also on other geometric shapes made from four contiguous cells. In this setting, the constraints now vary with input puzzle instances. We represent each Sudoku puzzle as an image (see Fig.~\ref{fig:network_diagram_jigsaw_sudoku}) and mark each constraint on contiguous shapes with a different color. %The 0 in the image denotes an empty slot that needs to be filled to solve the puzzle. %Such images serve as input to the neural network. 

The network (Fig.~\ref{fig:network_diagram_jigsaw_sudoku}) has to (i) \emph{infer} the one-hot encoded representation of the Sudoku problem $p$ (a $4\times 4\times4$ tensor
with a one-hot encoding for known entries and zeros for
unknown entries); (ii) \emph{infer} the constraints to apply ($A, b$ in $Ax = b$); and (iii) \emph{solve} the optimization task to output the solution that satisfies the constraints of the puzzle --- all these steps have to be derived just from the image of the Jigsaw Sudoku puzzle. 
%We generated the puzzles (as images) programmatically and replaced the digits on the Sudoku board with digits from the MNIST dataset, and marking the regions that the network is supposed to solve for to be 0.
A small $Q=0.1I$ ensures strict positive definiteness and convexity of the quadratic program. 
% Note that we differ from the setup in \cite{amos2017optnet} in that constraints vary for different puzzles and hence are inferred from each input puzzle image, rather than having one fixed set of learned constraints for all puzzles.

\begin{figure*}[t]
\centering
\includegraphics[width=0.85\textwidth]{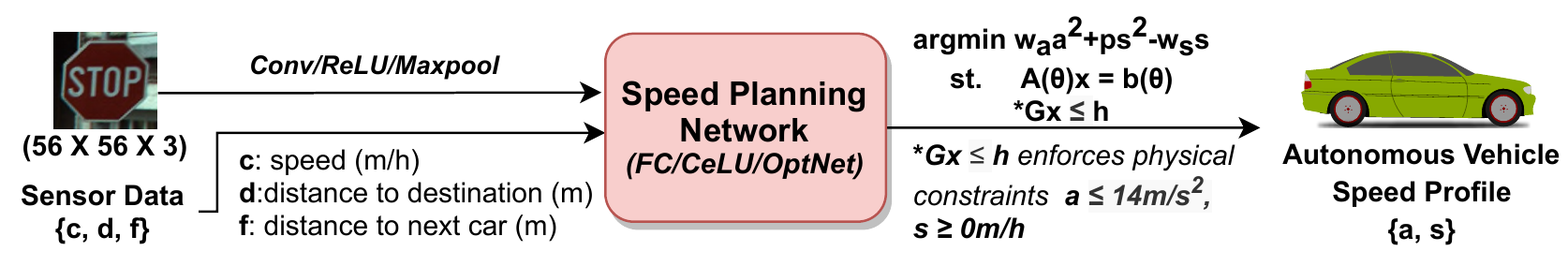}
\caption{Autonomous vehicle speed planning architecture with CeLU~\cite{barron2017continuously} activated layers.}
\label{fig:network_diagram_speedplanning}
\end{figure*}

\noindent
\textbf{Experimental Setup:} We generated 24000 puzzles of the form shown in Fig.~\ref{fig:network_diagram_jigsaw_sudoku}  by composing and modifying images in the MNIST dataset \cite{lecun-mnisthandwrittendigit-2010} using a modified generator from \cite{amos2017optnet} (details in Appendix \ref{appendix:jigsaw_sudoku}).
We trained the model architecture in Fig. \ref{fig:network_diagram_jigsaw_sudoku} on 20000 Jigsaw Sudoku images, utilizing Adadelta \cite{DBLP:journals/corr/abs-1212-5701} with a learning rate of 1, batch size of 500, training over 20 epochs, minimizing the MSE loss against the actual solution to the puzzle. We then test the model on 4000 different held-out puzzles. We repeat the experiments with 30 random seeds for each configuration. For the defense, we restrict the condition number by applying our defense in Section~\ref{sec:defense} over several values of the hyperparameter $B$. For all models, we measure MSE loss and \emph{accuracy} which is the percentage of cells with the correct label in the solution produced by the network. For all attack methods, we apply a model-tuned learning rate and optimize for the attack loss for a given image for 500 epochs until we generate an image that causes failure. We then repeat this for 30 test images.

\noindent
\textbf{Results:}  In this setting, the attack may only modify the input image, which is constrained as a tensor with pixel values in the range $[0,1]$. Even with these constraints,  \AllZeroRowCol\ consistently finds an input that results in NaNs in the output (Table~\ref{tab:eq_attack_rate}), showing the effectiveness of our attack. Looking at the difference in loss (Fig.~\ref{fig:lossattack}) and condition number of the matrix $A$ (Fig.~\ref{fig:conditionattack}) for the defended and original undefended network, we see the efficacy of our defense in controlling the condition number and preventing the NaN outputs during test time. Finally, plotting the change in training loss over the epochs for $B=100$ and the original model in Fig. \ref{fig:defensetraintest}, we see virtually no difference in epochs to convergence when the defense is applied in training time. We note the observations above apply for all experimental settings for all reported values of $B$, see Appendix~\ref{appendix:additional_experiment_results} for details.

\subsection{Autonomous Vehicle Speed Planning}

\begin{table*}[t]
\centering
%\vspace{0.1in}
\begin{tabular}{l c c c c c c}
\toprule
 &
  \AllZeroRowCol &
  \ZeroSingularValue &
  \ConditionGrad \\
  
  \midrule 
\multicolumn{1}{l}{\textit{\textbf{Synthetic (m=40, n=50)}}} &
  100.00 &
  0.67 &
  85.33 &
 \\ 
\multicolumn{1}{l}{\textit{\textbf{Synthetic (m=50, n=50)}}} &
  98.00 &
  0.00 &
  0.00 &
 \\ 
\multicolumn{1}{l}{\textit{\textbf{Jigsaw Sudoku}}} &
  100.00 &
  6.67 &
  53.33 &
 \\ 
\multicolumn{1}{l}{\textit{\textbf{Speed Planning}}} &
 100.00 &
 0.00  &
 0.00  &
\\
\multicolumn{1}{l}{\textit{\textbf{Defense (B=2,10,100,200)}}} &
 \textbf{0.00} &
 \textbf{0.00}  &
 \textbf{0.00}  &
\\
\bottomrule
\end{tabular}
%}
\caption{Comparison of attack success (\% of Successful NaNs) for all methods and datasets. Last row shows defense.}
\label{tab:eq_attack_rate}
\end{table*}

\begin{figure}[t]
\centering \includegraphics[width=0.85\columnwidth]{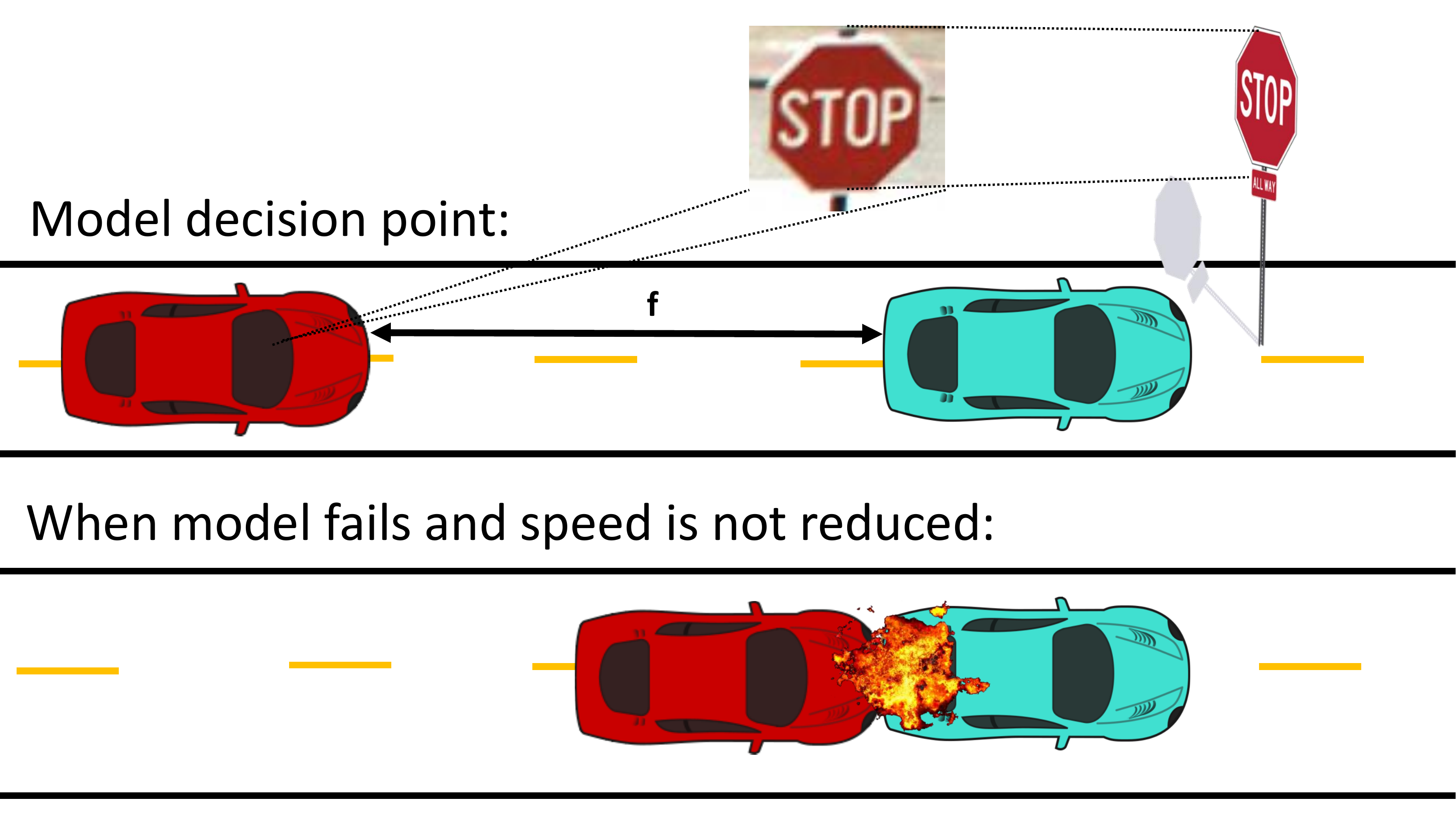}
\caption{The car determines the optimal speed profile based on the signage and distance $f$ from the car ahead. When the optimization fails, the car exhibits undefined behavior and may assume an unsafe profile, leading to a crash.}
\label{fig:speed_planning_illustrated}
\end{figure}

In autonomous driving, a layered framework with separate path planning and speed profile generation is often used due to advantages in computational complexity~\cite{7353382}. Here, we focus on speed profile generation, where constrained optimization is employed to maximize comfort of the passengers while ensuring their safety and adhering to physical limitations of the vehicle~\cite{6856581}. We consider the scenario where a traffic sign is observed and the autonomous vehicle has to make a decision on the acceleration and target speed of the vehicle as shown in Fig.~\ref{fig:speed_planning_illustrated}.

The autonomous vehicle seeks to make the optimal decision in speed planning taking into account the constraints presented. The learning problem involves identifying the traffic sign and inferring the rules to apply based on the current aggregate state of the autonomous vehicle collected from sensors. We provide as input $u$ an image of the traffic sign along with the state of the vehicle, defined as $V = \{c, d, f\}$, where $c, d, f \in \RR_{\geq 0}$, where, $c$ is the current speed of the vehicle (in meters per hour), $d$ is the distance to the destination (in meters), and $f$ is the distance to the vehicle ahead (in meters).  % The optimization problem seeks to minimize the discomfort of the passenger and the time needed to get to the destination. 
%Note that the speed profile is planned on a single direction in this scenario and can be generalized to take into account longitudinal speed, acceleration and jerk. 
Similar to \cite{temp_Speedprofileplanning}, we aim to minimize discomfort $a^2$ ,where acceleration $a \in \RR$, and maximize the target speed $s \in \RR_{\geq 0}$ using the quadratic program shown in Fig.~\ref{fig:network_diagram_speedplanning},
where $w_a, w_s \in \RR$ are tunable weights on the speed and acceleration, and $p \in \RR_{\geq 0}$ is a small penalty term to ensure the problem is a quadratic program. When input $u$ is fed into the network, $\theta$ is the output of the network right before the optimization layer, and $A(\theta)$ and $b(\theta)$ depend on $\theta$. These equalities encode rules that will apply based on the traffic sign observed, e.g. a \emph{Stop} sign would signal to the vehicle to set its target speed $s$ to 0. We encode physical constraints of the autonomous vehicle (e.g., maximum acceleration, positivity constraints on speed) in $G$ and $h$ which do not depend on $\theta$. 

\begin{table*}[t]
\centering
\begin{tabular}{l c c c c c}
\toprule
 &
  \multicolumn{2}{c}{\textbf{Synthetic Data (CE)}} &
  \multicolumn{2}{c}{\textbf{Jigsaw Sudoku}}
  &
  \multicolumn{1}{c}{\textbf{Speed Planning}}\\ \cmidrule{2-6} 
 &
  \textbf{m=40, n=50} &
  \textbf{m=50, n=50} &
  \textbf{Test Loss} &
  \textbf{Test Acc.} &
  \multicolumn{1}{c }{\textbf{Test Loss}} \\ 
  \midrule 
\multicolumn{1}{l}{\textit{\textbf{Original}}} &
  24.99 $\pm$ 2.03 &
  $4.43 \pm 0.93$ &
  $1.09 \pm 1.03$ &
  $0.91 \pm 0.20$ &
  7928.76 $\pm$ 38565  \\ 
\multicolumn{1}{l}{\textit{\textbf{B=2}}} &
  \textbf{9.14 $\pm$ 0.77} &
  $6.41 \pm 0.58$ &
  $0.93 \pm 0.73$ &
$0.94 \pm 0.15$ &
 \textbf{6.64 $\pm$ 3.52}  \\ 
\multicolumn{1}{l}{\textit{\textbf{B=10}}} &
 $11.67 \pm 1.26$ &
 $11.53 \pm 0.71$ &
 \textbf{0.82 $\pm$ 0.53} &
 \textbf{0.96  $\pm$ 0.09} &
 83.3 $\pm$ 404.8 \\
\multicolumn{1}{l}{\textit{\textbf{B=100}}} &
 $23.36 \pm 2.08$ &
 $6.36 \pm 1.50$ &
 $0.96 \pm 0.87$ &
 $0.93 \pm 0.16$ &
 117.8 $\pm$ 443 \\ 
\multicolumn{1}{l}{\textit{\textbf{B=200}}} &
 23.27 $\pm$ 1.75 &
 \textbf{3.93 $\pm$ 0.07} &
 $0.99 \pm 0.87$ &
 $0.93 \pm 0.16$ &
 1381.9 $\pm$ 6905.5 \\ 
\bottomrule
\end{tabular}
%}
\caption{Performance of different models trained with defense in place. The loss is on a held-out validation set.
For Synthetic Data, loss is cross-entropy (CE). For Jigsaw Sudoku, loss is MSE in order of $\times10^{-4}$. For Speed Planning, loss is $\mathcal{L}_{\textit{total}}$.
%\vspace{0.1in}
%\textit{*Statistics only reported for successful runs; only 12.8\% of runs with undefended model for Speed Planning were successful.}
} 
\label{tab:eq_training_def}
\end{table*}

\noindent
\textbf{Experimental Setup:} To generate the input, we utilize 5 traffic sign classes of the BelgiumTS dataset \cite{Timofte-MVA-2011} for the images. These traffic signs require an immediate change in speed/acceleration (e.g. Stop, Yield). We combine the image with the current state of the vehicle $V = \{c, d, f\}$ at the decision point and generate 10000 training samples and 1000 test samples for use in our holdout set. We enforce the following constraints through the matrix $G, h$:  all acceleration is at most $14m/s^2$, and speed must be positive. We setup the network as in Fig.~\ref{fig:network_diagram_speedplanning} and employ the Adam optimizer \cite{DBLP:journals/corr/KingmaB14} with a learning rate of $\expnumber{1}{-4}$, run the experiments for 30 epochs, and average the result over 30 random seeds. The models were evaluated using loss functions which penalizes different aspects of the decision:  $\mathcal{L}_{\textit{safety}}$ loss due to impact of collision with vehicle ahead, $\mathcal{L}_{\textit{comfort}}$ loss due to discomfort from acceleration,
$\mathcal{L}_{\textit{distance}}$ loss due to slowing down and
$\mathcal{L}_{\textit{violation}}$ loss from violating the traffic sign. The $\mathcal{L}_{\textit{total}}$ loss which is a weighted sum of the above losses is reported (details in Appendix~\ref{appendix:additional_experiment_results}). For the attack, we apply a model-tuned learning rate and optimize for various attack losses for 500 epochs over 30 random images from the test set. 

\noindent
\textbf{Results:} Even in this restricted and complex setting where we only allow the attacker to modify the images and not the state of the vehicle, the test time attack success rate is still high for \AllZeroRowCol{} (see Table~\ref{tab:eq_attack_rate}). We note that this setting is analogous to the real world where attackers can easily control the environment but not the sensor inputs of the vehicle or the physical constraints of the car (the attacker has no control over $Gx \leq h$). We report the overall loss $\mathcal{L}_{\textit{total}}$ when training with defense in Table~\ref{tab:eq_training_def}.

\subsection{Discussion of Results}
\label{sec:results}

\begin{figure}[t]
\centering
\subfigure[Training Loss]{
\includegraphics[width=.48\columnwidth,height=2.6cm]{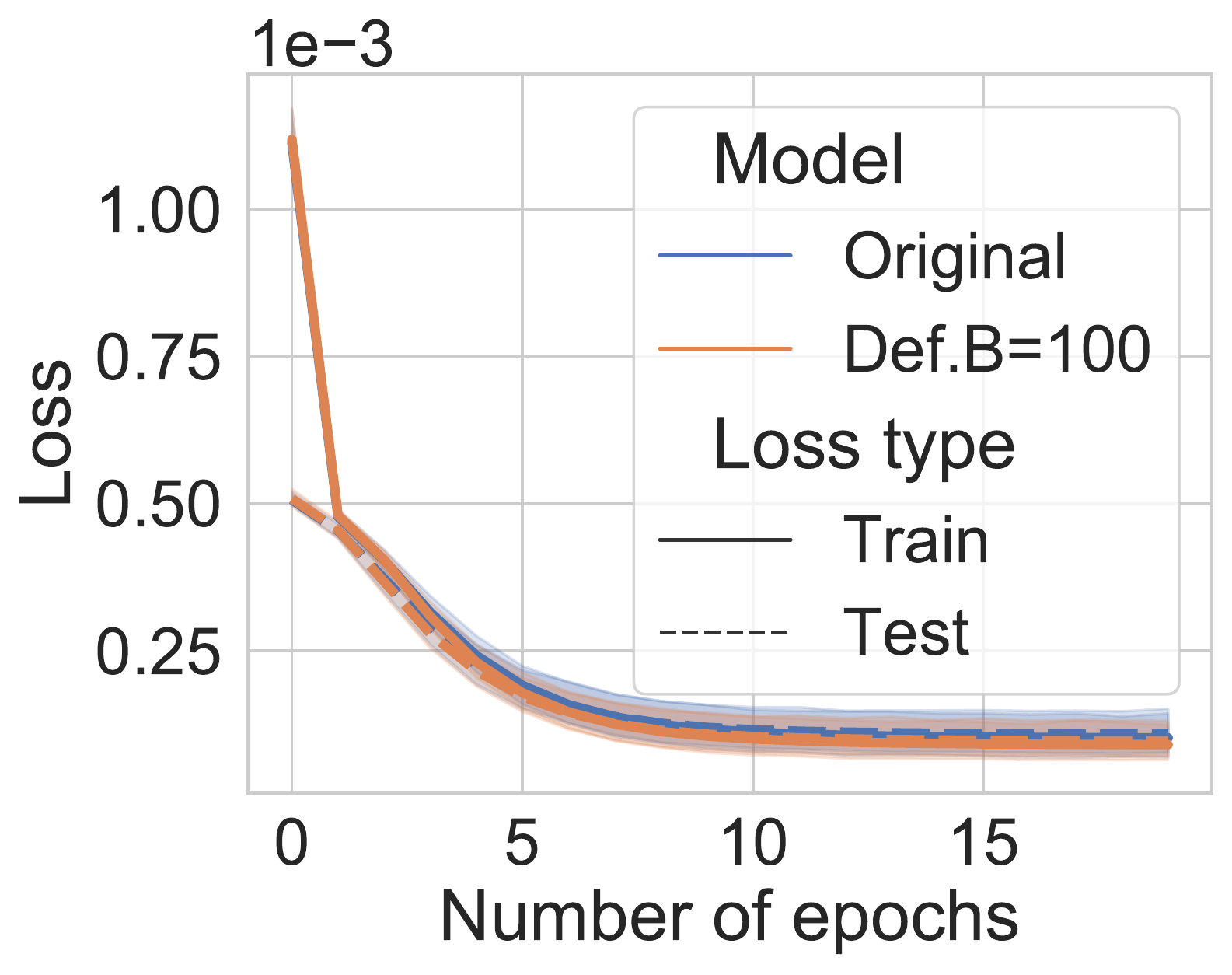}\label{fig:defensetraintest}
}%
\subfigure[MSE Loss in Attack]{
\includegraphics[width=.48\columnwidth,height=2.6cm]{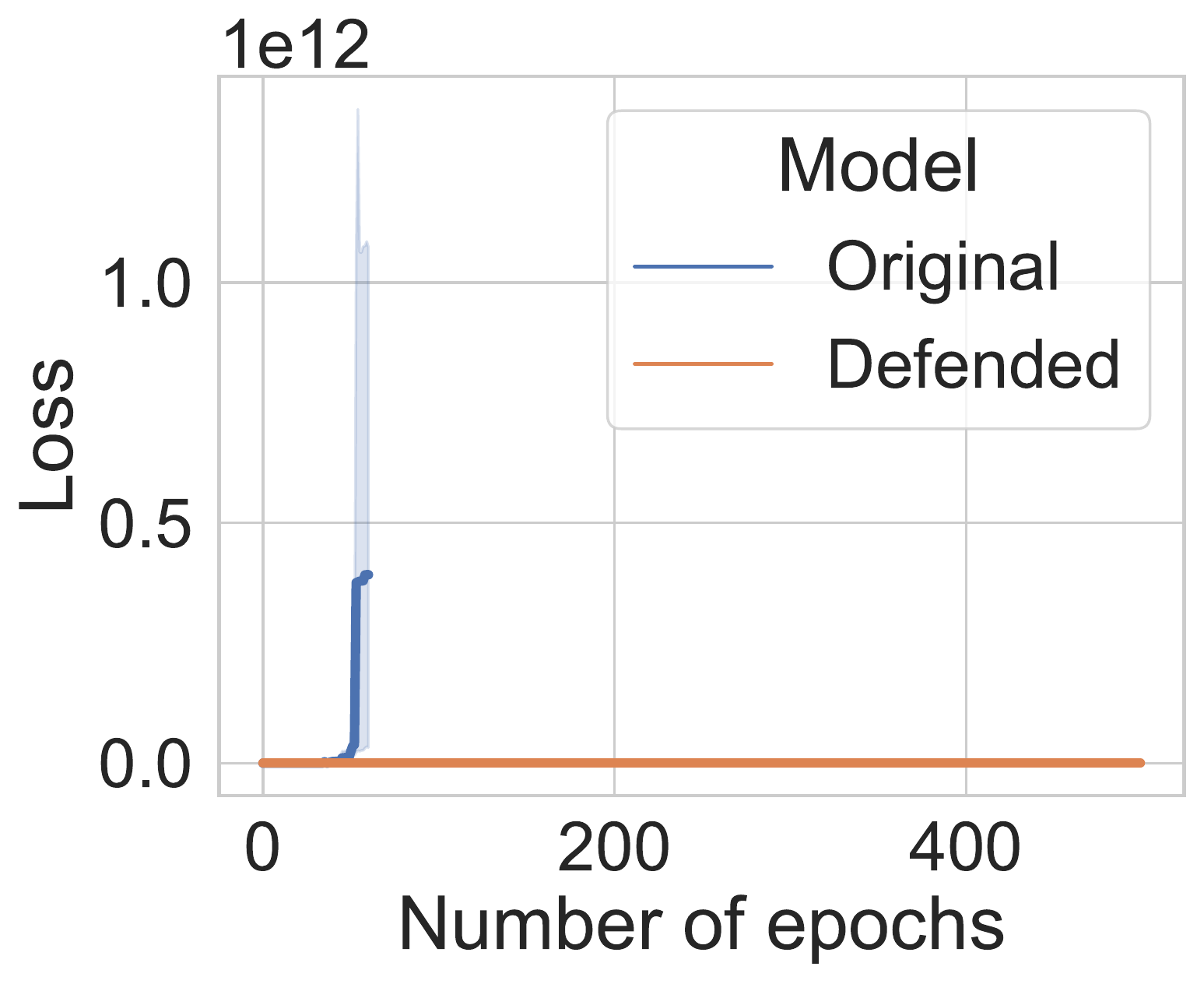}\label{fig:lossattack}
}%
\newline
\subfigure[Cond. Num. in Attack]{
\includegraphics[width=.47\columnwidth,height=2.6cm]{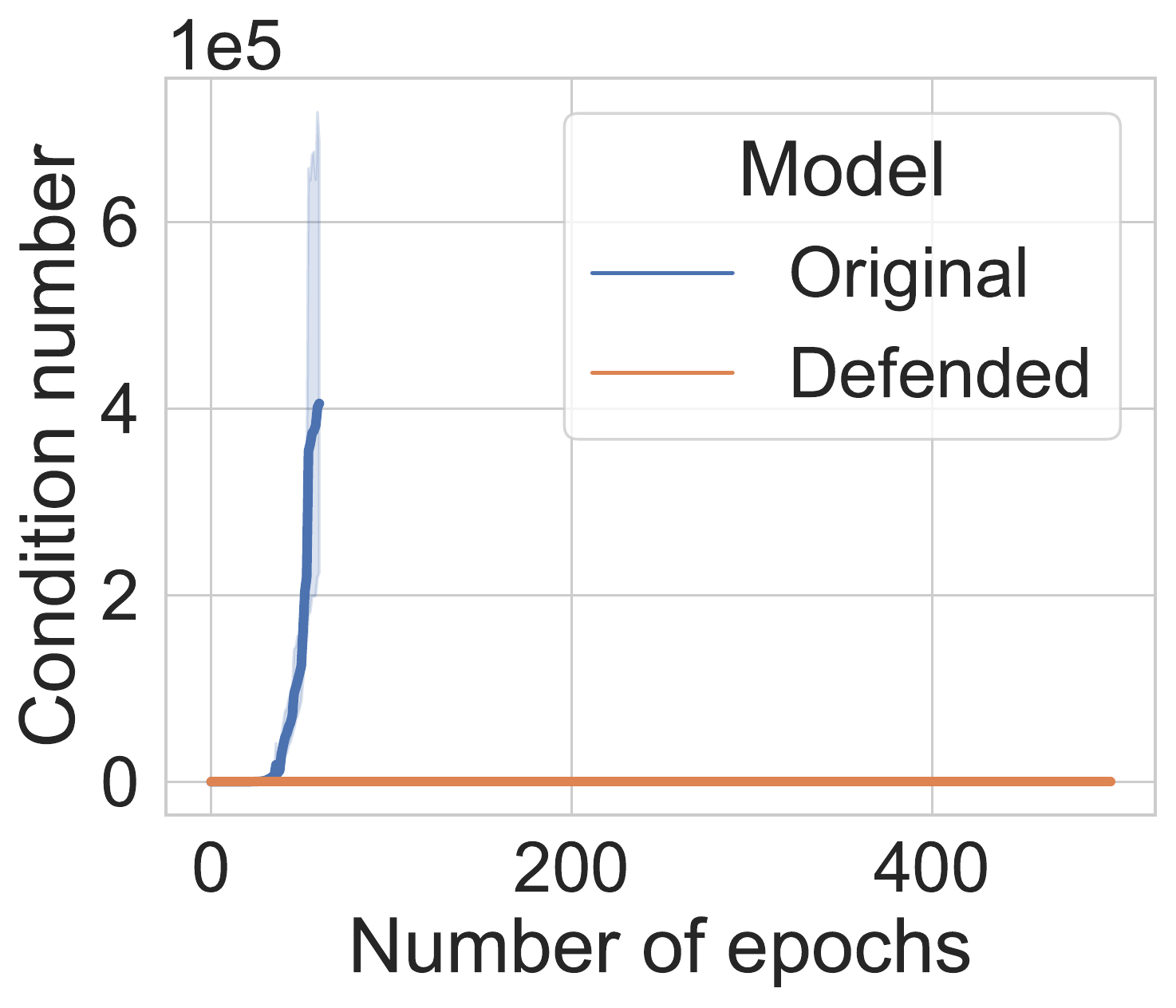}\label{fig:conditionattack}
}
\subfigure[Comparing Attacks]{
\includegraphics[width=.47\columnwidth,height=2.6cm]{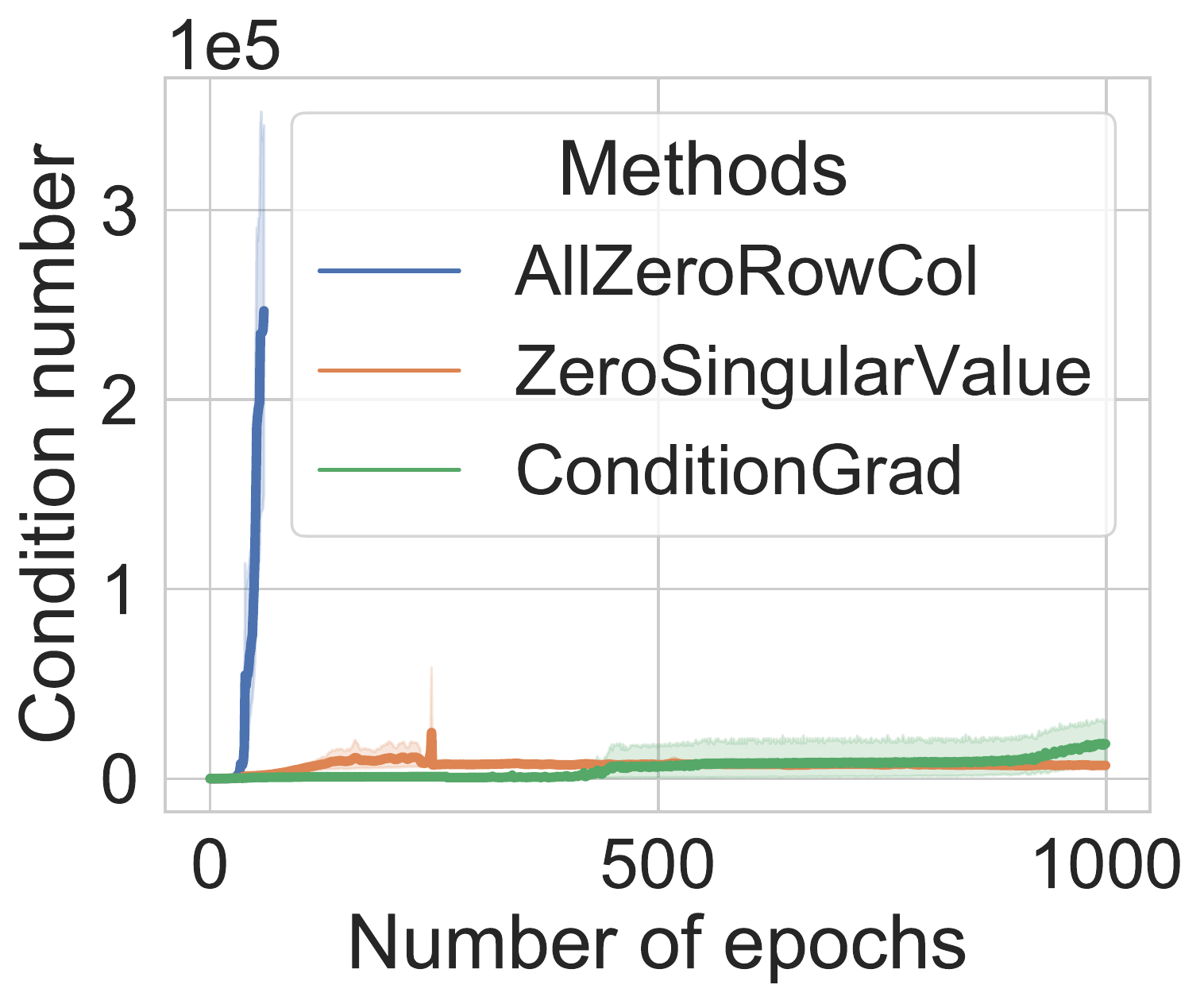}\label{fig:cond_saturation}
}
% \begin{figure}[H]
% \label{fig:cond_saturation}
% \includegraphics[width=\textwidth]{images/Condition_number_newplot_wo_img13.png}
% \caption{Graph}
% \end{figure}
%\subfigure{
%\includegraphics[width=.46\textwidth,%height=5cm]{images/Loss_eps.png}
%}
\caption{Effect of attacks and defense in Jigsaw Sudoku}%\thanh{I assume (a) is without attack. We need to clarify the difference between accuracy in (a) and loss in (b) and why we need both?}}
\label{fig:jigsaw_sudoku_defense_training}
\end{figure}

\noindent
\textbf{Efficacy of the attack methods:} The simplest \AllZeroRowCol{} attack was the most effective for all experiment settings (see Table~\ref{tab:eq_attack_rate}) for a range of activation functions---ReLU, CeLU, and also tanh (see Appendix~\ref{appendix:activation_functions}). This is surprising, given that the theoretically principled \ZeroSingularValue{} worked the least consistently empirically, and even the theoretically motivated \ConditionGrad{} attack worked inconsistently. This highlights the difficulty of 
%verifying and 
transferring theoretical results to the real world, especially 
%in the face of 
with limited numerical precision. 

 %We also observe that effective attacks can drastically affect loss (Fig.~\ref{fig:lossattack}), likely as a result of numerical instability (Fig. \ref{fig:conditionattack}).

\noindent
\textbf{Stabilizing effect of defense:} All attacks are thwarted by our defense, showing the effectiveness of controlling the condition number. We also observe that by tuning the $B$ hyperparameter, we are able to train with the defense without any tradeoffs in terms of learning or accuracy (Table~\ref{tab:eq_training_def}), and across all domains, we find that it adds less overhead (a constant factor less than 2) than the actual optimization. In fact, at certain values of $B$, we achieve lower loss and higher accuracy compared to the original undefended model. We conjecture that this occurs when 
%the condition number of the constraint matrix is controlled carefully, such that 
the true matrix exists within the space of the bounded condition number, and the low number makes the gradients of the optimization layer stable.

%For all domains, we augment the training of the model with our defense with minimal impact to the number of iterations to reach best performance (see Fig. \ref{fig:defensetraintest} for Jigsaw Sudoku, Appendix \ref{appendix:additional_experiment_results} for other domains). 

\noindent
\textbf{On achieving general trustworthiness:} Further auditing library functions, we noted several related issues which can be abused (some found by us, others by practitioners as benign flaws). Attackers can exploit these to produce solutions that violate the constraints of the optimization or even produce incorrect results when the input is a singular matrix (Appendix \ref{appendix:additional_attacks}). %In fact, the solver can be made to provide a solution that violates the constraints to the optimization without raising any error, which is an extremely dangerous bug---violating the positivity constraint on speed means that a vehicle may suddenly reverse and crash into a car waiting behind. 
%We believe that as machine learning systems become increasingly relied on for critical missions, we can no longer afford to treat the frameworks we use as mathematical black boxes. 
Careful audits should be performed on the implementation, down to potential edge cases in the data types.

\section{Conclusion}
\label{sec:discusion}

Our work scratches the surface of a new class of vulnerabilities that underlie neural networks, where rogue inputs trigger edge cases that are not handled in the underlying math or engineering of the layers, which lead to undefined behavior. We showed methods of constructing inputs that force singularity on the input matrix of equality constraints for optimization layers, and proposed a guaranteed defense via controlling the condition number. We hope that our work raises awareness of this new class of problems so the community can band together to resolve them with novel solutions.

\bibliography{icml22}
%\bibliographystyle{icml2022}

%%%%%%%%%%%%%%%%%%%%%%%%%%%%%%%%%%%%%%%%%%%%%%%%%%%%%%%%%%%%%%%%%%%%%%%%%%%%%%%
%%%%%%%%%%%%%%%%%%%%%%%%%%%%%%%%%%%%%%%%%%%%%%%%%%%%%%%%%%%%%%%%%%%%%%%%%%%%%%%
% APPENDIX
%%%%%%%%%%%%%%%%%%%%%%%%%%%%%%%%%%%%%%%%%%%%%%%%%%%%%%%%%%%%%%%%%%%%%%%%%%%%%%%
%%%%%%%%%%%%%%%%%%%%%%%%%%%%%%%%%%%%%%%%%%%%%%%%%%%%%%%%%%%%%%%%%%%%%%%%%%%%%%%
\newpage
\appendix
%\onecolumn
\section*{Appendix for Submission titled ``Beyond NaN: Resiliency of Optimization Layers in The Face of Infeasibility''\footnote{All reproduction code is available at \url{https://github.com/wongwaituck/attacking-optimization-layers-public}}}

\section{Bugs and Additional Attacks}
\label{appendix:additional_attacks}

\subsection{BUG: lu\_solve and Solve Failures for Singular Matrix}
\label{appendix:lu_solve}
\paragraph{Code:} Reproduction code can be found in the folder \texttt{attacks/lu\_solve\_singular}.

\paragraph{Background} Some optimization layer libraries (e.g. \texttt{qpth}) and network architectures (e.g. iUNet \cite{etmann2020iunets}) rely on fast equation solvers like \texttt{lu\_solve} to compute the forward pass. However, during our tests, we found that \texttt{lu\_solve} gives wildly incorrect results when a singular matrix is provided. This is a known issue in PyTorch (see: \cite{pytorch_2020_48572}), but this can be reproduced in TensorFlow as well.

The reproduction scripts can be found in \texttt{attack/lu\_solve\_singular} as part of the supplementary materials. This was tested on CPU and should give similar results in GPU as well. The versions of PyTorch used was 1.8.1 and tensorflow 2.5.0. \texttt{SingularSolvers.py} contains an attack for both frameworks, while \texttt{MinimalErrorProof.py} contains a minimal PoC based on the above pytorch issue filed.

\paragraph{Impact} If a rogue input is provided or an input just happens to satisfy the conditions above to the \texttt{lu\_solve}, it would result in unpredictable behavior in the output. In a classification problem, the impact would be a misclassification - which could be as serious as a misdiagnosis. If something more complicated is determined from the neural network (e.g. controlling an autonomous vehicle), the results can be life threatening.

\paragraph{Remediation} There's currently no consensus on how this should be resolved, but the expected behavior is to throw an error that can be caught by the caller.

\subsection{BUG: Inequality Violation Gives Wrong Result in \texttt{qpth}}
\label{sec:qptherrors}

\paragraph{Code:} Reproduction code can be found in the folder \texttt{attacks/inequality\_incorrect}.

\paragraph{Background} \texttt{qpth} is a library that provides a differentiable solver that plugs into pytorch so that a differentiable quadratic program solver can be included as part of a neural network. We first note that the quadratic program (in Eq.~\ref{eq:optnet}) allows for arbitrary $G$, $h$, including a $G$ and $h$ that is infeasible. In this case, no warning is produced and some value is outputted as  We note that we first observed this issue when observing the output of the Speed Profile Planning scenario, and we reproduce a minimum proof of concept here.

The reproduction scripts can be found in \texttt{attack/inequality\_incorrect} as part of the supplementary materials. The versions of pytorch used was 1.8.1. \texttt{constraint-inequality-marabou-\\qpth-adv.ipynb} contains the attack for the default solver in \texttt{qpth}, while \texttt{constraint-inequality-marabou-\\cvxpy.ipynb} contains a minimal proof of concept for the \texttt{cxvpy} solver - in this case the \texttt{cxvpy} clearly states that it is infeasible, but the \texttt{qpth} solver gives a completely incorrect result without any error. For the above notebooks, the attack sample was generated using the Marabou \cite{10.1007/978-3-030-25540-4_26} verification framework.

\paragraph{Impact}  The incorrect results can be used by downstream systems (for instance, in our speed profile planning scenario, the autonomous vehicle) which will lead to potentially disastrous outcomes since the output no longer satisfies the constraints of the quadratic program.

\paragraph{Remediation} Consider implementing an evaluation mode at test time which checks whether the problem is feasible and throws an error to the caller at test time instead of failing silently.

\subsection{Non-PSD Q}

\paragraph{Code:} Reproduction code can be found in the folder \texttt{attacks/non\_psd\_q}.

\paragraph{Background} Following the formulations detailed in \cite{amos2017optnet}, we note the optimization problem that is embedded in the layer is of the following form:

\begin{equation}
\label{eq:optnet}
\begin{split}
\minimize_{z} \;\; & \frac{1} {2}z^T Q z + q^T z \\
\subjectto \;\; & A z = b, \; G z \leq h
\end{split}
\end{equation}
where $z \in \mathbb{R}^n$ is our optimization variable
$Q \in \mathbb {R}^{n \times n} \succeq 0$
(a positive semidefinite matrix),
$q \in \mathbb {R}^n$, $A\in \mathbb{R}^{m \times n}$,
$b \in \mathbb{R}^m$,
$G \in \mathbb{R}^ {p \times n}$ and
$h \in \mathbb{R}^{p}$ are problem data. The problem is then solved by the method of Lagrange multipliers as shown below
\begin{equation}
    L(z,\nu,\lambda)=\frac{1}{2}z^TQz+q^Tz+\nu^T(Az-b)+\lambda^T(Gz-h)
\end{equation}
The gradient with respect to the parameters of the optimization problem are then derived using the standard implicit differentiation through the KKT conditions (the full derivation is available at \cite{amos2017optnet}). However, the same paper notes that it performs the following weight updates to $Q$.
\begin{equation}
 \frac{\delta \ell}{\delta Q} = \frac{1}{2}(d_z z^T + zd_z^T) 
\end{equation}

We note that the weight update operation to $Q$ in general may not respect convexity. For example, we can easily have positive weighted sums (so the weight updates when applied to Q will be negated and hence concave) which means convexity of $Q$ is no longer preserved. 

\paragraph{Impact} This results poor training, where loss starts to fluctuate when the PSD assumption no longer holds, and performance starts to vary wildly. The model is no longer able to converge correctly, and can be implemented as a training time attack. The attack \textbf{proof of concept} is available at \texttt{attacks/non\_psd\_q} as part of this supplementary materials package.

\paragraph{Remediation} This attack has been resolved in \cite{NEURIPS2019_9ce3c52f}, though it was not explicitly mentioned. They do so by converting it to a second-order conic program. We first note that the objective function of the quadratic program can be re-expressed in the following form
\begin{equation}
\begin{split}
\minimize_{z,k} \;\; & k \\
\subjectto \;\; &z^T P^T P x + q^T z \leq k.
\end{split}
\end{equation}
where $P^TP = Q$. We convert the above constraint to the following second order conic form
\begin{equation}\left\|
\begin{matrix}
(1 + q^T z - k)/2\\
Pz
\end{matrix} \right\|_2
\leq (1 - q^T z + k)/2 \end{equation}

The linear constraints can also be trivially reformulated, and hence will not shown here. Updates are then applied with respect to $P$, which means that convexity is preserved across gradient updates (since $P^T P$ is always positive semidefinite).

\subsection{Inequality Constraint Infeasibility Attack}

\paragraph{Code:} Reproduction code can be found in the folder \texttt{attacks/inequality\_attack}. \texttt{ineq\_feasibility.py} trains and runs the attack, \texttt{loss\_funcs.py} defines the loss function, and \texttt{optnet\_modules.py} defines the network. Instructions on how to run the scripts are in \texttt{README.md}. All results can be found under \texttt{data}. 

\paragraph{Background:} We wish to find a $u^*$ such that the matrix $A$ formed from $\theta = f_w(u^*)$ makes the constraints given by $A x \leq b$ infeasible. For this part, we assume that only $A$ depends on $\theta$ and $b$ is fixed; the attack is actually easier if $b$ also depends on $\theta$.
Farkas' Lemma is a well known result that characterizes feasibility of linear inequality constraints $Ax \leq b$, where $A \in \RR^{m\times n}$ and $b \in \RR^m$. Farkas' Lemma  states that $Ax \leq b$ has no solution if and only if $\exists$  $y \in \RR^m$ such that $y \geq 0$, $A^T y=0$ and $b^T y < 0$. Thus, finding an $A$ that makes $Ax \leq b$ infeasible is equivalent to finding an $A$ for which $\exists$  $y$ such that $y \geq 0$, $A^T y=0$ and $b^T y < 0$. We call the conditions in $y$ as the \emph{infeasibility condition}. 

In order to convert the infeasibility condition to an optimization form, first observe that WLOG $b^T y < 0$ can be written as $b^T y = -1$. This is WLOG because $y$ can be scaled by any positive number in the infeasibility condition and $b^T y < 0$ is a strict inequality. Given this observation, define two convex set $X = \{ y ~\vert~ A^Ty =0, b^Ty = -1\}$ and $Z = \{y~\vert~ y \geq 0\}$. Then, the infeasibility condition is equivalent to checking that that $X \cap Z$ is not empty.

The set $X$ comprises of the solutions of the linear equations
$
Ky =
\begin{pmatrix}
% & \vdots & \\
 A^T   \\
% & \vdots  &  \\
 b^T   \\
\end{pmatrix}
\begin{pmatrix}
 \\
y \\
 \\
\end{pmatrix}=\begin{pmatrix}
0_{n \times 1} \\
-1 \\
\end{pmatrix} 
= q
$. From standard theory of linear equations, the solutions to this system of equations is given by:%\thanh{we should use a different notation instead of $w$}
$$ 
X = \{K^{+}q + (I_{m \times m} - K^{+}K)v ~\vert~ v \in \RR^{m} \}
$$
and solutions exist if and only if $KK^{+}q = q$. To take the solution existence prerequisite into account, we define the prerequisite loss 
\begin{equation} \label{eq:prereq}
\mathcal{L}_{\textit{prereq}} = \norm{KK^{+}q - q}
\end{equation}
that should be minimized to zero.
%\begin{equation}
%    \mathcal{L}_{\textit{prereq}} = \norm{KK^{+}q - q}_2
%\end{equation}
%Now that we have taken into account this condition, we can In particular, the solutions are defined by the following set
%\begin{equation}  y = \{ K^{+}q + [I - K^{+}K]w \mathop {|} w \in \RR^{|y|} \}\end{equation}
Next, the easiest way to ensure that $X \cap Z$ is not empty is to drive the distance between the $X$ and $Z$ to zero, where the distance between $X, Z$ is the Euclidean norm between closest points of $X$ and $Z$. In order to achieve this, we define a loss $\mathcal{L}_{dist}$ defined as the solution value of the following quadratic convex optimization:
%\begin{equation*}
\begin{alignat}{3}
\mathcal{L}_{dist} \; \!=\! & \min_{y,v \in \mathbb{R}^m}  \norm{ y \!-\! K^{+}q \!-\! (I \!-\! K^{+}K)v}^2_2 \qquad (Optdist) \label{eq:dist}\\
\nonumber & \subjectto  \; y \geq 0  
\end{alignat}
%\end{equation*}
A solution value of zero for $Optdist$ ensures that $X \cap Z$ is not empty. This can be seen easily as the constraint forces feasible $y$'s to be exactly the set $Z$ and the objective minimizes the distance between any two points in $X$ and $Z$. 

Overall, the attack involves minimizing the loss
$ \gamma  \mathcal{L}_{prereq} + (1- \gamma ) \mathcal{L}_{dist}
$, where $\gamma$ is a hyper-parameter that is set closer to one (to ensure that the prerequisite is definitely zero). The derivative of $\mathcal{L}_{dist}$ can be obtained by differentiating through $Optdist$ using optimization layer techniques itself. We obtain the following closed form for required gradients and another result about $Optdist$. Before that, we rewrite the optimization in a standard form using the variable $z = [y - K^+ q, v] \in \mathbb{R}^{2m}$. Let 

\begin{equation}
B= \begin{pmatrix}
I_{m \times m} \quad   (K^{+}K - I_{m \times m}) \end{pmatrix}, A = \begin{pmatrix}
I_{m \times m} \quad   0_{m \times m}
\end{pmatrix}
\end{equation}
Then, it can be seen that $Bz = y - K^{+}q - (I - K^{+}K)v$ and $Az = y - K^+ q$ and then $Optdist$ can be written as
%\begin{equation}
\begin{alignat}{5}
\nonumber \mathcal{L}_{dist} = & \qquad \min_{z}   & \norm{Bz}^2_2  \qquad & \quad & \\
\nonumber & \subjectto \;\; & A z \geq  -K^{+}q & &  
\end{alignat}
\begin{lemma}
Matrix $B^TB$ has at least one zero eigenvalue, hence is non-invertible.
\end{lemma}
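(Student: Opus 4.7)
The plan is to observe that $B$ is a wide matrix and then apply a standard rank-nullity argument. Writing out the dimensions, $B$ is the horizontal concatenation of two $m\times m$ blocks, so $B \in \mathbb{R}^{m\times 2m}$. Hence $B^T B \in \mathbb{R}^{2m\times 2m}$, but $\operatorname{rank}(B^T B) = \operatorname{rank}(B) \leq m < 2m$, so $B^T B$ is rank-deficient. Since $B^T B$ is PSD (as $z^T B^T B z = \lVert B z \rVert_2^2 \geq 0$) and singular, it has at least one zero eigenvalue.

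To make this fully concrete rather than relying only on counting, I would also exhibit an explicit null vector of $B^T B$. A null vector of $B^T B$ is the same as a null vector of $B$. For any $v \in \mathbb{R}^m$, set
\begin{equation*}
z = \begin{pmatrix} (I_{m\times m} - K^{+}K)\, v \\ v \end{pmatrix} \in \mathbb{R}^{2m}.
\end{equation*}
Then by the block structure of $B$,
\begin{equation*}
B z \;=\; I_{m\times m}\,(I - K^{+}K)v \;+\; (K^{+}K - I_{m\times m})\,v \;=\; 0.
\end{equation*}
Choosing $v \neq 0$ gives $z \neq 0$ (its second block equals $v$), so $\ker(B)$ is nontrivial, and therefore $B^T B z = 0$ with $z \neq 0$, establishing the zero eigenvalue.

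I do not anticipate a real obstacle here: the claim is essentially a dimension count, and the only thing to be careful about is making sure $z \neq 0$ when constructing the witness (which is why I take the second block to be $v$ itself, not $K^{+}Kv$ or $(I-K^{+}K)v$, either of which could vanish). The substantive content of the lemma is not the proof itself but the consequence: the Hessian of the objective of $Optdist$ is singular, so the QP is convex but not strictly convex, which matters for how one sets up and differentiates through the solver downstream.
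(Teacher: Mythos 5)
Your proof is correct, but it takes a different route from the paper's. The paper writes out the $2m\times 2m$ block matrix
\begin{equation*}
B^T B = \begin{pmatrix} I & K^{+}K - I \\ (K^{+}K - I)^T & (K^{+}K - I)^T (K^{+}K - I) \end{pmatrix}
\end{equation*}
and applies the Schur-complement determinant identity (valid since the top-left block $I$ is invertible) to get $\det(B^TB) = \det\bigl((K^{+}K - I)^T (K^{+}K - I) - (K^{+}K - I)^T (K^{+}K - I)\bigr) = 0$, concluding that the PSD matrix $B^TB$ has a zero eigenvalue. You instead argue by dimension count --- $\operatorname{rank}(B^TB) = \operatorname{rank}(B) \le m < 2m$ since $B$ is $m \times 2m$ --- and back it up with an explicit kernel vector $z = \bigl((I-K^{+}K)v,\; v\bigr)$ with $v \ne 0$, which indeed satisfies $Bz = 0$ and is nonzero by its second block. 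Your argument is more elementary and in fact yields more: the nullity of $B^TB$ is at least $m$, so there are at least $m$ zero eigenvalues, and you produce a concrete witness rather than only a vanishing determinant. What the paper's Schur-complement computation buys is continuity with the subsequent analysis of the fix: the same block decomposition is reused immediately afterwards to show $\det(B^TB + Q_\eta) = \eta > 0$, so the paper's choice makes that follow-up a one-line modification of the same calculation. Your closing remark about the consequence (the QP objective being convex but not strictly convex, hence the need for regularization before solving $Optdist$) matches the paper's motivation for the lemma.
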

\begin{proof}[Proof of Lemma]
From definition of $B$, we get

\begin{equation}
\begin{split}
B^TB &=   \begin{pmatrix}
 I  \\
 (K^{+}K - I)^T\\
\end{pmatrix} \times 
\begin{pmatrix}
I  & K^{+}K - I  \\
\end{pmatrix} \\
& = \begin{pmatrix}
 I  & K^{+}K - I  \\
(K^{+}K - I)^T & (K^{+}K - I)^T (K^{+}K - I) \\
\end{pmatrix} \\
\end{split}
\end{equation}

For any real matrix $B^T B$ is always positive semi-definite. Denoting the above block matrix $B^T B$ as $\begin{pmatrix}
 A & X \\  
 X^T & C
\end{pmatrix}$, we apply a property of the Schur's complement of a block matrix, which states the following for a symmetric block matrix $Y = \begin{pmatrix}
 A & X \\  
 X^T & C
\end{pmatrix}$ :
if $A^{-1}$ exists then $det(Y) = det(A)det(C - X^T A^{-1} X)$, where $det$ denotes determinant.

We can apply it to the block matrix $B^TB$ since $A = I$, so
\begin{equation}
\begin{split}
det(B^TB) &=   det(I) det((K^{+}K - I)^T (K^{+}K - I) -  \\
& \quad (K^{+}K - I)^T I^{-1} (K^{+}K - I)) \\ 
&=  det((K^{+}K - I)^T (K^{+}K - I) - \\
& \quad (K^{+}K - I)^T (K^{+}K - I)) \\
&= 0 
\end{split}
\end{equation}
While $B^TB$ above is positive semi-definite by construction, it is only weakly so as one eigenvalue is zero (implied by zero determinant).
\end{proof}
In particular, the non-invertibility of square matrix $B^TB$ in the above result is a concern as the forward pass when solving $Optdist$ itself fails due to numerical instability. We avoid this using the numerical stability defense and we describe this in the next paragraph. Moreover, as the defense introduces approximation and our attack depends on exact result, we tighten the constraint of standard form $Optdist$ to $A z \geq -K^+q + \nu$ (equivalent to $y \geq \nu$ in $Optdist$) for small $\nu > 0$, which has the effect of shrinking the space $Z$ to $Z' \subset Z$. This ensures that even with approximation the solution found ($y$ in $Optdist$) is still very likely in the overlap of $X$ and $Z$.

\textbf{Fix for non-invertible $B^T B$}: ; Non-invertible $B^T B = \begin{pmatrix}
 A & X \\  
 X^T & C
\end{pmatrix}$ in practice results in severe numerical instability when trying to solve the optimization problem. To alleviate the issue, we add some small perturbation $Q_\eta = \eta I$, where $\eta \geq  0$ is a hyperparameter, so that the block matrix corresponding to $C$ becomes more positive. For simplicity sake we assume that entries of $Q_\eta$ corresponding to $A$ in the block matrix are set to 0. We can easily see that this small addition leads us to a stronger conditions on the above equation, as follows:

\begin{equation}
\begin{split}
det(B^TB + Q_\eta) &=  det\Big((\eta I + (K^{+}K - I)^T (K^{+}K - I)) \\
& \quad -  (K^{+}K - I)^T I^{-1} (K^{+}K - I) \Big) \\ 
&=  det\Big((\eta I + (K^{+}K - I)^T (K^{+}K - I)) \\
& \quad - (K^{+}K - I)^T (K^{+}K - I)\Big) \\
&= det(\eta I) = \eta > 0.\\
& \Rightarrow B^TB \mbox{ is positive definite}
\end{split}
\end{equation}

Let $z^{*}(x)$ be the minimizer of the above program for a given input $x$. We create a poisoned example $x'$ from $T$ iterations of gradient descent using a learning rate $\alpha$ starting from a benign input $x$, by taking the gradient of the loss function above with respect to the input at the previous iteration as such
\begin{equation}
\begin{split}
x_{t+1} & = x_t - \alpha \nabla_{x_t}(\gamma\mathcal{L}_{\textit{prereq}}(x_t) + (1-\gamma)\mathcal{L}_{\textit{dist}}(x_t)) \\
& = x_t - \alpha \nabla_{x_t}(\gamma \norm{K(x_t)K(x_t)^{+}q - q}_2 \\
& + (1-\gamma)(z^{*}(x_t))^\top B(x_t)^\top B(x_t) z^{*}(x_t))
\end{split}
\end{equation}

where  $\gamma$ is a hyperparameter for the weights on the two loss functions $\mathcal{L}_{\textit{prereq}}$ and $\mathcal{L}_{\textit{dist}}$.

\paragraph{Impact:} We demonstrate the attack, running gradient descent using the loss function defined above, using synthetic data for $G \in \RR^{2\times 2}$  and show that the loss function does indeed find inequalities that lead to infeasibility. We plot how the inequalities change over time while under attack in Fig. \ref{fig:ineq_attack}.

For larger matrices, because we can't use the default \texttt{qpth} solver implementation which is faster (it already has issues with inequality, see Section \ref{sec:qptherrors}), we have to fall back to the alternative solver. In this case, attacking is much slower. Further, on large matrices, the forward pass in the alternative solver may occasionally either already face issues without being attacked, or face stability issues while being attacked. We leave further exploration of this as well as remediation for this issue for future work. 
\begin{figure}[ht]
\centering
\subfigure{
\includegraphics[width=0.45\columnwidth]{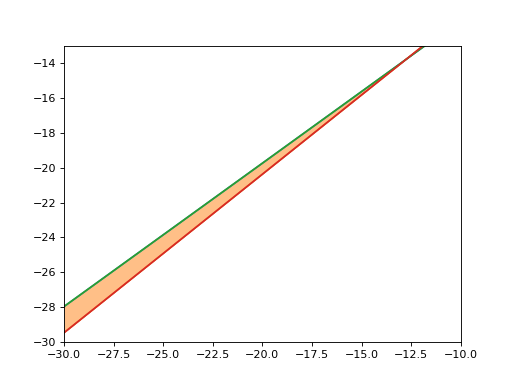}
}
\subfigure{
\includegraphics[width=.45\columnwidth]{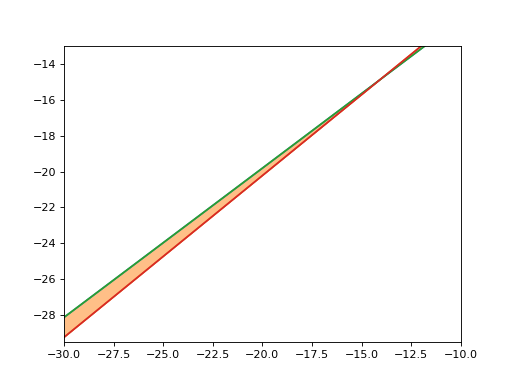}
}
\subfigure{
\includegraphics[width=.45\columnwidth]{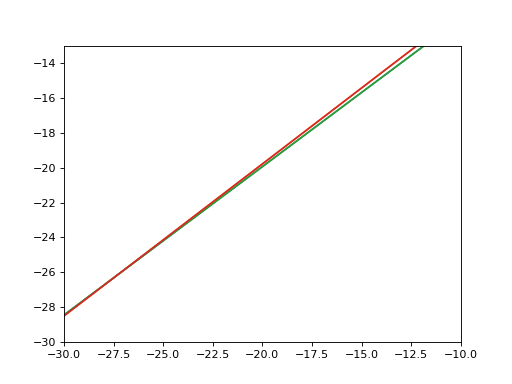}
}
\subfigure{
\includegraphics[width=.45\columnwidth]{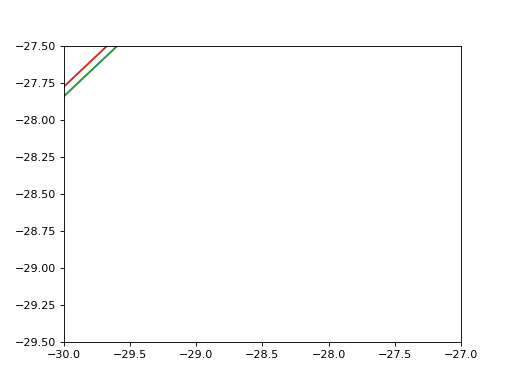}
}
\caption{Graph of inequalities as they are attacked. We plot the inequalities chosen from the last 20 iterations of the attack that show the greatest change, as well as the last iteration which resulted in the infeasibility. Figures should be interpreted left to right, starting from the top row. Highlighted regions indicate the feasible regions. The last figure on the bottom right shows that the optimization is no longer feasible as the inequalities are no longer intersecting, as needed.}
\label{fig:ineq_attack}
\end{figure}

\section{Additional Experimental Results}
\label{appendix:additional_experiment_results}

\subsection{Synthetic Data}

\paragraph{Code:} Reproduction code can be found in the folder \texttt{synthetic\_data}, where \texttt{eq\_train\_all\_cond.py} trains the model given the parameters, and \texttt{eq\_attack\_all.py} runs all attack methods on a given model. Instructions on how to run the scripts are in \texttt{README.md}. All results can be found under \texttt{results\_data}. We show relevant plots for training and attacks in the results below.

\subsubsection{Network Description}

In all the experiments, we have a network of 2 fully-connected layers of 500 nodes each with ReLU activations, followed by the OptNet optimization layer. 

\subsubsection{40x50 Results}

\begin{figure}[ht]
\centering
\includegraphics[width=0.60\columnwidth]{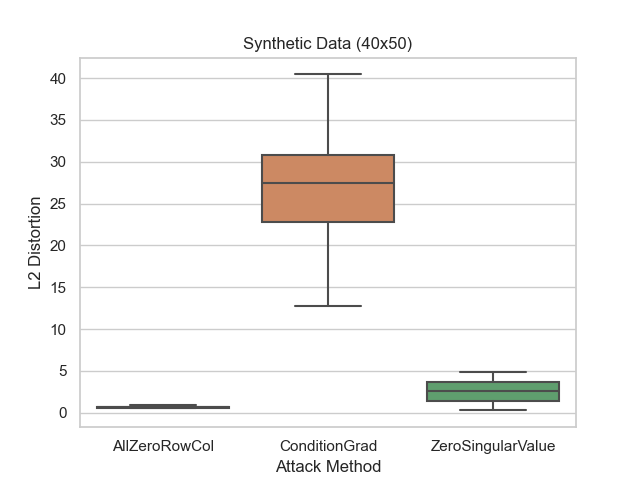}
\caption{Synthetic Data (40x50) -- L2 distortion measure as proposed in \cite{DBLP:journals/corr/SzegedyZSBEGF13} for attack perturbations, which is a normalized form of the L2 magnitude of attack and is defined as $\sqrt{\sum{\frac{(x_i - x^\prime_{i})^2}{n}}}$, where $n$ is 500 for the Synthetic Data test case.}
\label{fig:40x50distortionplot}
\end{figure}

\begin{figure}[ht]
\includegraphics[width=0.95\columnwidth]{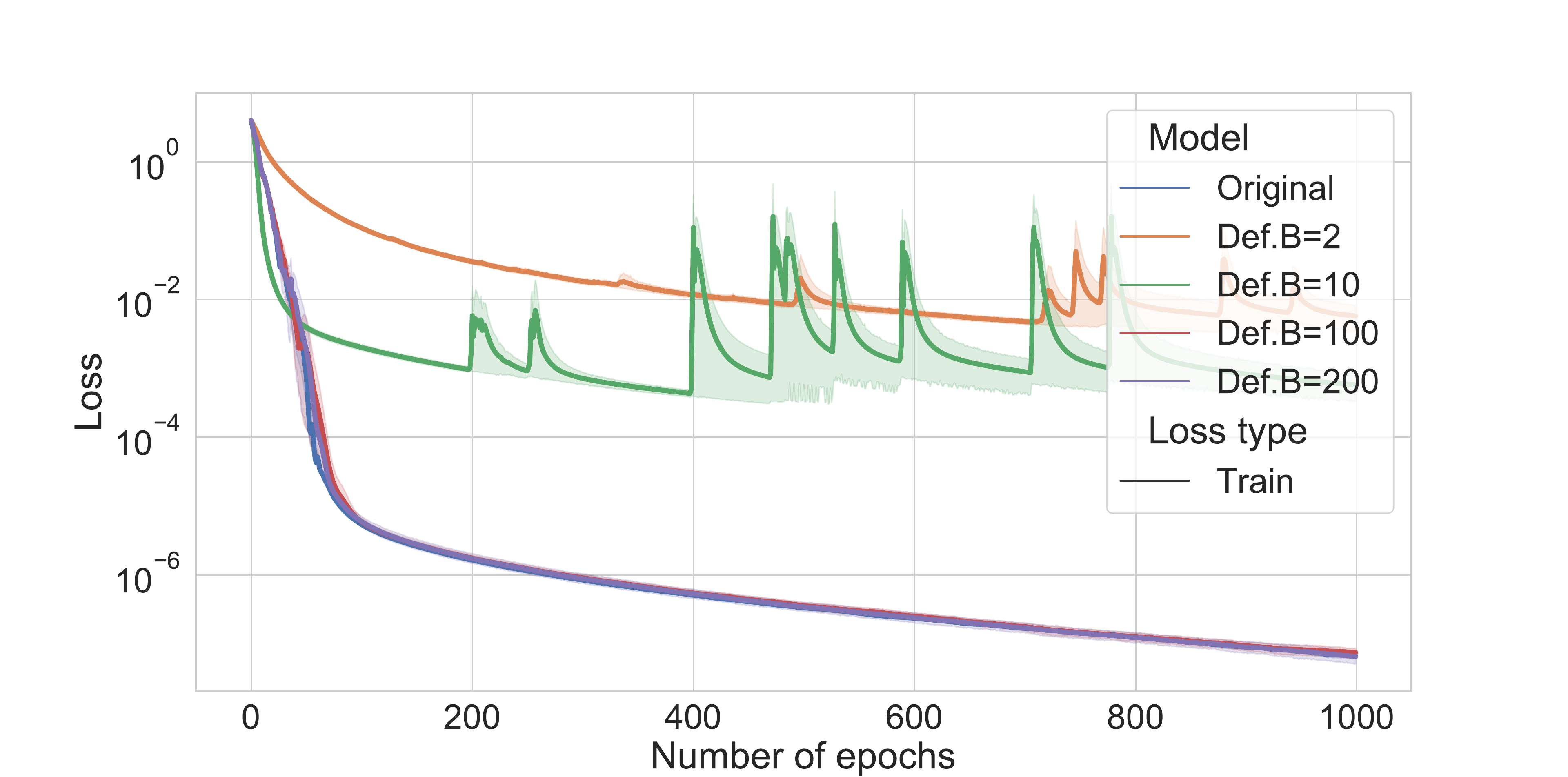}
\caption{Synthetic Data (40x50) -- Effect of defense on training averaged over 10 runs. The y-axis follows a log scale, and we see for large values of B there is virtually no difference in training compared to the original model.}
\label{fig:40x50trainplot}
\end{figure}

\begin{figure}[ht]
\centering
\includegraphics[width=0.85\columnwidth]{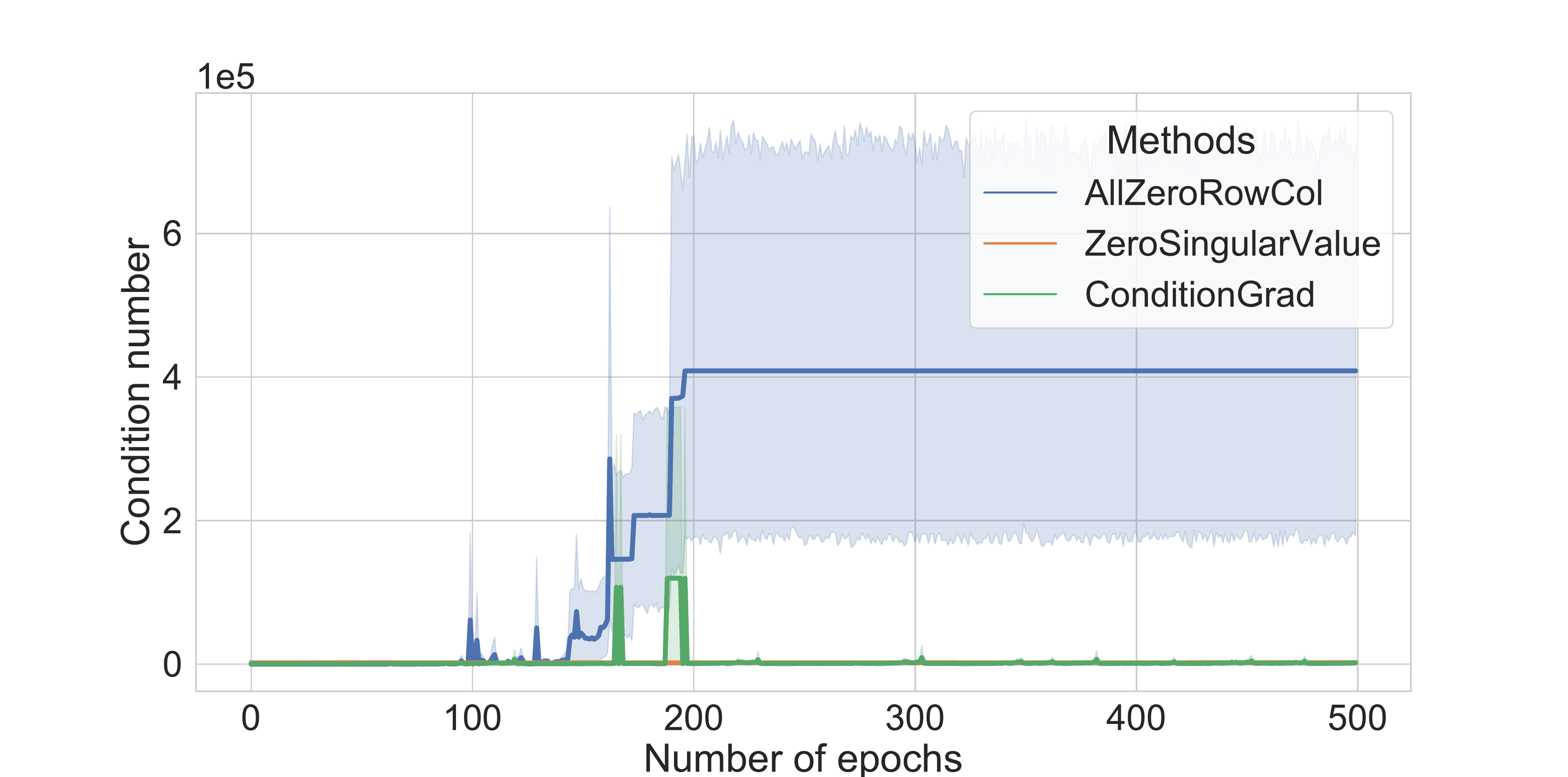}
\caption{Synthetic Data (40x50) -- Effect of attack methods on condition number averaged over 30 samples for the undefended model. For \AllZeroRowCol{}, condition number in later epochs take on the last value of condition number if an attack was found, hence the flat line.}
\label{fig:40x50attackundefplot}
\end{figure}

\begin{figure}[ht]
\centering
\includegraphics[width=0.85\columnwidth]{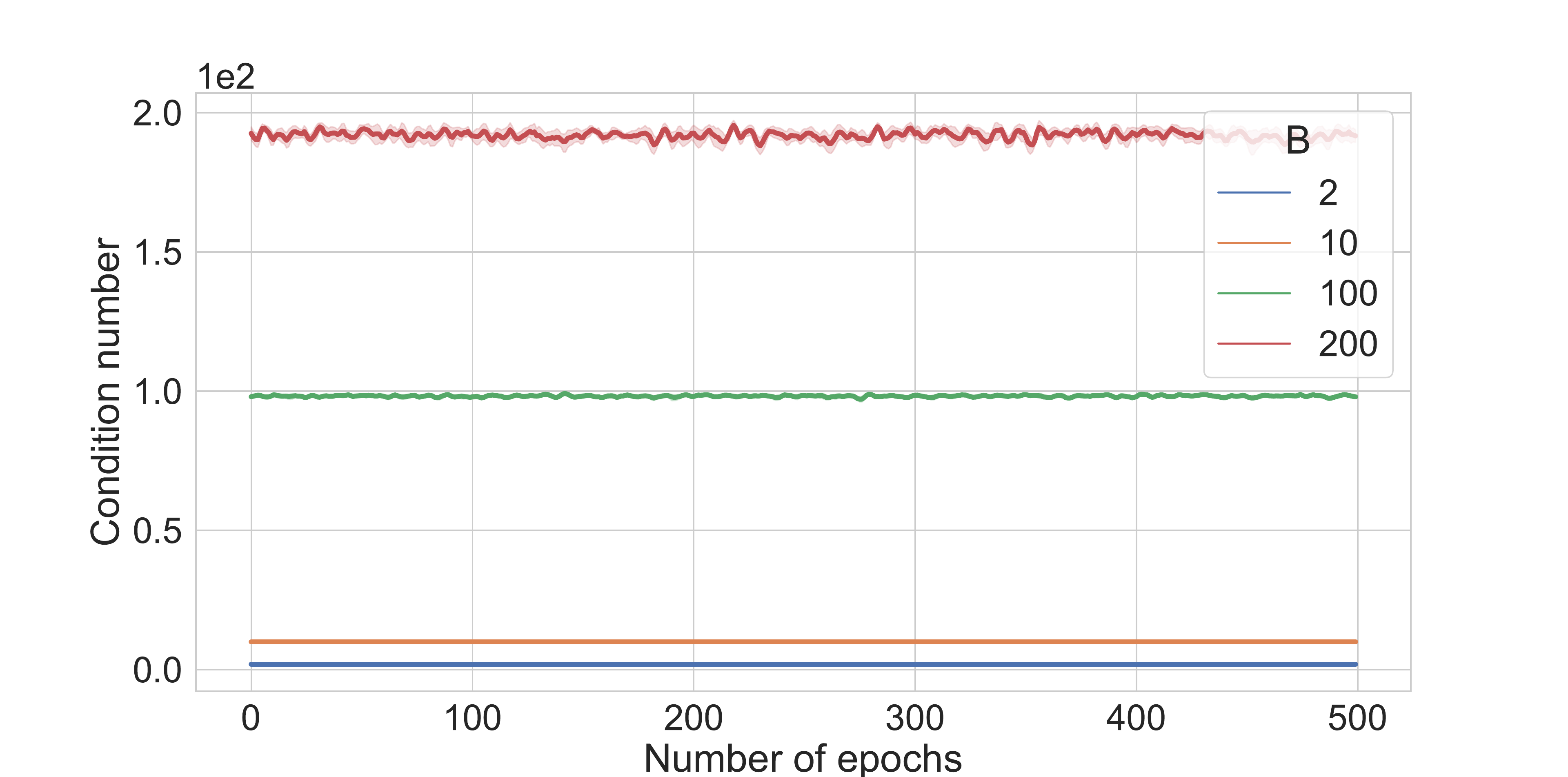}
\caption{Synthetic Data (40x50) -- Effect of attack on condition number averaged over 30 samples for the \emph{defended} model, using \AllZeroRowCol{}. We see the defense is highly effective in controlling the condition number, hence preventing any attack.}
\label{fig:40x50attackdefplot}
\end{figure}

The overall results in the synthetic data for 40x50 are congruent to our findings in the main paper. In Fig. \ref{fig:40x50trainplot}, we see that the training with the defense can be as good as the original undefended model with some fine tuning of the parameter $B$. We see that the most effective attack on the 40x50 Synthetic Data is \AllZeroRowCol{} in Fig. \ref{fig:40x50attackundefplot}. Finally, we see that the defense is effective in preventing the attack, and the condition numbers never exceed our threshold while under attack in Fig. \ref{fig:40x50attackdefplot}.

\begin{figure}[ht]
\centering
\includegraphics[width=0.60\columnwidth]{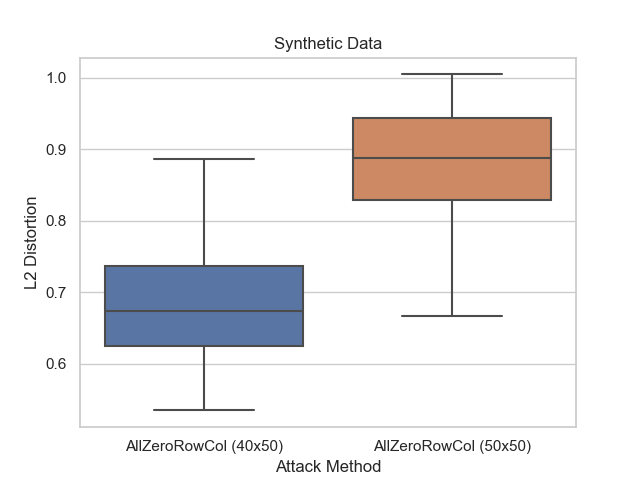}
\caption{Synthetic Data -- Comparison of L2 distortion measure as proposed in \cite{DBLP:journals/corr/SzegedyZSBEGF13}, which is a normalized form of the L2 magnitude of attack and is defined as $\sqrt{\sum{\frac{(x_i - x^\prime_{i})^2}{n}}}$ for attack perturbations, where $n$ is 500 for the Synthetic Data test case. We omit \ZeroSingularValue{} and \ConditionGrad{} attack data as there were not enough attack samples.}
\label{fig:50x50distortionplot}
\end{figure}

\subsubsection{50x50 Results}

The overall results in the synthetic data for 50x50 are also congruent to our findings in the main paper. In Fig. \ref{fig:50x50trainplot}, we see that the training with the defense can be as better as the original undefended model with some fine tuning of the parameter $B$. We see that the most effective attack on the 50x50 Synthetic Data is also \AllZeroRowCol{} in Fig. \ref{fig:50x50attackundef}. Finally, we see that the defense is effective in preventing the attack, and the condition numbers never exceed our threshold while under attack in Fig. \ref{fig:50x50attackdef}.

\begin{figure}[ht]
\centering
\includegraphics[width=0.95\columnwidth]{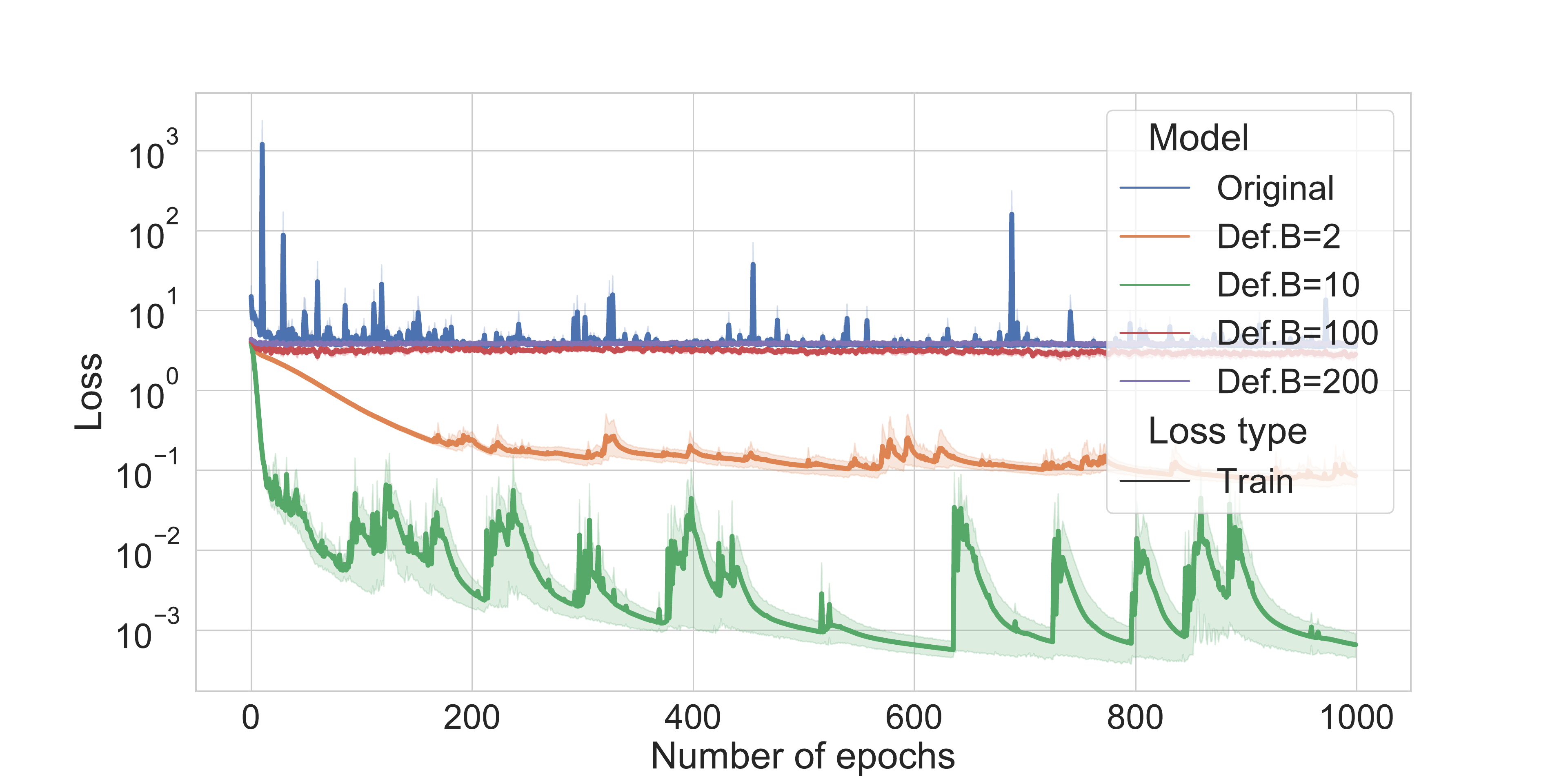}
\caption{Synthetic Data (50x50) -- Effect of defense on training averaged over 10 runs.}
\label{fig:50x50trainplot}
\end{figure}

\begin{figure}[ht]
\centering
\includegraphics[width=0.95\columnwidth]{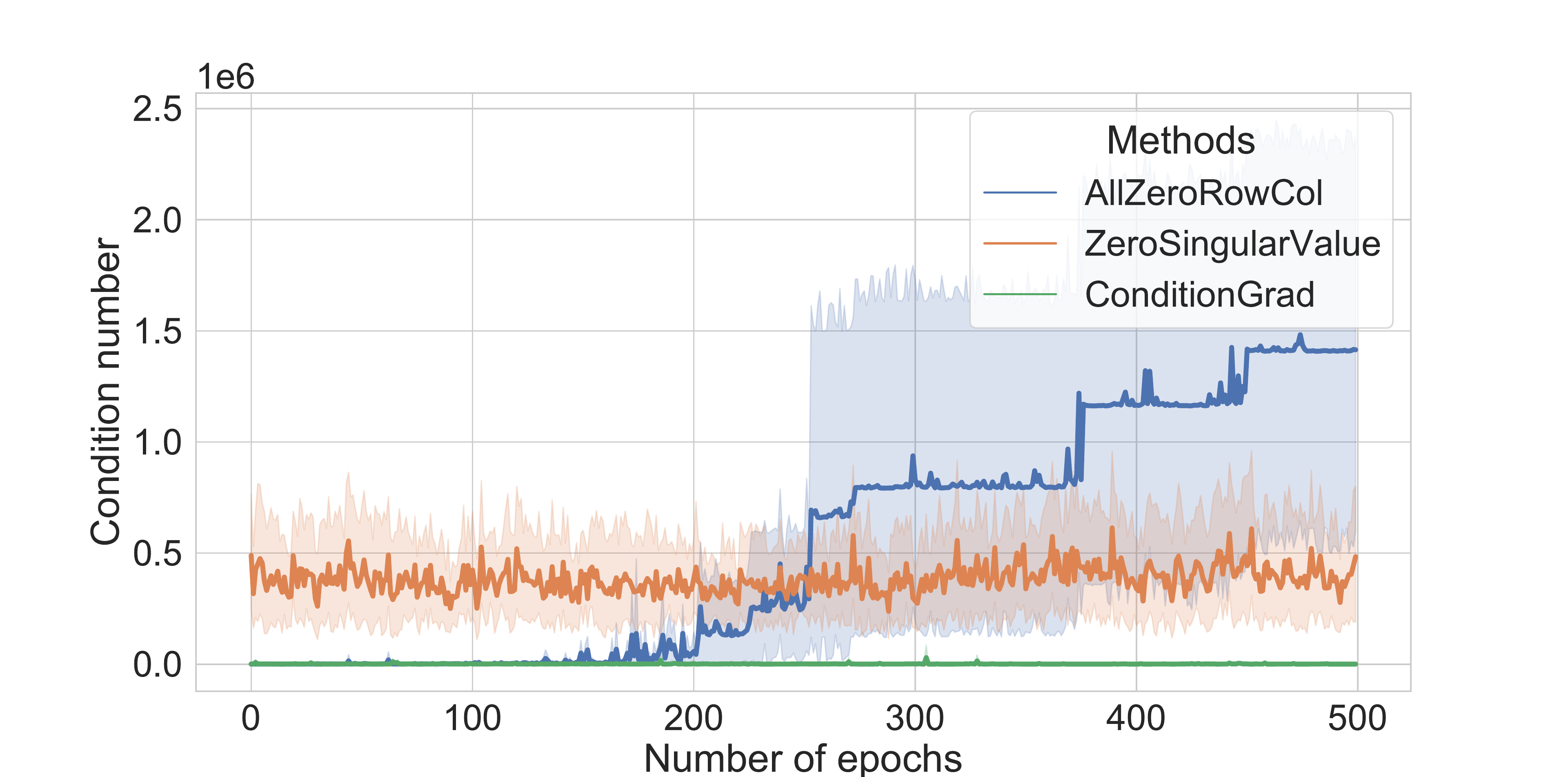}
\caption{Synthetic Data (50x50) -- Effect of attack methods on condition number averaged over 30 samples on the undefended model. Again, we see \AllZeroRowCol{} having the most success in this domain as well.}
\label{fig:50x50attackundef}
\end{figure}

\begin{figure}[ht]
\centering
\includegraphics[width=0.95\columnwidth]{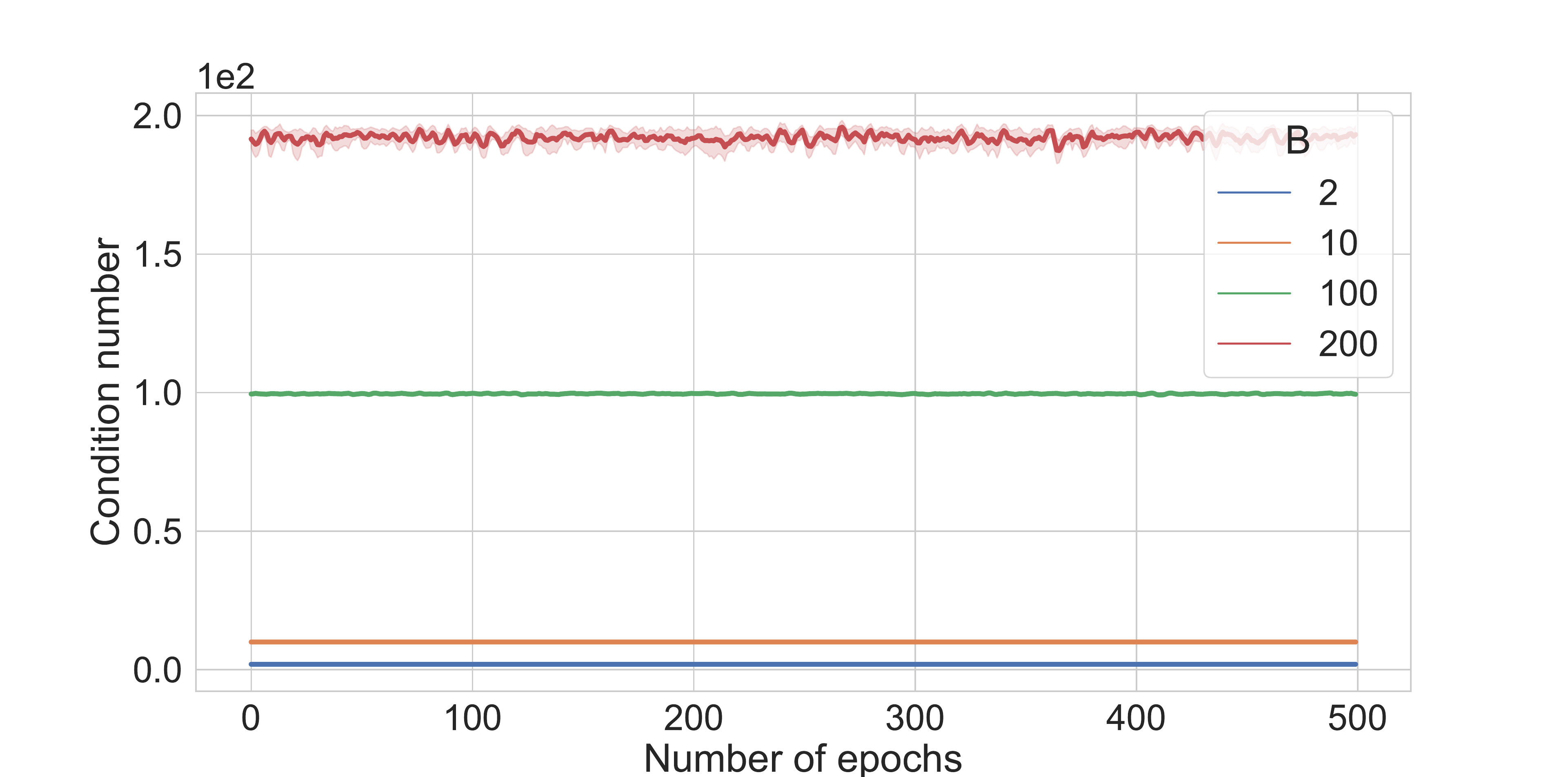}
\caption{Synthetic Data (50x50) -- Effect of attack methods on condition number averaged over 30 samples for the defended model. The input matrix $A$'s condition number is kept controlled at a level below the bound $B$, thus preventing failures in the model.}
\label{fig:50x50attackdef}
\end{figure}

The above models seem to train as well or better when the defense is applied. Note that the best performing models for train loss did not result in the best performing models for test loss at the end of 1000 iterations - this is likely due to over-fitting. The two matrices differ in being nonsquare
and square matrix (non-square $m = 40$ has fewer
rows than $m=50$) and the spread of condition numbers of
the randomly generated ground truth matrix differs with $|n-
m|$ (see~\cite{chen2005condition}), which we believe
makes for different behavior with varying B for the two different sized matrices (Fig.~\ref{fig:40x50trainplot} and ~\ref{fig:50x50trainplot}). The interactions are
also likely complex when the condition number is bounded
too small, e.g., for $m = 50$, the $B = 2$ case may benefit
from more stable gradients but the $B=10$ case might have
neither stable gradients nor the ability to be close to the true
matrix, and hence performed worse.

\subsection{Jigsaw Sudoku}
\label{appendix:jigsaw_sudoku}

\paragraph{Code:} Reproduction code can be found in the folder \texttt{jigsaw\_sudoku}. \texttt{train\_nodef.py} and \texttt{train\_defense.py} trains the undefended model and defended model respectively, given the parameters, and \texttt{attack\_num\_cond\_fn.py} runs all attack methods on a given model. Instructions on how to run the scripts are in \texttt{README.md}. We include the script used to create the dataset in \texttt{create.py} which adapts the MNIST dataset \cite{lecun-mnisthandwrittendigit-2010} and tiles them to form a puzzle using an modified script based on \cite{amos2017optnet}. All results can be found under \texttt{results\_data}. 

\begin{figure}[ht]
\centering
\subfigure{
\includegraphics[width=.21\columnwidth]{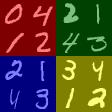}
}
\subfigure{
\includegraphics[width=.21\columnwidth]{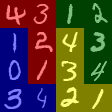}
}
\subfigure{
\includegraphics[width=.21\columnwidth]{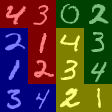}
}
\subfigure{
\includegraphics[width=.21\columnwidth]{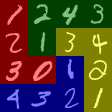}
}
\caption{Jigsaw Sudoku puzzle dataset. Images generated by randomly selecting corresponding digits from the MNIST dataset. The zeroes represent the cell that needs to be filled. }
\label{fig:jigsaw_sudoku_dataset}
\end{figure}

% \begin{figure}[t]
% \centering
% \subfigure{
% \includegraphics[width=0.4\columnwidth]{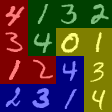}
% }
% \hspace{20em}% Space between image A and B
% \subfigure{
% \includegraphics[width=.30\columnwidth]{images/jigsaw_sudoku_images/adv_ex6_ConditionNumberLoss_0.jpg}
% }
% \subfigure{
% \includegraphics[width=.30\columnwidth]{images/jigsaw_sudoku_images/adv_ex6_ZeroSingularValue_0.jpg}
% }
% \subfigure{
% \includegraphics[width=.30\columnwidth]{images/jigsaw_sudoku_images/adv_ex6_ZeroRowLoss_0.0.jpg}
% }
% \caption{\textbf{Jigsaw Sudoku Attack Images.} Top row shows the original images. Bottom row shows the images produced after running the attack successfully for CondGrad, ZeroSingularValue and ZeroRowCol respectively. We see that ZeroSingularValue looks the closest to the original image, as predicted by our theoretical discussion in our attack Section in the main paper.}
% \label{fig:jigsaw_sudoku_attackimages}
% \end{figure}

\subsubsection{Overall Results}
Graphs are generated from 30 random seeds. Results found in the main paper (on condition number, effect on attacks) are not reproduced here. We showcase the convergence rates of training with the defense and without in Fig. \ref{fig:additional_plots}.  

\begin{figure}[ht]
\centering
\includegraphics[width=0.60\columnwidth]{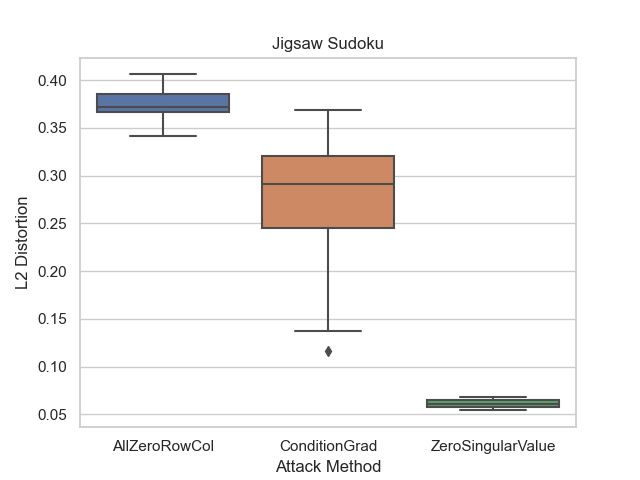}
\caption{Jigsaw Sudoku -- Comparison of L2 distortion measure as proposed in \cite{DBLP:journals/corr/SzegedyZSBEGF13}, which is a normalized form of the L2 magnitude of attack and is defined as $\sqrt{\sum{\frac{(x_i - x^\prime_{i})^2}{n}}}$ for attack perturbations, where $n$ is 37632 for the Jigsaw Sudoku case. We see that \ZeroSingularValue{} produces the smallest perturbations (in terms of magnitude).}
\label{fig:sudokudistortionplot}
\end{figure}

\begin{figure}[ht]
\centering
\subfigure[Train and test accuracy]{
\includegraphics[width=.45\columnwidth]{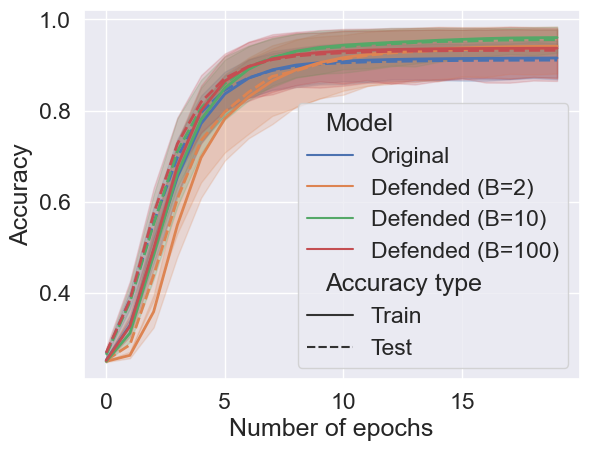}\label{fig:defensetraintest_OLD}
}%
\subfigure[Train and test loss]{
\includegraphics[width=.45\columnwidth]{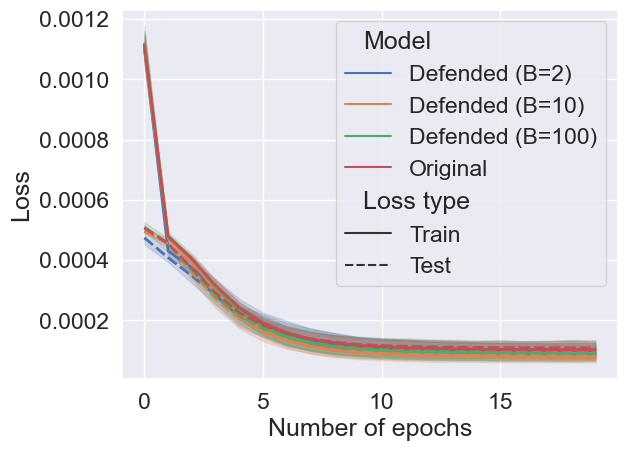}
}%
\caption{Effect of defense in the Jigsaw Sudoku setting. We see that in all the settings, training and test convergence rates for all defense configurations are similar. For more details on how the models performed empirically, please refer to the results in the main paper.}
\label{fig:additional_plots}
\end{figure}

\subsection{Speed Profile Planning}

\paragraph{Code:} Reproduction code can be found under the folder \texttt{quadratic\_speed\_planning}. \texttt{models.py} trains the undefended model and defended models, given the parameters, and \texttt{attack.py} runs all attack methods on a given model. Instructions on how to run the scripts are in \texttt{README.md}. All results can be found under \texttt{results}. We include the script used to create the dataset in \texttt{create.py} which adapts the a subset of the BelgiumTS dataset \cite{Timofte-MVA-2011} and augments it with a randomized state of the autonomous vehicle.

\begin{figure}[ht]
\centering
\subfigure{
\includegraphics[width=.1\columnwidth]{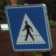}
}
\subfigure{
\includegraphics[width=.1\columnwidth]{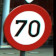}
}
\subfigure{
\includegraphics[width=.1\columnwidth]{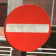}
}
\subfigure{
\includegraphics[width=.1\columnwidth]{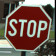}
}
\subfigure{
\includegraphics[width=.1\columnwidth]{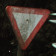}
}
\caption{Speed profile planning images. We sample images from the BelgiumTrafficSign dataset from the following classes shown above: \emph{Pedestrian Crossing, Speed Limit, No Entry, Stop,} and \emph{Yield}. }
\label{fig:speed_profile_dataset}
\end{figure}

% \begin{figure}[t]
% \centering
% \subfigure{
% \includegraphics[width=0.4\columnwidth]{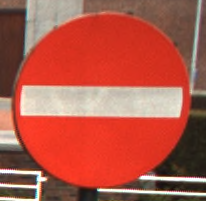}
% }
% \hspace{20em}% Space between image A and B
% \subfigure{
% \includegraphics[width=.30\columnwidth]{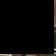}
% }
% \subfigure{
% \includegraphics[width=.30\columnwidth]{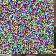}
% }
% \subfigure{
% \includegraphics[width=.30\columnwidth]{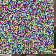}
% }
% \caption{\textbf{Speed Profile Planning Attack Images.} Top row shows the original images. Bottom row shows the images produced after running the attack successfully for \ConditionGrad{}, \ZeroSingularValue{} and \AllZeroRowCol{} respectively. The model infers decisions taking into account both internal state and external sensors, and in this restricted attack model, we only control the outside image and none of the internal state. \ConditionGrad{} looks to be a plausible image that can occur in the real world. }
% \label{fig:jigsaw_sudoku_attackimages}
% \end{figure}

\subsubsection{Loss Functions}

Let $V = \{c, d, f\}$, where $c, d, f \in \RR_{\geq 0}$. Here, $c$ is the current speed of the vehicle (in meters per hour), $d$ is the distance to the destination (in meters), and $f$ is the distance to the vehicle ahead (in meters). Let $z$ be the class of the traffic sign. The loss functions are defined based on aspects of the decision on acceleration $a$ and target speed $s$ that we want to minimize, and a weighted sum over the loss functions $\mathcal{L}_{\textit{total}}$ is used as the final loss.

$\mathcal{L}_{\textit{safety}}$, loss due to impact of collision with vehicle ahead. We define it as safe (i.e. loss of 0) if our acceleration and current speed does not cause us to collide with the car ahead within 2 seconds. As a simplifying assumption. we assume the car ahead is travelling as the same speed as our car. This loss is defined as
\begin{equation}
\mathcal{L}_{\textit{safety}} = max(0, 2a - f)
\end{equation}

$\mathcal{L}_{\textit{comfort}}$, loss due to discomfort from acceleration is defined similarly as in prior work \cite{temp_Speedprofileplanning} as some threshold above a maximum comfortable threshold $\hat{a}$, where
\begin{equation}
\mathcal{L}_{\textit{comfort}}= max(0, |a| - \hat{a})
\end{equation}
In our work, we use the same threshold as in \cite{temp_Speedprofileplanning} as $2.5m/s^2$.

$\mathcal{L}_{\textit{distance}}$ loss due to slowing down and taking a longer time to reach the destination. It is defined as the difference in time needed to reach the destination given the new target speed, plus the time needed to accelerate to the new speed. We avoid dividing by zero  by clamping to small values.
\begin{equation}
\mathcal{L}_{\textit{distance}}= -(d/max(c-s,-1)) + max(s-c, 1)/max(a, 1)
\end{equation}

$\mathcal{L}_{\textit{violation}}$ loss due to traffic violation is when our target speed exceeds the stipulated speeds in size $z$. We penalize any speed exceeding the stipulated limits, as below 
\begin{equation}
\mathcal{L}_{\textit{violation}}=max(0, s-l_z)
\end{equation}
where $l_z$ is $0km/h, 0km/h, 10km/h, 20km/h, 50km/h$ where $z$ is \texttt{Stop}, \texttt{No Entry}, \texttt{Pedestrian Crossing}, \texttt{Yield} and  \texttt{Speed Limit} respectively.

\subsubsection{Results}

We run the training with the following parameters. We set $w_a = 1, p=0.5, w_s=1000$ for all our experiments .We weigh safety and traffic violations higher as we deem them more serious, so we use the weights $k_1 = 100, k_2=0.1, k_3=\expnumber{1}{-3}, k_4=10$ in our experiments. We randomly generate data such that $c \leq 100000$, $d \leq 10000$, and $f \in [0, d]$.

We initially experimented with ReLU activations, but found that the undefended model was not able to train in a stable way in this setting because ReLU often returned 0 as the gradient during backpropagation leading to higher risk of divide by zero errors. We combated this with Continuously Differentiable Exponential Linear Units (CeLU)~\cite{barron2017continuously} which gave us more stable gradients in training. We first enumerate the results for both the ReLU and CeLU activations below.

\textbf{ReLU Activation Results: } We first see that in this setting, the training of the undefended performs better on average (from train loss plot in Fig. \ref{fig:qptrain} and test loss plot in Fig. \ref{fig:qptest}). However, this is with an important caveat that we discard unsuccessful runs, only 12.8\% of the runs were successful (hence we ran over 250 random seeds to achieve the 30 runs needed here) . Further, sufficiently tune values of $B$ can perform similarly, and the best performing models of $B=2$ performs better than all runs in the original model (5.01676 vs. 5.01677 in the undefended case). 

\begin{figure}[ht]
\centering
\includegraphics[width=0.85\columnwidth]{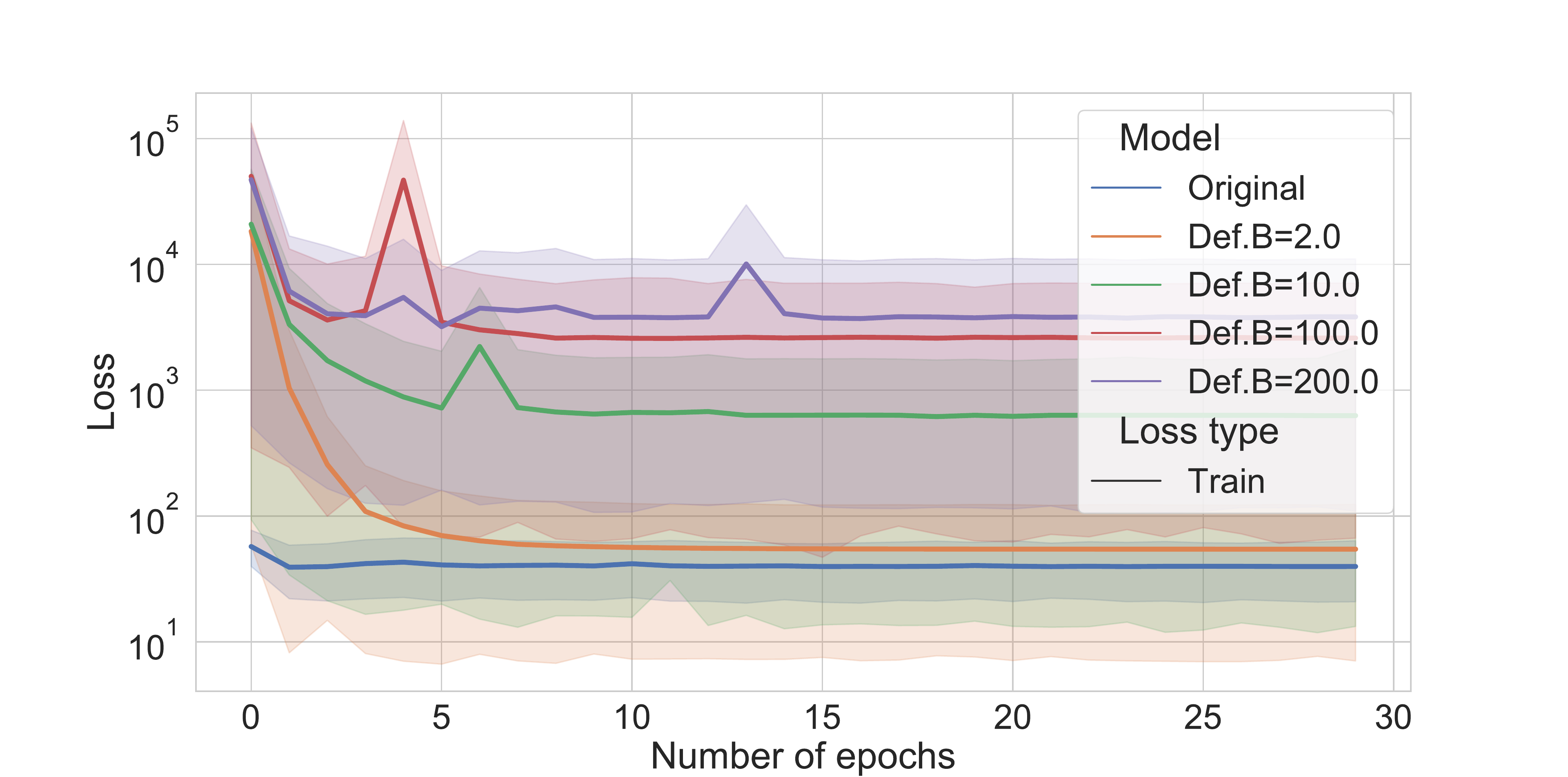}
\caption{Speed Profile Planning -- Training loss averaged over 30 runs. Note that while the original performs the best, these runs only included successful runs of the original training, which accounts for 12.8\% of the total runs. Further, we did not tune B to the lowest possible value.  Note that the best performing model for B=2 performs slightly better than the original model.}
\label{fig:qptrain}
\end{figure}
\begin{figure}[ht]
\centering
\includegraphics[width=0.75\columnwidth]{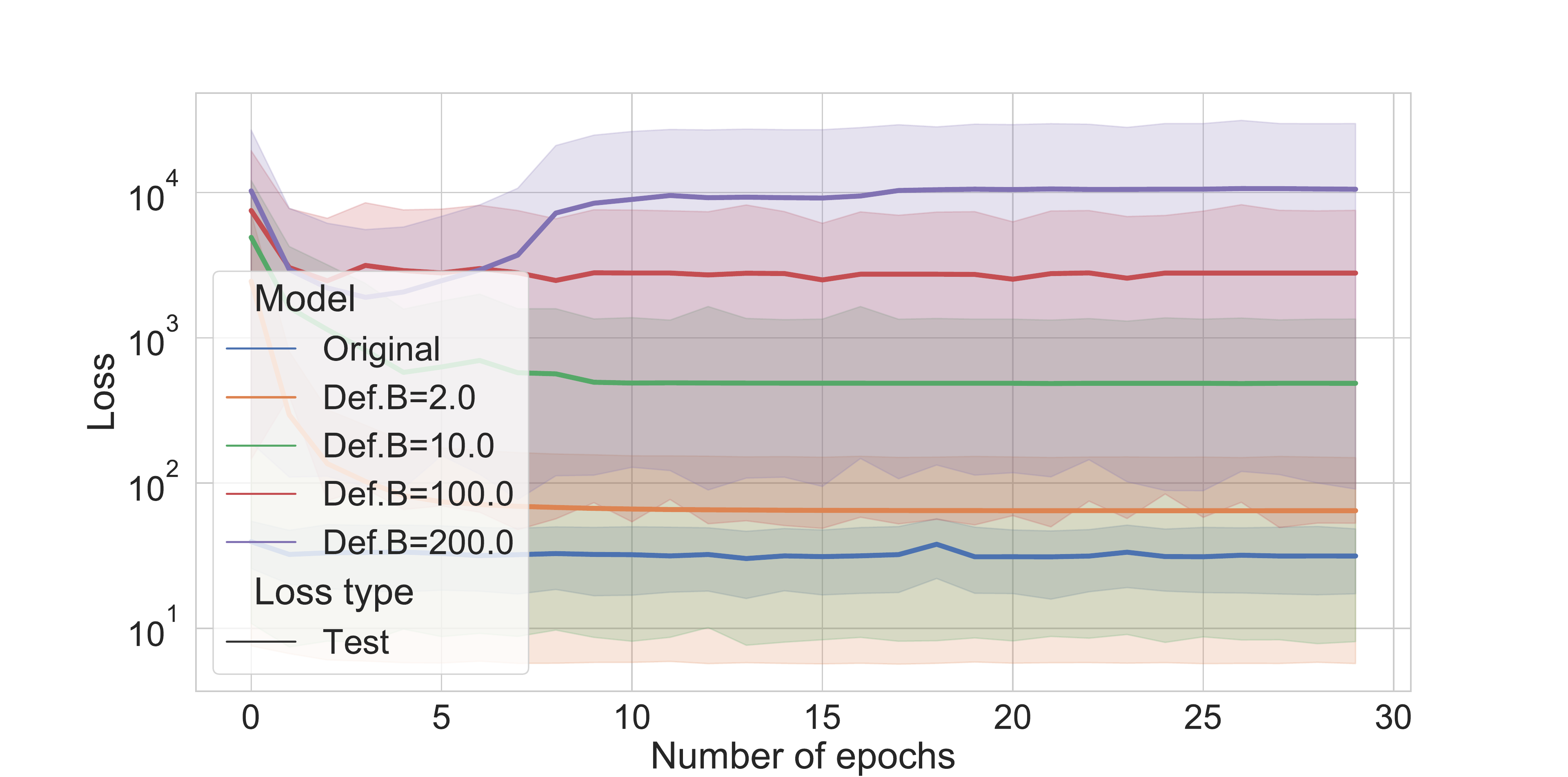}
\caption{Speed Profile Planning -- Test loss averaged over 30 runs. Note that while the original performs the best, these runs only included successful runs of the original training, which accounts for 12.8\% of the total runs. Further, we did not tune B to the lowest possible value. Note that the best performing model for B=2 performs slightly better than the original model.}
\label{fig:qptest}
\end{figure}

We also found that in the speed planning setting, the optimization layer gave an incorrect output that violated the constraints  $38.6\% \pm 3.16\%$ of the time (averaged across all models), despite us explicitly coding in constraints to enforce a non-negative speed. Any usage of the output would be disastrous in a critical setting. 

We performed attacks using all the attack methods on 30 images. The average and standard deviation are not plotted due to the amount of \texttt{inf} and  \texttt{NaN} in the condition numbers at different epochs for different runs, but we observe from the data that all methods can result in \texttt{NaN} in the output on the model that was attacked.

Finally, the defense works as expected, and the condition numbers are regulated at the value of $B$, giving us a plot similar to that in Fig. \ref{fig:50x50attackdef}.

\begin{figure}[ht]
\centering
\includegraphics[width=0.55\columnwidth]{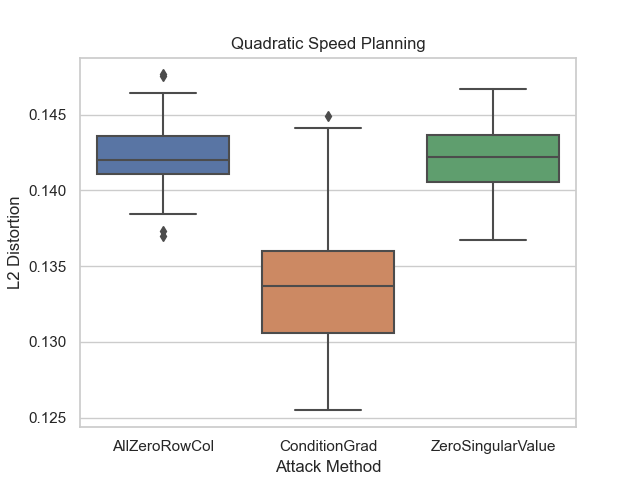}
\caption{Speed Profile Planning -- Comparison of L2 distortion measure as proposed in \cite{DBLP:journals/corr/SzegedyZSBEGF13}, which is a normalized form of the L2 magnitude of attack and is defined as $\sqrt{\sum{\frac{(x_i - x^\prime_{i})^2}{n}}}$for attack perturbations, where $n$ is 9411 for the Speed Profile Planning case. We see that \ConditionGrad{} produces the smallest perturbations, followed by \ZeroSingularValue{}, then \AllZeroRowCol{}. We hypothesize that the models are more brittle and therefore amenable to attacks by \ConditionGrad{} in this case.}
\label{fig:qpdistortionplot}
\end{figure}

\textbf{CeLU Activation Results: } We first see that in this setting, the training of the models with the defense performs better (from train loss plot in Fig. \ref{fig:qp_celu_train} and test loss plot in Fig. \ref{fig:qp_celu_test}). This supports our claim of the stabilizing effect of the defense on gradients, leading to healthier and more stable backpropagation and better performance.

\begin{figure}[ht]
\centering
\includegraphics[width=0.85\columnwidth]{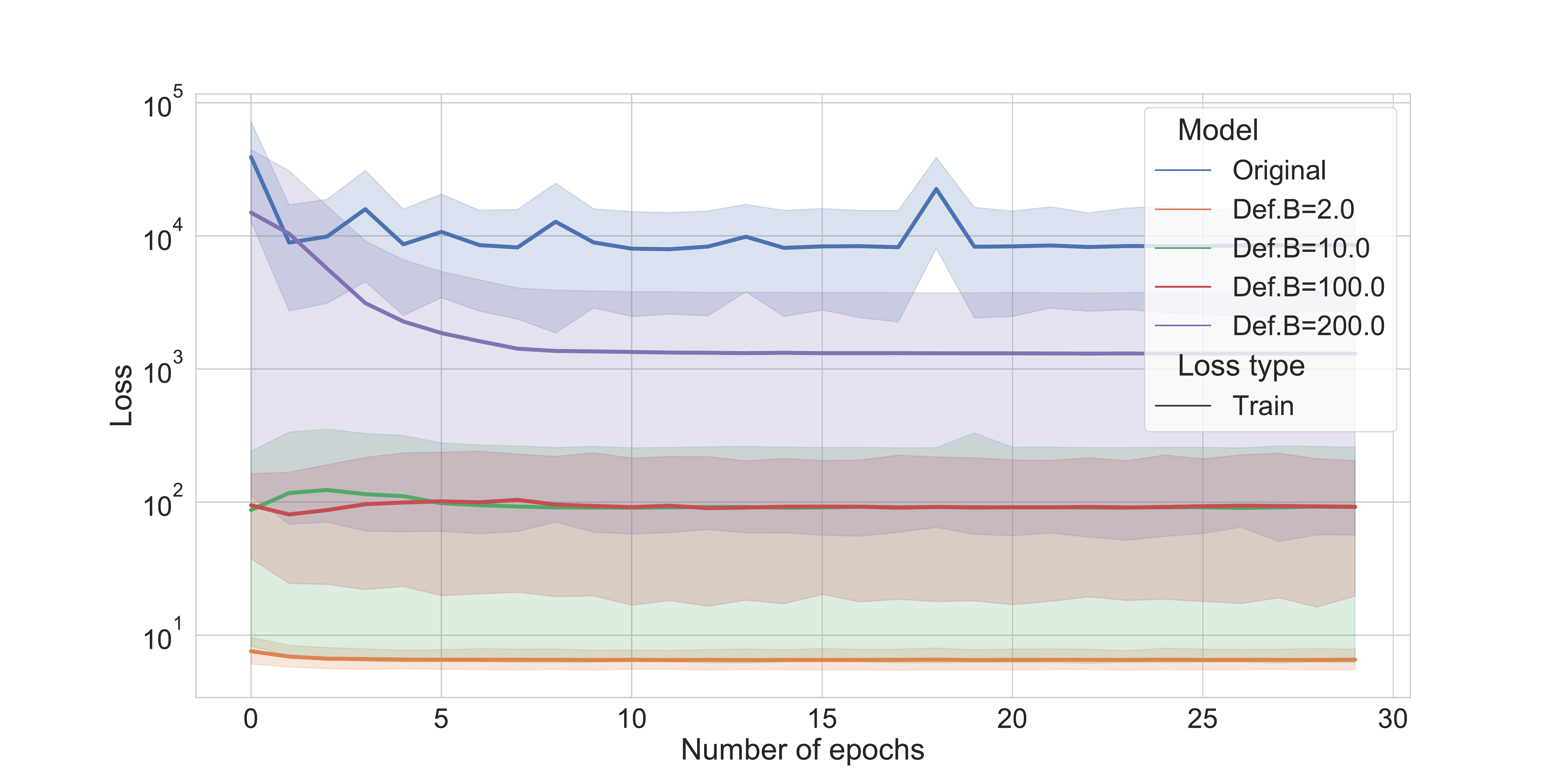}
\caption{Speed Profile Planning -- Training loss averaged over 30 runs. Loss is plotted on a log scale. Note that $B=2$ performs the best in this case, and the original model performs much more poorly, likely due to how unstable the training is.}
\label{fig:qp_celu_train}
\end{figure}
\begin{figure}[ht]
\centering
\includegraphics[width=0.85\columnwidth]{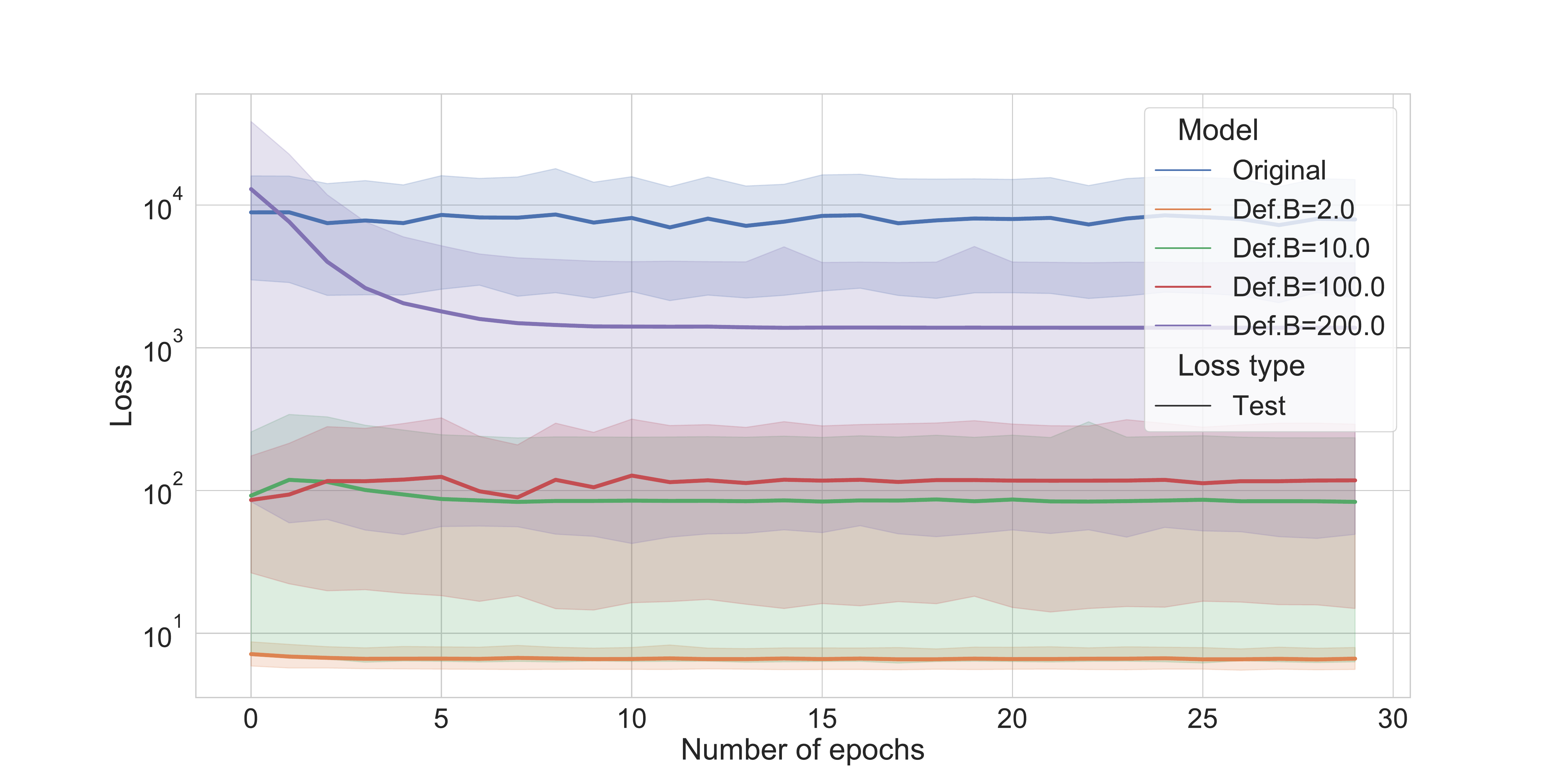}
\caption{Speed Profile Planning -- Test loss averaged over 30 runs. Loss is plotted on a log scale. Note that $B=2$ performs the best in this case, and the original model performs much more poorly due to the high condition number of the matrix.}
\label{fig:qp_celu_test}
\end{figure}

We performed attacks using all the attack methods on 30 images. The average and standard deviation are not plotted due to the amount of \texttt{inf} and  \texttt{NaN} in the condition numbers at different epochs for different runs, and we plot the L2 distortion of the successful attacks in Fig.~\ref{fig:qp_celu_distortionplot}, which shows reasonable deviations from the original image.

\begin{figure}[ht]
\centering
\includegraphics[width=0.5\columnwidth]{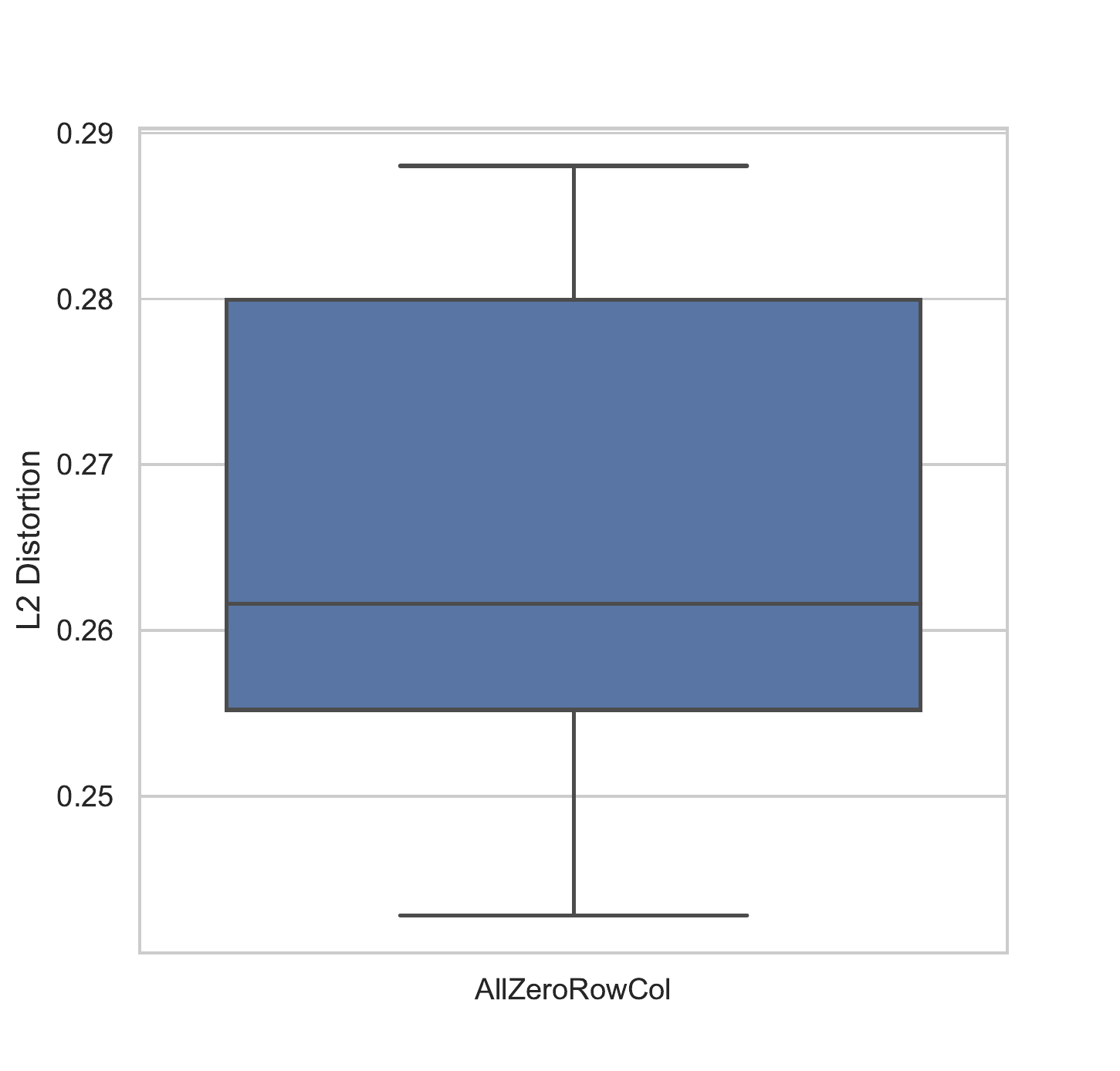}
\caption{Speed Profile Planning -- Comparison of L2 distortion measure as proposed in \cite{DBLP:journals/corr/SzegedyZSBEGF13}, which is a normalized form of the L2 magnitude of attack and is defined as $\sqrt{\sum{\frac{(x_i - x^\prime_{i})^2}{n}}}$for attack perturbations, where $n$ is 9411 for the Speed Profile Planning case. We see that  \AllZeroRowCol{} reasonable perturbations, even when the attacker's ability to modify the image is unconstrained.}
\label{fig:qp_celu_distortionplot}
\end{figure}

Finally, the defense works as expected, and the condition numbers are regulated at the value of $B$, giving us a plot similar to that in Fig. \ref{fig:50x50attackdef}.

\subsection{Other Activation Functions}
\label{appendix:activation_functions}

For completeness, we explore if other loss functions can be attacked. We first note that the exploration here is not evaluated on all the scenarios in our paper, but rather, they are evaluated on a small toy example to show that the attack is indeed possible.

% \paragraph{LeakyReLU} One natural extension would be to apply the same attack techniques in the ReLU setting to LeakyReLU. For this, we replaced the activation function in the Synthetic Data example, and trained it with the same settings. We then attacked it using the \AllZeroRowCol{}. Fig. \ref{fig:leakyrelu} shows the result of attacking on the condition number:

% \begin{figure}[t]
% \centering
% \includegraphics[width=0.95\columnwidth]{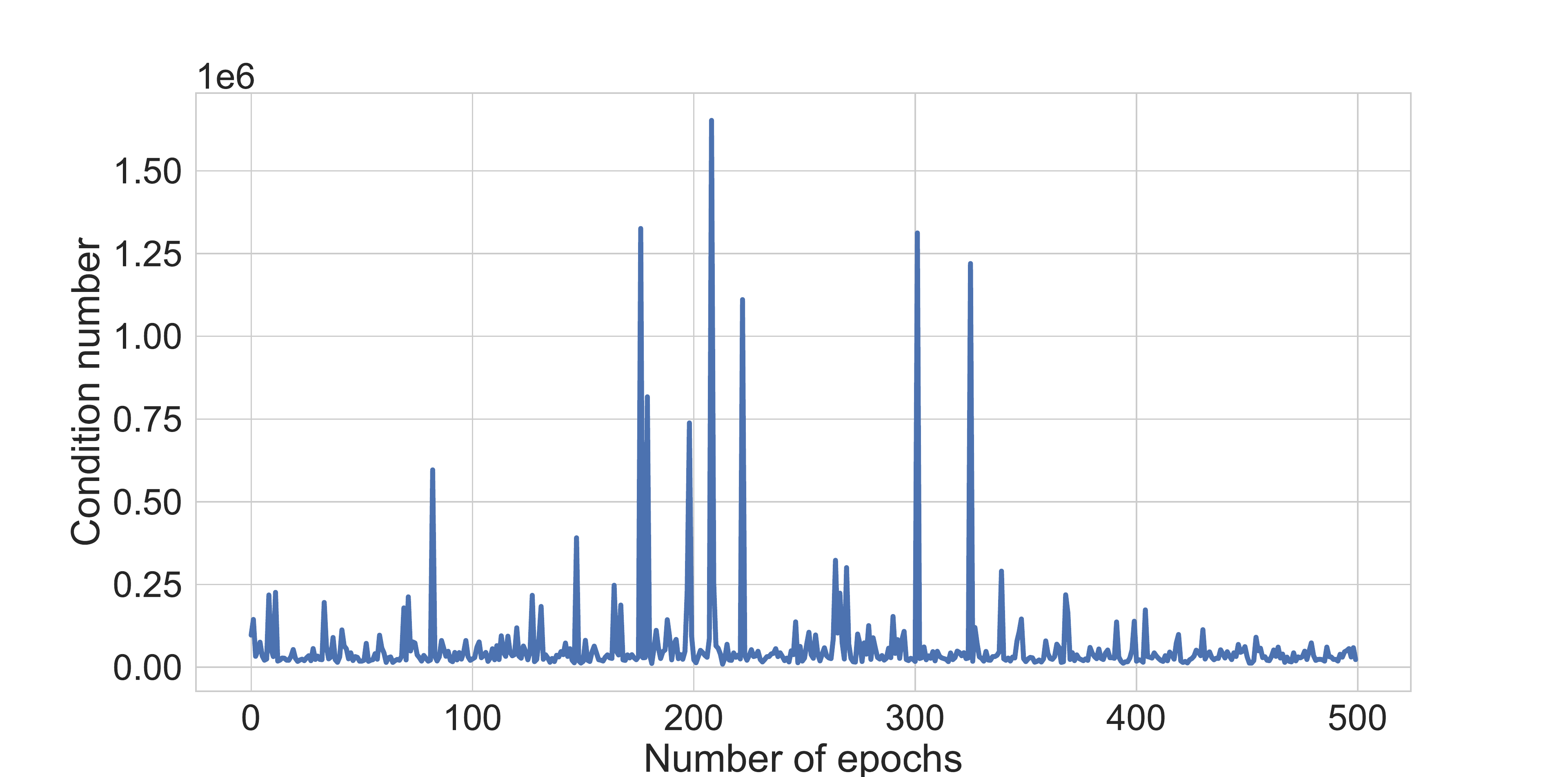}
% \caption{Condition number under \AllZeroRowCol{} attack, with the LeakyReLU activation function. We note that the condition number is unable to reach a high enough number to cause infeasibility, even when we search for a singular matrix directly through \AllZeroRowCol{}.}
% \label{fig:leakyrelu}
% \end{figure}

% This is not a straightforward extension as ReLU activated layers are able to more easily produce zeroes, however we do believe exact solvers may be able to find an input that creates the singular matrix that we need. We leave this exploration to future work.

\paragraph{Tanh} The tanh function is an interesting activation function (similar to CeLU) which could be used since it allows for both negative and positive values, rather than just non-negative values like ReLU. We explore this function in a restricted setting attacking a $\mathbb{R}^{2\times 2}$ matrix, and the exploration can be found in \texttt{attacks/tanh\_exploration/tanh.ipynb}, where we show that we achieve a successful attack in this setting in Fig \ref{fig:attacking_tanh}.

\begin{figure}[ht]
\centering
\subfigure{
\includegraphics[width=.35\columnwidth]{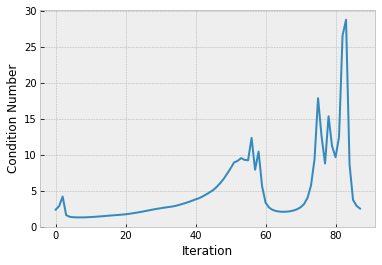}
}
\subfigure{
\includegraphics[width=.35\columnwidth]{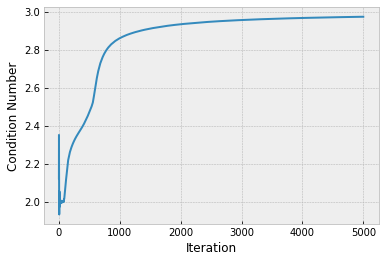}
}
\caption{Condition number in the tanh setting. The left diagram shows the effect of \ConditionGrad{} on a tanh activated neural network, with it successfully finding an attack sample. We note that while the condition number of the the constraint matrix was low in the attack sample, the intermediate matrix used in \texttt{qpth} was ill-conditioned which resulted in the \texttt{NaN}. The right diagram shows the defense with $B=2$ attacked over 5000 epochs, and in this case the attack was unsuccessful.}
\label{fig:attacking_tanh}
\end{figure}

\subsection{Targeting $0_{m\times n}$} 
\label{appendix:targetzero}

Large changes to $A$ are frequently restricted by the expressive power of the portion of the network that produces $A$, since changes to $A$ depend on perturbations in $u$. To explore the feasibility of making arbitrary changes to $A$, we conducted an experiment to explore the trivial attack (setting the target matrix $A'$ to be the all zero matrix and doing a gradient-descent based search on it) and found that with the same learning rate and running for 5000 epochs, \AllZeroRowCol{} manages to find attack inputs while the trivial attack fails to find such any input that results in an all zero $A$ in the Jigsaw Sudoku setting. The code for the experiment can be found under the folder \texttt{attack\_zero}. Instructions on how to run it are found in \texttt{README.md}.

\subsection{SVD Implementations}

Differing SVD implementations are known to behave differently based on the condition number and we discuss some of the ramifications on our results here. We use the PyTorch's default SVD implementation which uses \texttt{gesvdj} on the GPU and \texttt{gesdd} on the CPU, which could potentially vary in performance especially under large condition numbers. Empirically, we find that the different SVD implementations in PyTorch won't differ too much in the final network performance. For reference, in 10 randomly seeded runs on the 50x50 synthetic data experiment, the test loss was 4.01 $\pm$ 0.20 and 4.21 $\pm$ 0.52 for \texttt{gesvdj} on the GPU and \texttt{gesdd} on the CPU respectively for a very high condition number bound B of $10^7$. The code for the experiment can be found under the folder \texttt{svd}.

\subsection{Lipschitz Defense}

Lipschitz-constrained networks have been proposed as a means of defending against adversarial examples, and we discuss its applicability here. First, a known result is that the Lipschitz constant of a matrix is the highest singular value $\sigma_{\max}$ (for 2-norm) and the condition number is $\kappa_2 = \sigma_{\max} / \sigma_{\min}$. We can still construct inputs here that make $\sigma_{\min} = 0$ and in such cases controlling $\sigma_{\max}$ does not improve condition number at all. Of course, if $\sigma_{\min}$ is not exactly zero, controlling $\sigma_{\max}$ can help in lowering the condition number. In contrast, we directly control $\sigma_{\min}$ in our defense we always produce a well-conditioned A with any given desired bound on condition number. We ran experiments on the 50x50 synthetic data setting over 10 random seeds and found that the it did not perform as well as the setting without the defense (test loss of 14.51 $\pm$ 15.66 vs 4.43 $\pm$ 0.93), and further \AllZeroRowCol{} managed to find attack inputs for all models even when $\sigma_{\max}$ is controlled. The code for the experiment can be found under the folder \texttt{lipschitz}.

\subsection{$\eta I$ Defense}
\label{appendix:etai}

We discussed in detail the $\eta I$ Defense in the main paper, particularly its lack of general applicability and lack of theoretical guarantees. Further, what prohibited us from using this as a baseline was that $A$ is often a non-square matrix (this arises naturally in Jigsaw Sudoku, and we exercised this in our synthetic dataset as well). However, for the sake of completion, we provide the result for the 50x50 matrix synthetic data case, which we will later include in the appendix. We chose a small $\eta$ (1e-8, similar to the default penalty term for stability used in PyTorch) ran the experiment over 10 random seeds - we find that the test performance is worse than the original, with a slightly higher loss at 4.86 $\pm$ 1.74 vs 4.43 $\pm$ 0.93, and is also much higher than the case where B=200 (3.93 $\pm$ 0.07). 

However, we find that the defense does indeed work as one would expect and prevents attack inputs, at least empirically in the square matrix case, but as mentioned in the paper we don't have the same worst-case theoretical guarantees as the SVD defense as well as non-applicability of this baseline defense for non-square matrices. 

The code for the experiment can be found under the folder \texttt{etaI}.

\subsection{Constructing Semantically Meaningful Images}

We showcase that with some hand tuning of parameters of the attack, we are able to get semantically meaningful images for some models, employing an epsilon bound similar to the highest (0.25) proposed in \cite{43405} , as well as via restricting the attack to some color channels (Fig.~\ref{fig:attacks_small_jigsawsudoku}).

\begin{figure*}[ht]
\centering
\begin{minipage}{.6\textwidth}
\centering
    \includegraphics[width=.23\columnwidth]{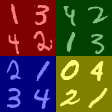}
    \includegraphics[width=.23\columnwidth]{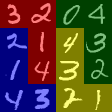}
    \includegraphics[width=.23\columnwidth]{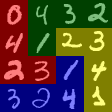}
    \includegraphics[width=.23\columnwidth]{images/photorealism/50000.png}
\end{minipage}
\begin{minipage}{.6\textwidth}
\centering
\includegraphics[width=.23\columnwidth]{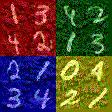}
\includegraphics[width=.23\columnwidth]{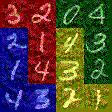}
\includegraphics[width=.23\columnwidth]{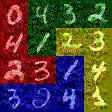}
\includegraphics[width=.23\columnwidth]{images/photorealism/adv_50000_linf_0.32_2channel_attack.jpg}
\end{minipage}

\caption{Jigsaw Sudoku images. For the first 3 columns, top row shows original images $u$, bottom row shows attack images $u'$ which satisfy $u' = u + \delta, \norm{\delta}_{\infty} \leq 0.2$. For the last column, we attack specific colors in Jigsaw Sudoku images. Top shows original image $u$; right shows $u' = u + \delta, \norm{\delta}_{\infty} \leq 0.32$, plus a restriction on only attack the 1st and 3rd color channels. The attack image looks less grainy in this formulation.We note that images in this setting are represented by a $3 \times 112 \times 112$ tensor, with pixel values normalized from $[0,1]$. All attacks were found using \AllZeroRowCol{}.}
\label{fig:attacks_small_jigsawsudoku}
\end{figure*}

\begin{figure}[ht]
\centering
\begin{minipage}{.51\textwidth}
\centering
\subfigure{
\includegraphics[width=.12\columnwidth]{images/photorealism/236.jpg}
}
\subfigure{
\includegraphics[width=.12\columnwidth]{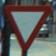}
}
\subfigure{
\includegraphics[width=.12\columnwidth]{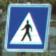}
}
\end{minipage}
\begin{minipage}{.51\textwidth}
\centering
\subfigure{
\includegraphics[width=.12\columnwidth]{images/photorealism/adv_236.jpg}
}
\subfigure{
\includegraphics[width=.12\columnwidth]{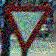}
}
\subfigure{
\includegraphics[width=.12\columnwidth]{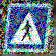}
}
\end{minipage}

\caption{Speed planning attack images. Top row shows original images $u$, bottom row shows attack images $u'$ which satisfy $u' = u + \delta, \norm{\delta}_{\infty} \leq 0.32$. We note that images in this setting are represented by a $3 \times 56 \times 56$ tensor, with pixel values normalized from $[0,1]$. Images may look worse than they appear due to artifacts from stretching the attack perturbations since input images are of low resolution. All attacks were found using \AllZeroRowCol{} in the ReLU setting.}
\label{fig:attack_speedplanning}
\end{figure}

General photorealistic attack images may be possible with more machinery, using unrestricted colorization/texture transfer techniques as in \cite{DBLP:conf/iclr/BhattadCLLF20} and we leave exploration and implementations of such techniques to future work.

The code to create the images can be found under the folder \texttt{photorealism}.

\subsection{Results for Maximizing Model Output}
\label{appendix:maxpgd}

We consider the scenario where an attacker aims to produce a NaN output via maximizing the output of the model. We apply this to the Jigsaw Sudoku setting as per Section~\ref{sec:jigsawsudoku}. However, we modify the goal of the adversary for this scenario. Here, the goal is to cause an overflow in the parameters so that output of the optimization layer would be numerically unstable and result in undefined behavior, leading to NaNs. We run the attack for 1000 iterations, optimizing to maximize the output of the model (measured as the absolute sum of all components of the output of the optimization layer). We run this attack for 30 different images, and plot the mean and standard errors.

The results are presented in Figure~\ref{fig:jigsaw_sudoku_maxpgd}. We first note that none of the attack produced an image that allowed the model to output a NaN. This is not feasible for the following reasons: the attack magnitude saturates at around 1700 on all attack images, thus the postulated overflow never happens. The condition number is also shown to saturate at around 1000, preventing a successful attack. We further note that it is possible to increase the model output without increasing the condition number of $A$, as proven in Lemma~\ref{lemma.1} and hence is not a principled way of attacking the system.

The code for the experiment can be found under the folder \texttt{max\_output}.

\begin{figure}[ht]
\centering
\subfigure{
    \includegraphics[width=.3\textwidth]{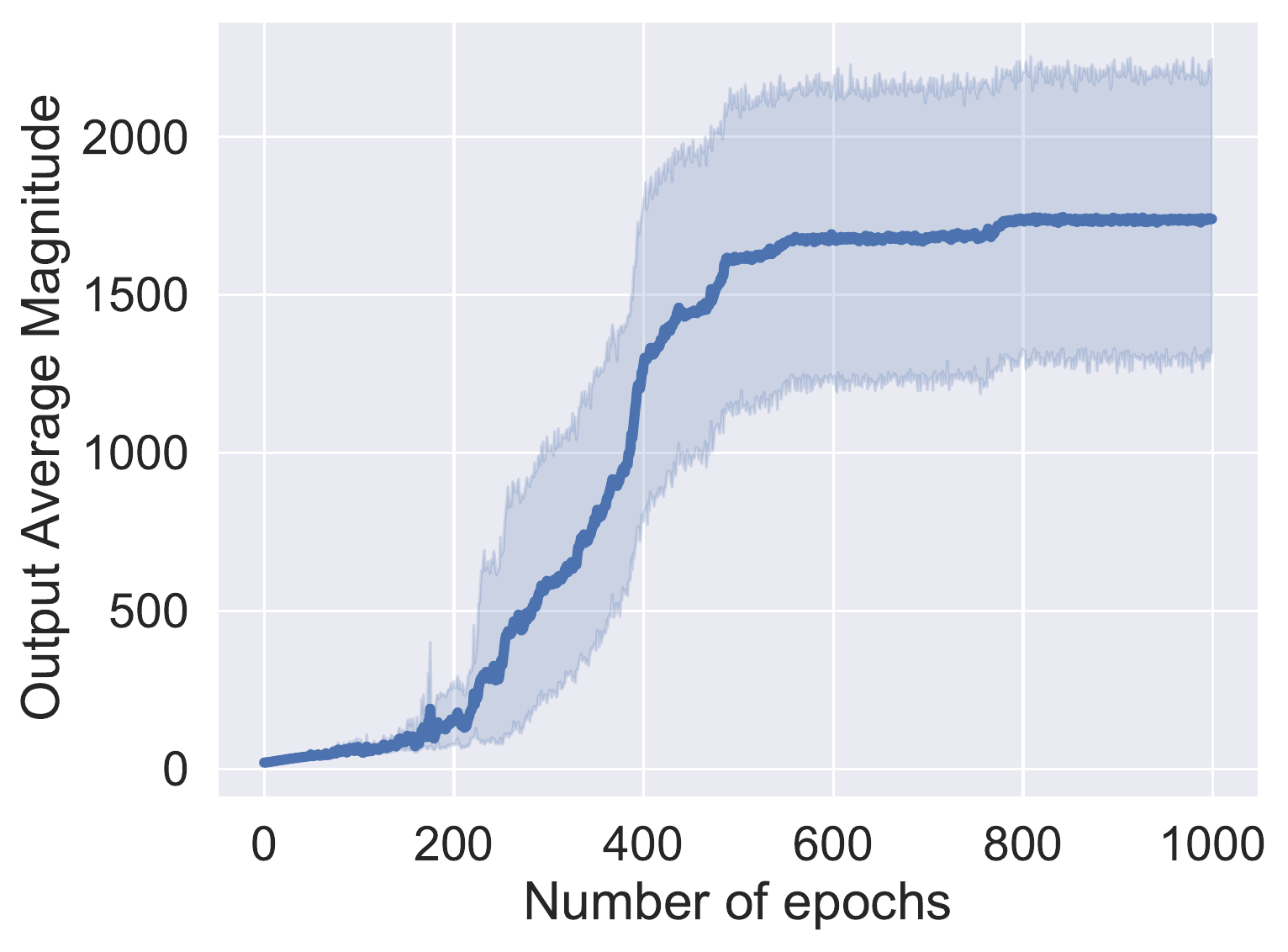}
}
\subfigure{
    \includegraphics[width=.3\textwidth]{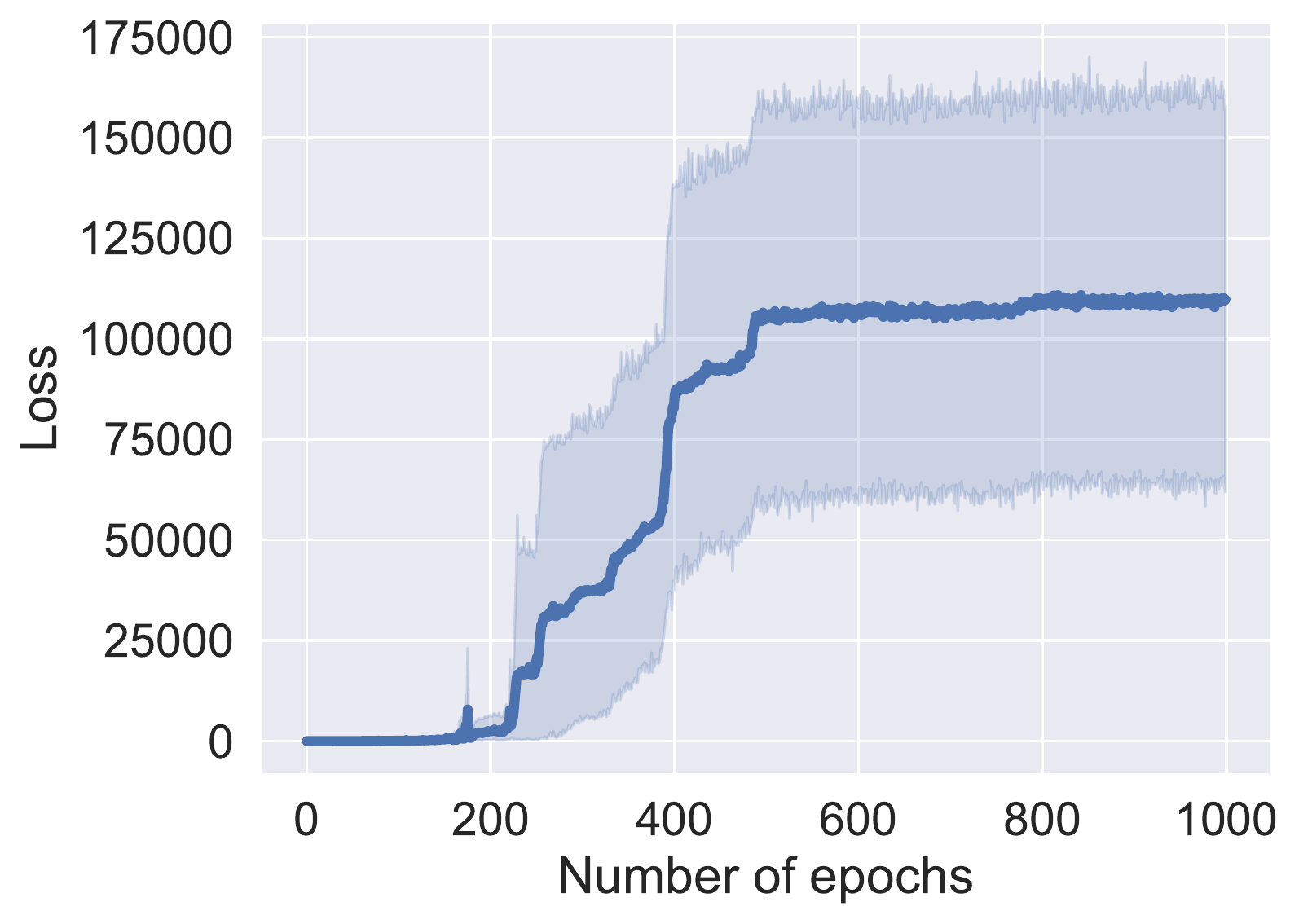}
}
\subfigure{
    \includegraphics[width=.3\textwidth]{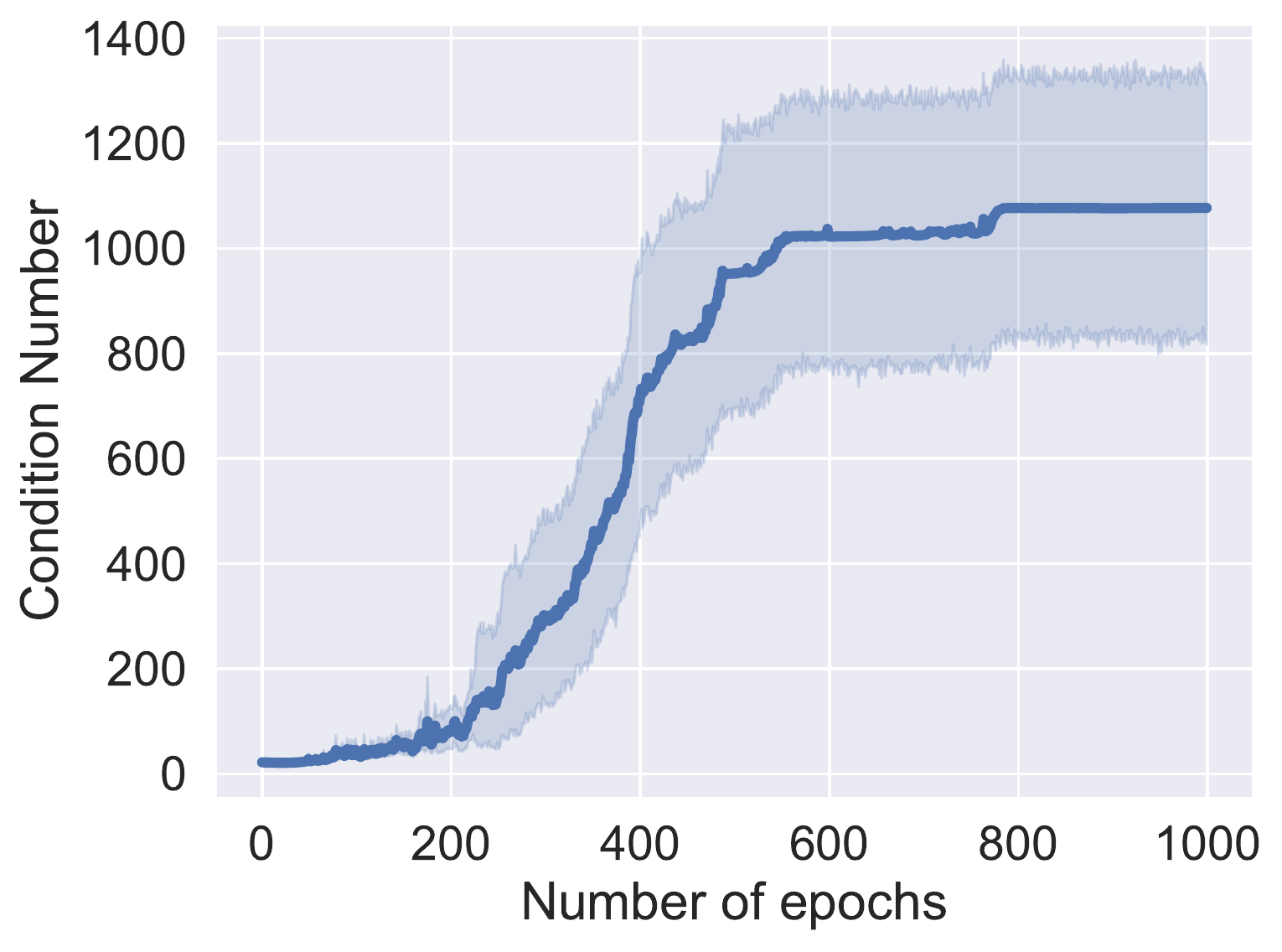}
}
\caption{Jigsaw Sudoku setting attacked using projected gradient descent, with the objective function modified to maximize the output of the optimization layer. We see that the average magnitude of the target saturates, as does the increase in the condition number.}
\label{fig:jigsaw_sudoku_maxpgd}
\end{figure}

% \begin{table*}[t]
% \caption{Comparison of attack success (\% of Successful NaNs) for all methods and datasets.}
% \centering
% \begin{tabular}{l c c c c c c}
% \toprule
%  &
%   \AllZeroRowCol &
%   \ZeroSingularValue &
%   \ConditionGrad \\
  
%   \midrule 
% \multicolumn{1}{l}{\textit{\textbf{Synthetic (m=40, n=50)}}} &
%   100.00 &
%   0.67 &
%   85.33 &
%  \\ 
% \multicolumn{1}{l}{\textit{\textbf{Synthetic (m=50, n=50)}}} &
%   98.00 &
%   0 &
%   0&
%  \\ 
% \multicolumn{1}{l}{\textit{\textbf{Jigsaw Sudoku}}} &
%   100.00 &
%   6.67 &
%   53.33 &
%  \\ 
% \multicolumn{1}{l}{\textit{\textbf{Speed Planning}}} &
%  100.00 &
%  96.67  &
%  96.67  &
% \\
% \multicolumn{1}{l}{\textit{\textbf{Defense (B=2,10,100,200)}}} &
%  \textbf{0.00} &
%  \textbf{0.00}  &
%  \textbf{0.00}  &
% \\
% \bottomrule
% \end{tabular}
% %}
% \label{tab:eq_attack_rate}
% \end{table*}

\section{Proofs Missing in Main Paper and Additional Theory Results}
\label{appendix:proofs}

\begin{lemma*} [Restatement of Lemma~\ref{lemma.1}]
For an optimization $\min_{\{x|Ax = b\}} f(x)$ with $f$ convex, the solution value (if it exists) can be made arbitrarily large by changing $\theta = \{A,b\}$ but keeping $A$ well-conditioned.
\end{lemma*}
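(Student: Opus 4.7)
The plan is to produce a simple constructive example that simultaneously (i) keeps $A$ perfectly well-conditioned and (ii) forces the optimization value to be arbitrarily large. The natural choice is the identity. Specifically, I would set $A = I_n$, so that $\kappa_2(A)=1$ regardless of $n$, and take $b \in \mathbb{R}^n$ arbitrary. Under the constraint $Ix=b$, the feasible set is the singleton $\{b\}$, so the minimization is trivial and the optimal value is exactly $f(b)$.

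Once this reduction is in place, the remaining step is to note that for any convex $f$ that is unbounded above along some direction $d$ (which is the case for all practical objectives used with optimization layers, and certainly for the objective $\|x\|_2$ used in the paper's synthetic setting), one simply chooses $b = t d$ with $t \to \infty$ to make $f(b)$ arbitrarily large. Throughout this family, $\kappa_2(A)=\kappa_2(I_n)=1$ stays fixed, so $A$ remains maximally well-conditioned while the output grows without bound.

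I would close by pointing out that this shows there is no relationship between the magnitude of the optimization's output value and the conditioning of $A$: one can drive the output to infinity while $\kappa_2(A)$ stays at the minimum possible value of $1$. Hence, any attack strategy based on merely maximizing the model's scalar output (rather than targeting the singularity of $A$ directly) is misaligned with the actual failure mode being studied, which is the motivation for writing down this lemma.

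The only subtlety, and the one I would flag as the main obstacle, is making precise what ``the solution value can be made arbitrarily large'' means when $f$ is an arbitrary convex function: constant convex functions trivially violate the statement. I would handle this by stating the lemma under the mild and standard assumption that $f$ is not bounded above on $\mathbb{R}^n$ (which matches every objective used in the paper), so that the scaling argument above applies directly.
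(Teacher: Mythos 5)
Your proposal is correct, but it proves the lemma by a genuinely different route than the paper. You pick the degenerate instance $A=I_n$ (so $\kappa_2(A)=1$ and the feasible set is the singleton $\{b\}$), reduce the optimal value to $f(b)$, and then push $b$ along a direction where $f$ is unbounded above; this is valid, and your observation that a convex $f$ on $\mathbb{R}^n$ that is unbounded above must be unbounded along some ray makes the last step rigorous. The paper instead keeps an \emph{arbitrary fixed} well-conditioned $A$ (of any shape, in particular the underdetermined $m<n$ case produced by the network) and changes only $b$: starting from a constrained minimizer $x_0$ with $\nabla f(x_0)\neq 0$, it translates the feasible affine set by $k\nabla f(x_0)$, i.e.\ replaces $b$ by $b'=b-kA\nabla f(x_0)$, and uses the gradient inequality plus the first-order optimality condition $\nabla f(x_0)^T(y'-x_0)\geq 0$ to get a new optimal value at least $f(x_0)+k\norm{\nabla f(x_0)}_2^2\to\infty$. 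What each buys: your construction is shorter and makes the ``output large while $\kappa_2=1$'' point as starkly as possible, but it requires a square $A$ and collapses the constraint set to a point, so it does not exhibit the phenomenon within a fixed, possibly non-square constraint shape of the kind the architecture actually emits (indeed, a naive rectangular analogue like $A=[\,I_m\;0\,]$ can fail for some convex $f$, e.g.\ $f(x)=x_n^2$); the paper's argument covers exactly that setting, since $A$ is never altered at all. Finally, your flagged caveat is real but mild: constant convex $f$ defeats both proofs (the paper implicitly needs $\nabla f(x_0)\neq 0$), and since a convex function on $\mathbb{R}^n$ bounded above is constant, your ``not bounded above'' hypothesis is essentially the same non-degeneracy assumption the paper leaves tacit — making it explicit, as you do, is a small improvement in rigor.
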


\begin{proof}
It is always possible to choose $A,b$ such that the unconstrained minimum $x^*$ of $f(x)$ does not lie in $Ax = b$. This can seen by considering that if $Ax^* = b$, then choosing changing $b$ to $b - \epsilon$ makes $x^*$ infeasible. As this restriction does not affect $A$, let us start by choosing any well-conditioned, $A$ such that $x_0$ is the minimum with the constraint $Ax = b$. 
Also, $\nabla f(x_0) \neq 0$ since $x_0$ is not the unconstrained minimum. 
%Then, by convexity $f(y') \geq f(x_0) + \nabla f(x_0) (y' - x_0)$ for any $y'$. 
Further, if $Ay' = b$ and since $x_0$ is the minimum, by optimality condition of convex functions (with convex feasible region) we must have $\nabla f(x_0) (y' - x_0) \geq 0$ for any $y'$ with $A y' = b$. 

Consider the set $\{z ~|~ z = y + k\nabla f(x_0), Ay = b \}$. This set can be succinctly specified as $A y = b - k A \nabla f(x_0) = b'$. Let $y_0$ be the new minimum with this set of constraints. Then, by convexity $f(y_0) \geq f(x_0) + \nabla f(x_0) (y_0 - x_0)$. Since $y_0 = y' + k\nabla f(x_0)$ for some given $y'$ with $Ay'=b$, we have  $f(y_0) \geq f(x_0) + \nabla f(x_0) (y' - x_0) + k||\nabla f(x_0)||_2^2$. We know from last paragraph that $\nabla f(x_0) (y' - x_0) \geq 0$, thus, by choosing large $k$ we have that the output (minimum value) of the optimization can be made as large as possible. However, note that the matrix $A$ remains the same for this new optimization (with constraint $Ay = b'$), thus, forcing a large output of the optimization may not lead to an ill-conditioned matrix.
\end{proof}

\begin{lemma*} [Restatement of Lemma~\ref{lemma:gradcond}]
Let $A \in \mathbb{R}^{m \times n}$ with thin SVD $A = U \Sigma V^T$ and $\sigma_{\max} = \sigma_1 \geq \ldots \geq \sigma_r = \sigma_{\min}$ for $r = \min(m,n)$. Then, $\frac{\partial \kappa_2(A)}{\partial \theta_{i,j}}$ is given by $\tr\Big( \frac{\partial (\lvert\lvert A^{+}\rvert\rvert_2 * \lvert\lvert A\rvert\rvert_2)}{\partial A} \cdot\frac{\partial A}{\partial \theta_{i,j}}\Big)$ where:
\begin{align*}
    &\frac{\partial (\lvert\lvert A^{+}\rvert\rvert_2 * \lvert\lvert A\rvert\rvert_2)}{\partial A}  = 
    B^T   -  (A^{+} C A^{+})^T  + \\
    &\quad  (A^+)^T  A^+ C (I - A^+A)  +  (I - AA^+) C A^+ (A^+)^T
\end{align*}
with $B \!=\! \lvert\lvert A^+ \rvert\rvert_2 V e_1 e^T_1 U^T$\!, $C \!=\! \lvert\lvert A\rvert\rvert_2 U e_r e^T_r V^T $ and $e_i$ is the unit vector with one in the $i^{th}$ position.
\end{lemma*}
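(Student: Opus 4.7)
The plan is to use the matrix chain rule to reduce $\partial \kappa_2(A)/\partial \theta_{i,j}$ to computing the matrix derivative $\partial(\lVert A^+\rVert_2 \cdot \lVert A\rVert_2)/\partial A$, and then apply the product rule to split this into a ``$\lVert A\rVert_2$ part'' and a ``$\lVert A^+\rVert_2$ part.'' The first part is handled by the classical sensitivity formula for a simple singular value, and the second part is handled by combining that same sensitivity formula (applied to $A^+$) with the well-known differential of the Moore–Penrose pseudoinverse.

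First I would handle the $\partial \lVert A\rVert_2/\partial A$ piece. Since $\lVert A\rVert_2 = \sigma_1$ is the largest singular value of $A$, with left/right singular vectors $u_1, v_1$ (i.e.\ the first columns of $U$ and $V$), the standard perturbation result for a simple top singular value gives $\partial \sigma_1/\partial A = u_1 v_1^T = U e_1 e_1^T V^T$. Multiplying by $\lVert A^+\rVert_2$ produces exactly $B^T$, where $B = \lVert A^+\rVert_2 V e_1 e_1^T U^T$, which is the first summand in the claimed formula.

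Next I would handle $\partial \lVert A^+\rVert_2/\partial A$. The crucial observation is that $\lVert A^+\rVert_2 = 1/\sigma_r = \sigma_{\max}(A^+)$, and the top singular value of $A^+$ has left/right singular vectors $v_r, u_r$, so by the same simple-singular-value result $\partial \lVert A^+\rVert_2/\partial A^+ = v_r u_r^T$. To push this derivative through from $A^+$ to $A$, I would invoke the standard differential of the pseudoinverse (valid under constant-rank perturbations):
\begin{equation*}
dA^+ = -A^+ (dA) A^+ + (I - A^+ A)(dA)^T (A^+)^T A^+ + A^+ (A^+)^T (dA)^T (I - A A^+).
\end{equation*}
Substituting this into $d\lVert A^+\rVert_2 = \tr\bigl((v_r u_r^T)^T\, dA^+\bigr)$, applying cyclic invariance of trace, and using the identity $\tr(X (dA)^T Y) = \tr\bigl((YX)^T dA\bigr)$ to reorient the two terms in $(dA)^T$ gives three matrix gradients with respect to $A$. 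Multiplying by $\lVert A\rVert_2$ lets me absorb the factor $\lVert A\rVert_2 u_r v_r^T$ into $C = \lVert A\rVert_2 U e_r e_r^T V^T$. The $-A^+(dA)A^+$ term then contributes $-(A^+ C A^+)^T$, while the two outer-projector terms contribute $(A^+)^T A^+ C (I - A^+ A)$ and $(I - A A^+) C A^+ (A^+)^T$, matching the remaining three summands.

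The main obstacle will be the bookkeeping of transposes and factor orderings when converting the two $(dA)^T$ summands from the pseudoinverse differential into the standard $\tr(G\, dA)$ form; one has to apply the transpose/cyclic trace identities carefully and then identify the resulting $G$ with the correct ordering demanded by the statement. Once those three rearrangements are done, adding the $B^T$ piece from the first step yields the claimed four-term expression for $\partial(\lVert A^+\rVert_2 \cdot \lVert A\rVert_2)/\partial A$, and the outer chain rule $\partial \kappa_2(A)/\partial \theta_{i,j} = \tr\bigl(\partial(\lVert A^+\rVert_2 \lVert A\rVert_2)/\partial A \cdot \partial A/\partial \theta_{i,j}\bigr)$ completes the proof. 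Smoothness holds wherever $A$ has full rank with simple extreme singular values, which is precisely the regime in which $\kappa_2(A)$ is finite and differentiable.
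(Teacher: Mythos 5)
Your proposal is correct and follows essentially the same route as the paper's proof: product rule on $\lVert A\rVert_2\,\lVert A^+\rVert_2$, the sensitivity $u_1v_1^T$ of the extreme singular value (applied to $A$ and, via $\lVert A^+\rVert_2=1/\sigma_r$, to $A^+$), the Golub--Pereyra differential of the pseudoinverse, and trace/transpose manipulations to reach the $\tr(Z\,dA)$ form before the chain rule in $\theta_{i,j}$. The only difference is cosmetic: the paper re-derives the singular-value differential from the SVD differential using skew-symmetry of $U^T dU$, whereas you cite it as a standard perturbation result.
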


\begin{proof}[Proof of Lemma~\ref{lemma:gradcond}]
We use the differential technique in matrix calculus. We freely use the known result that the trace of a product of matrices is invariant under cyclic permutations: $\tr(ABC) = \tr(CAB) = \tr(BCA)$.
Let $A = U \Sigma V^T$, and it is assumed that the $\sigma_1 = \Sigma_{1,1}$ is the largest singular value. There are a total of $r = \min(m,n)$ singular values and $\sigma_r = \Sigma_{r,r}$ is the smallest singular value. Then $\lvert\lvert A\rvert\rvert = e_{1}^T \Sigma e_{1}$, where $e_{i}$ is a one-hot vector of size $r$ with one in position $i$. We first show that $\tr(d(\Sigma)) = \tr(U^T d(A) V)$. To see this, observe that
$$
d(A) = d\big(U \Sigma V^T) = d\big(U) \Sigma V^T + U d(\Sigma) V^T + U \Sigma d(V^T)
$$
Multiplying both sides on left by $U^T$ and on right by $V$ and recalling that $U^T U = I, V^T V = I$ we get
\begin{equation} %\label{eq:UTdAV}
 U^T d(A) V = U^T d\big(U) \Sigma + d(\Sigma)  +  \Sigma d(V^T) V   
\end{equation}

Also, multiplying by unit vectors we have
\begin{equation}
\small
\label{eq:UTdAV}
 e_i^T  U^T d(A) V e_i = e_i^T U^T d\big(U) \Sigma e_i + e_i^T d(\Sigma)e_i   +  e_i^T \Sigma d(V^T) V e_i 
\end{equation}

%$$
%\tr(U^T d(A) V) = \tr(U^T d\big(U) \Sigma) + %\tr(d(\Sigma))  +  \tr(\Sigma d(V^T) V)
%$$

Next, we use the property that $d(YZ) = d(Y)Z + Yd(Z)$ and $d(Y^T) =  d(Y)^T$ to get that $d(Y^T Y) = d(Y)^T Y + Y^T d(Y)$. Also, note that $ [d(Y)^T Y]^T = Y^T d(Y)$ and $\tr(Y^T + Y) = 2 \tr(Y)$, hence $\tr\big(d(Y^T Y)\big) = 2 \tr(Y^T d(Y))$. 
Then, since $U^TU = I$ and $d(I) = 0$, letting $Y = U^Td(U)$ we get $Y + Y^T = 0$. Thus, $Y$ is skew symmetric. It is known that the trace of the product of a symmetric and skew symmetric matrix is zero. $\Sigma e_i e^T_i$ is a symmetric matrix. Thus, $\tr(Y \Sigma e_i e^T_i) = \tr( e^T_i U^Td(U) \Sigma e_i ) = 0$. Very similar reasoning gives $\tr(e^T_i \Sigma d(V^T) V e_i) = 0$. Then, using these and by taking trace of Equation~\ref{eq:UTdAV} we get
$$
\tr(e^T_i U^T d(A) V e_i ) =  \tr(d(e^T_i \Sigma e_i))  
$$
Next, observe that since $\lvert\lvert A\rvert\rvert = \tr(e_{1}^T \Sigma e_{1})$ and the fact that  the fact that $d(\tr(AX) = \tr(d(X))$, we have 

$$d(\lvert\lvert A\rvert\rvert) = \tr(d(e^T_1 \Sigma e_1))  = \tr(e^T_1 U^T d(A) V e_1 ) $$
$$\!\!\!\!\!\!= \tr(V e_1 e^T_1 U^T d(A))$$

The last step above use cyclic permutation within trace.
Next, observe that $A^+ = V \Sigma^+ U^T$ and  $\lvert\lvert A^+ \rvert\rvert = \tr(e_{r}^T \Sigma^+ e_{r})$, thus, similar to $A$ we have 
$$d(\lvert\lvert A^+ \rvert\rvert) = \tr(d(e^T_r \Sigma^+ e_r))  = \tr(e^T_r V^T d(A^+) U e_r ) $$
\noindent $$\!\!\!\!\!\!= \tr(U e_r e^T_r V^T d(A^+))$$
Then,
\begin{align*}
    d(&\lvert\lvert A\rvert\rvert * \lvert\lvert A^{+} \rvert\rvert) \\
    = \;& d\big(\lvert\lvert A\rvert\rvert) * \lvert\lvert A^+ \rvert\rvert + \lvert\lvert A\rvert\rvert * d\big(\lvert\lvert A^+ \rvert\rvert) & \\
    = \;& \tr(V e_1 e^T_1 U^T d(A)) * \lvert\lvert A^+ \rvert\rvert + \lvert\lvert A\rvert\rvert * \tr(U e_r e^T_r V^T d(A^+)) & 
\end{align*}
It is known that~\cite{golub1973differentiation}:
\begin{align*}
    d(A^{+}) &= - A^{+} d(A) A^{+} + (I - A^+A) d(A^T) (A^+)^T A^+  \\
    &+ A^+ (A^+)^T d(A^T) (I - AA^+)
\end{align*}
% $$d(A^{+}) = - A^{+} d(A) A^{+} + (I - A^+A) d(A^T) (A^+)^T A^+  + A^+ (A^+)^T d(A^T) (I - AA^+) $$ 
Using the shorthand $B = \lvert\lvert A^+ \rvert\rvert V e_1 e^T_1 U^T$ and $C = \lvert\lvert A\rvert\rvert U e_r e^T_r V^T $, we continue the equations from above as
{\small 
\begin{align*}
    d(&\lvert\lvert A\rvert\rvert * \lvert\lvert A^{+} \rvert\rvert) \\
    = \;& \tr(B d(A))  + \tr(C d(A^+)) & \\
    = \;& \tr(B d(A))\!-\!\tr(C A^{+} d(A) A^{+}) + \tr(C (I - A^+A) d(A^T) (A^+)^T A^+)  \\
    & + \tr(C A^+ (A^+)^T d(A^T) (I - AA^+) ) & \\
    & \mbox{Using fact that trace is invariant under cyclic permutation}\\
    & \mbox{ for the last three terms} \\
    = \;& \tr(B d(A))\!-\!\tr( A^{+} C A^{+} d(A) ) + tr(d(A^T) (A^+)^T A^+ C (I - A^+A)  ) \\
    & + \tr(d(A^T) (I - AA^+) C A^+ (A^+)^T  ) & \\
    & \mbox{Using fact that trace of a transpose $Y^T$ is same as trace of $Y$ }\\
    & \mbox{ for the last two terms and $d(A^T) = (d(A))^T$} \\
    = \;& \tr(B d(A))  - \tr( A^{+} C A^{+} d(A) ) + \tr((I - A^+A)^T C^T (A^+)^T A^+ d(A)) \\
    & + \tr(   A^+ (A^+)^T C^T (I - AA^+)^T d(A) ) & \\
    = \;& \tr\Bigg( \Big( B   -  A^{+} C A^{+}   + (I - A^+A)^T C^T (A^+)^T A^+  \\
    & +  A^+ (A^+)^T C^T (I - AA^+)^T \Big) d(A) \Bigg) & \\
\end{align*}
}
 With this and as the differential is of the form $dx =\tr(Z d(Y))$, which gives $\frac{\partial x}{\partial Y} = Z^T$
\begin{align*}
  & \frac{\partial (\lvert\lvert A\rvert\rvert * \lvert\lvert A^{+}\rvert\rvert)}{\partial A} =  \Big( B^T   -  (A^{+} C A^{+})^T  + 
  \\
  & \quad (A^+)^T  A^+ C (I - A^+A)  +  (I - AA^+) C A^+ (A^+)^T   \Big )
\end{align*}

 Now, using the fact that $\frac{\partial g(U) }{\partial x} = \tr(\frac{\partial g(U) }{\partial U} \frac{\partial U }{\partial x})$ ($U$ is a matrix; $g(U), x$ are real numbers), our original derivative that we needed is
 \begin{align*} 
 \frac{\partial (\lvert\lvert A^{+}\rvert\rvert * \lvert\lvert A\rvert\rvert)}{\partial \theta_{i,j}} = \;&\tr \Big( \frac{\partial (\lvert\lvert A^{+}\rvert\rvert * \lvert\lvert A\rvert\rvert)}{\partial A} \frac{\partial A}{\partial \theta_{i,j}}\Big)
% = \;&  \frac{\lvert\lvert A^{-1}\rvert\rvert}{\lvert\lvert A\rvert\rvert} * \tr\Big(A \frac{\partial A}{\partial \theta_j} \Big) - \frac{\lvert\lvert A\rvert\rvert}{\lvert\lvert A^{-1}\rvert\rvert} * \tr\Big( (A^{-1})^T A^{-1} (A^{-1})^T \frac{\partial A}{\partial \theta_j} \Big)
 \end{align*} 
 which concludes our proof.
\end{proof}

\begin{lemma}[Lemma~\ref{lemma:gradcond} variant for $\kappa_F(A)$]
 $\frac{\partial \kappa_2(A)}{\partial \theta_{i,j}}$ is given by $\tr\Big( \frac{\partial (\lvert\lvert A^{+}\rvert\rvert * \lvert\lvert A\rvert\rvert)}{\partial A} \frac{\partial A}{\partial \theta_{i,j}}\Big)$ where $\frac{\partial (\lvert\lvert A^{+}\rvert\rvert * \lvert\lvert A\rvert\rvert)}{\partial A} $ is

\begin{align*}
    &\frac{\lvert\lvert A^{+}\rvert\rvert}{\lvert\lvert A\rvert\rvert} * A  + \frac{\lvert\lvert A\rvert\rvert}{\lvert\lvert A^{+}\rvert\rvert} * \Big( (A^{+})^T A^{+} (A^{+})^T - 
 \\
    & \qquad (A^+)^T A^+ (A^+)^T A^+  A  - A A^+ (A^+)^T A^+ (A^+)^T \Big )
\end{align*}
\end{lemma}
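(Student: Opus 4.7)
The plan is to mirror the structure of the proof of Lemma~\ref{lemma:gradcond} but to exploit the closed-form expressions $\|A\|_F^2 = \tr(A^T A)$ and $\|A^+\|_F^2 = \tr((A^+)^T A^+)$, which eliminate the need for the SVD-specific factors $e_1 e_1^T$ and $e_r e_r^T$ that appeared in the 2-norm case. Standard matrix calculus gives $d\|A\|_F = \tfrac{1}{\|A\|_F}\tr(A^T\, dA)$ and $d\|A^+\|_F = \tfrac{1}{\|A^+\|_F}\tr((A^+)^T\, d(A^+))$. Applying the product rule to $\|A\|_F \cdot \|A^+\|_F$ together with the differential-to-gradient rule $dx = \tr(Z\, dA) \Rightarrow \partial x/\partial A = Z^T$, the summand $\|A^+\|_F\, d\|A\|_F = \tr\bigl(\tfrac{\|A^+\|_F}{\|A\|_F} A^T\, dA\bigr)$ immediately contributes $\tfrac{\|A^+\|_F}{\|A\|_F} A$, which is precisely the first term of the claim.

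For the remaining $\|A\|_F\, d\|A^+\|_F$ summand, I would substitute the same Golub--Pereyra identity used in the proof of Lemma~\ref{lemma:gradcond}, namely $d(A^+) = -A^+\, dA\, A^+ + (I-A^+A)\, d(A^T)\,(A^+)^T A^+ + A^+ (A^+)^T\, d(A^T)\, (I-AA^+)$, and process the three resulting traces separately. The first, $-\tr((A^+)^T A^+\, dA\, A^+)$, becomes $-\tr(A^+ (A^+)^T A^+\, dA)$ after one cyclic rotation, yielding a contribution $-(A^+)^T A^+ (A^+)^T$ to $\partial/\partial A$. For each of the other two, the routine is: (i) collect $d(A^T)$ on the right via cyclic invariance, (ii) convert to $dA$ via $\tr(M\, d(A^T)) = \tr(M^T\, dA)$, and (iii) use the symmetry of the Moore--Penrose projectors $A^+A$ and $AA^+$ to keep $(I-A^+A)$ and $(I-AA^+)$ intact under transposition. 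This produces contributions $(A^+)^T A^+ (A^+)^T(I-A^+A)$ and $(I-AA^+)(A^+)^T A^+ (A^+)^T$, whose expansions cancel two copies of $\pm (A^+)^T A^+ (A^+)^T$ against the first contribution to leave exactly $(A^+)^T A^+(A^+)^T - (A^+)^T A^+ (A^+)^T A^+ A - AA^+ (A^+)^T A^+(A^+)^T$ inside the $\|A\|_F/\|A^+\|_F$ prefactor. A final pass through the chain rule $\partial x/\partial \theta_{i,j} = \tr((\partial x/\partial A)(\partial A/\partial\theta_{i,j}))$ delivers the stated trace form.

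The main obstacle is purely bookkeeping: each of the three pseudoinverse-differential pieces requires one cyclic rotation, one conversion of the argument of $d$ via $\tr(Y) = \tr(Y^T)$, and one final transpose to extract $Z^T$, and the net cancellation of the three $\pm (A^+)^T A^+(A^+)^T$ contributions is easy to miscount. I would sanity-check the end result by specializing to symmetric invertible $A$ (where $A^+ = A^{-1}$ and both projector factors vanish), in which case the bracket should collapse to the standard $-A^{-T} A^{-1} A^{-T}$ gradient of $\|A^{-1}\|_F$, and by comparing against an autodiff evaluation of $\kappa_F(A)$ on a handful of random matrices.
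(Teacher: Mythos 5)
Your proposal is correct and follows essentially the same route as the paper's proof: the product rule for $\|A\|_F\,\|A^+\|_F$ with $d\|Y\|_F=\tr(Y^T dY)/\|Y\|_F$, substitution of the Golub--Pereyra differential for $d(A^+)$, and the same cyclic-permutation/transpose bookkeeping (with symmetry of $A^+A$ and $AA^+$) leading to exactly the stated bracket. The only differences are cosmetic (the paper differentiates $\sqrt{\|A\|^2}$ rather than invoking the Frobenius-norm gradient directly), and your sanity checks are sound.
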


\begin{proof}

We use the differential technique. We freely use the known result that the trace of a product of matrices is invariant under cyclic permutations: $\tr(ABC) = \tr(CAB) = \tr(BCA)$. Let $X$ denote $A^{+}$.

\begin{align*}
    d(&\lvert\lvert A^{+}\rvert\rvert * \lvert\lvert A\rvert\rvert) \\
    = \;& \lvert\lvert X\rvert\rvert * d\big(\sqrt{\lvert\lvert A\rvert\rvert^2}\big) + d\big(\sqrt{\lvert\lvert X\rvert\rvert^2}\big) * \lvert\lvert A\rvert\rvert& \\
    = \;& \frac{\lvert\lvert X\rvert\rvert}{2\lvert\lvert A\rvert\rvert} * d\big(\lvert\lvert A\rvert\rvert^2\big) + d\big(\lvert\lvert X\rvert\rvert^2\big) * \frac{\lvert\lvert A\rvert\rvert}{2\lvert\lvert X\rvert\rvert}& \\
    = \;& \frac{\lvert\lvert X\rvert\rvert}{2\lvert\lvert A\rvert\rvert} * d\big(\tr(A^T A)\big) + d\big(\tr(X^T X)\big) * \frac{\lvert\lvert A\rvert\rvert}{2\lvert\lvert X\rvert\rvert} \\
    & \mbox{ as } \lvert\lvert Y\rvert\rvert^2 = \tr(Y^T Y)\\
    = \;& \frac{\lvert\lvert X\rvert\rvert}{2\lvert\lvert A\rvert\rvert} * \tr\big(d(A^T A)\big) + \tr\big(d(X^T X)\big) * \frac{\lvert\lvert A\rvert\rvert}{2\lvert\lvert X\rvert\rvert} \\
    & \mbox{ as } d(\tr(Y)) = \tr(d(Y))
\end{align*}

Next, we use the property that $d(YZ) = d(Y)Z + Yd(Z)$ and $d(Y^T) =  d(Y)^T$ to get that $d(Y^T Y) = d(Y)^T Y + Y^T d(Y)$. Also, note that $ [d(Y)^T Y]^T = Y^T d(Y)$ and $\tr(Y^T + Y) = 2 \tr(Y)$, hence $\tr\big(d(Y^T Y)\big) = 2 \tr(Y^T d(Y))$. Using this (with $Y$ as $A$ or $X$) we get

\begin{align*}
    d(&\lvert\lvert A^{+}\rvert\rvert * \lvert\lvert A\rvert\rvert) \\
    = \;& \frac{\lvert\lvert X\rvert\rvert}{\lvert\lvert A\rvert\rvert} * \tr\big(A^T d(A)\big) + \tr\big(X^T d( X)\big) * \frac{\lvert\lvert A\rvert\rvert}{\lvert\lvert X\rvert\rvert} & \\
 \end{align*}   
 
Since $X = A^{+}$, from~\cite{golub1973differentiation}
\begin{align*}
& d(A^{+}) = - A^{+} d(A) A^{+} + (I - A^+A) d(A^T) (A^+)^T A^+  +\\
& \qquad\qquad A^+ (A^+)^T d(A^T) (I - AA^+)     
\end{align*}  
and the trace is invariant under cyclic permutations, we get
{\small
\begin{align*}
    d(&\lvert\lvert A^{+}\rvert\rvert * \lvert\lvert A\rvert\rvert) \\
    = \;& \frac{\lvert\lvert A^{+}\rvert\rvert}{\lvert\lvert A\rvert\rvert} * \tr\big(A^T d(A)\big) + \tr\big(-(A^{+})^T A^{+} d( A) A^{+} \big) * \frac{\lvert\lvert A\rvert\rvert}{\lvert\lvert A^{+}\rvert\rvert} & \\
     & \quad + \tr\big( (A^+)^T (I - A^+A) d(A^T) (A^+)^T A^+ \big) * \frac{\lvert\lvert A\rvert\rvert}{\lvert\lvert A^{+}\rvert\rvert}\\
     & \quad + \tr\big( (A^+)^T A^+ (A^+)^T d(A^T) (I - AA^+)  \big) * \frac{\lvert\lvert A\rvert\rvert}{\lvert\lvert A^{+}\rvert\rvert}  \\
     & \mbox{Using fact that trace is invariant under cyclic permutation }\\
     & \mbox{ for the last three terms} \\
    = \;& \frac{\lvert\lvert A^{+}\rvert\rvert}{\lvert\lvert A\rvert\rvert} * \tr\big(A^T d(A)\big) + \tr\big(-A^{+} (A^{+})^T A^{+} d( A)  \big) * \frac{\lvert\lvert A\rvert\rvert}{\lvert\lvert A^{+}\rvert\rvert} & \\
         & \quad + \tr\big( d(A^T) (A^+)^T A^+ (A^+)^T (I - A^+A)  \big) * \frac{\lvert\lvert A\rvert\rvert}{\lvert\lvert A^{+}\rvert\rvert} \\
         & \quad + \tr\big( d(A^T) (I - AA^+) (A^+)^T A^+ (A^+)^T   \big) * \frac{\lvert\lvert A\rvert\rvert}{\lvert\lvert A^{+}\rvert\rvert}  \\
    & \mbox{Using fact that trace of a transpose $Y^T$ is same as trace of $Y$ }\\
    & \mbox{ for the last two terms and $d(A^T) = (d(A))^T$} \\
    = \;& \frac{\lvert\lvert A^{+}\rvert\rvert}{\lvert\lvert A\rvert\rvert} * \tr\big(A^T d(A)\big) + \tr\big(-A^{+} (A^{+})^T A^{+} d( A)  \big) * \frac{\lvert\lvert A\rvert\rvert}{\lvert\lvert A^{+}\rvert\rvert} & \\
    & \quad + \tr\big( (I - A^+A)^T A^+ (A^+)^T A^+  d(A)  \big) * \frac{\lvert\lvert A\rvert\rvert}{\lvert\lvert A^{+}\rvert\rvert} \\
    & \quad + \tr\big(  A^+ (A^+)^T A^+ (I - AA^+)^T d(A)  \big) * \frac{\lvert\lvert A\rvert\rvert}{\lvert\lvert A^{+}\rvert\rvert}  \\
    = \;& \tr \Bigg (\frac{\lvert\lvert A^{+}\rvert\rvert}{\lvert\lvert A\rvert\rvert} * \big(A^T d(A)\big) + \\
    & \Big(-A^{+} (A^{+})^T A^{+} d( A) + (I - A^+A)^T A^+ (A^+)^T A^+  d(A) \\
    & + A^+ (A^+)^T A^+ (I - AA^+)^T d(A)  \Big) * \frac{\lvert\lvert A\rvert\rvert}{\lvert\lvert A^{+}\rvert\rvert} \Bigg) & \\
    = \;& \tr \Bigg (\Big( \frac{\lvert\lvert A^{+}\rvert\rvert}{\lvert\lvert A\rvert\rvert} * A^T  + \\
    & \frac{\lvert\lvert A\rvert\rvert}{\lvert\lvert A^{+}\rvert\rvert} * \big( - A^{+} (A^{+})^T A^{+} + (I - A^+A)^T A^+ (A^+)^T A^+ +\\
    & A^+ (A^+)^T A^+ (I - AA^+)^T \big) \Big ) d(A) \Bigg) & \\
 \end{align*} 
 }
 It can be seen that $\big( - A^{+} (A^{+})^T A^{+} + (I - A^+A)^T A^+ (A^+)^T A^+ + A^+ (A^+)^T A^+ (I - AA^+)^T \big)$ reduces to $\big(  - A^T(A^+)^T A^+ (A^+)^T A^+ + A^+ (A^+)^T A^+ - A^+ (A^+)^T A^+ (A^+)^T A^T \big)$
 
 With this and as the differential is of the form $dx =\tr(Z d(Y))$, which gives $\frac{\partial x}{\partial Y} = Z^T$
\begin{align*}
& \frac{\partial (\lvert\lvert A^{+}\rvert\rvert * \lvert\lvert A\rvert\rvert)}{\partial A} = \frac{\lvert\lvert A^{+}\rvert\rvert}{\lvert\lvert A\rvert\rvert} * A  + \frac{\lvert\lvert A\rvert\rvert}{\lvert\lvert A^{+}\rvert\rvert} * \Big( (A^{+})^T A^{+} (A^{+})^T - \\
&\qquad\quad (A^+)^T A^+ (A^+)^T A^+  A  - A A^+ (A^+)^T A^+ (A^+)^T \Big )
\end{align*}

 Now, using the fact that $\frac{\partial g(U) }{\partial x} = \tr(\frac{\partial g(U) }{\partial U} \frac{\partial U }{\partial x})$ ($U$ is a matrix; $g(U), x$ are real numbers), our original derivative that we needed is
 \begin{align*} 
 \frac{\partial (\lvert\lvert A^{+}\rvert\rvert * \lvert\lvert A\rvert\rvert)}{\partial \theta_{i,j}} = \;&\tr \Big( \frac{\partial (\lvert\lvert A^{+}\rvert\rvert * \lvert\lvert A\rvert\rvert)}{\partial A} \frac{\partial A}{\partial \theta_{i,j}}\Big)\\
% = \;&  \frac{\lvert\lvert A^{-1}\rvert\rvert}{\lvert\lvert A\rvert\rvert} * \tr\Big(A \frac{\partial A}{\partial \theta_j} \Big) - \frac{\lvert\lvert A\rvert\rvert}{\lvert\lvert A^{-1}\rvert\rvert} * \tr\Big( (A^{-1})^T A^{-1} (A^{-1})^T \frac{\partial A}{\partial \theta_j} \Big)
 \end{align*} 
which concludes our proof.
\end{proof}

\begin{proposition*}[Restatement of Proposition~\ref{prop:cond_def}]
For the approximate $A'$ obtained from $A$ as described above and  $x'$ a solution for $A' x = b$, the following hold: (1)
$\norm{A' - A}_2 \leq  \sigma_{\max} /B $ 
and (2) $\frac{||x^* - x'||_2}{||x'||_2} \leq \kappa_2(A) / B$ for some solution $x^*$ of $A x = b$.
\end{proposition*}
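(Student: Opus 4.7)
The plan is to handle the two claims separately: (1) is a direct SVD calculation using the fact that $A'$ inherits the singular vectors $U, V$ of $A$, and (2) reduces to (1) via a classical perturbation-of-linear-systems identity. A small bookkeeping point first: the surrounding prose makes clear that the defense raises small singular values \emph{up} to $\sigma_{\max}/B$ (so that $\sigma_{\min}(A') \geq \sigma_{\max}/B$ and hence $\kappa_2(A') \leq B$), so throughout I use $\sigma'_i = \max(\sigma_i, \sigma_{\max}/B)$.

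For (1), I would write $A' - A = U(\Sigma' - \Sigma)V^T$, where $\Sigma' - \Sigma$ is diagonal with $i$-th entry $\max(\sigma_i, \sigma_{\max}/B) - \sigma_i$. That entry is $0$ whenever $\sigma_i \geq \sigma_{\max}/B$ and lies in $(0,\, \sigma_{\max}/B]$ otherwise. Since $U^T U = V^T V = I$, the map $M \mapsto U M V^T$ is a 2-norm isometry on the $r \times r$ block, so $\norm{A' - A}_2 = \norm{\Sigma' - \Sigma}_2 = \max_i (\sigma'_i - \sigma_i) \leq \sigma_{\max}/B$, which is (1).

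For (2), take any $x'$ with $A' x' = b$ and look for $x^*$ in the form $x^* = x' + y$. Then $Ax^* = b$ iff $A y = b - A x' = A' x' - A x' = (A' - A) x'$. Because $A$ and $A'$ share the same column space $\text{range}(U)$, the right-hand side lies in $\text{range}(A)$, so this equation in $y$ is solvable; the minimum-norm choice is $y = A^+ (A' - A) x'$. Submultiplicativity then yields $\norm{x^* - x'}_2 \leq \norm{A^+}_2 \, \norm{A' - A}_2 \, \norm{x'}_2$, and dividing by $\norm{x'}_2$ and plugging in (1) gives $\frac{\norm{x^* - x'}_2}{\norm{x'}_2} \leq \norm{A^+}_2 \cdot \sigma_{\max}/B = \kappa_2(A)/B$, using $\kappa_2(A) = \norm{A^+}_2 \, \norm{A}_2 = \norm{A^+}_2 \, \sigma_{\max}$.

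The only real obstacle I expect is the non-square, possibly rank-deficient setting: I need the identity $\norm{U M V^T}_2 = \norm{M}_2$ with the thin $U, V$ (which follows from $U^T U = V^T V = I$), and I need the solvability of $A y = (A' - A) x'$, which relies on the fact that $A'$ uses the \emph{same} $U, V$ as $A$, so that $\text{range}(A' - A) \subseteq \text{range}(U) = \text{range}(A)$. Both facts are immediate from the construction of $A'$ but should be stated explicitly so the proof covers the rectangular case highlighted throughout the paper.
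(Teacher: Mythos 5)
Your part (1) is exactly the paper's argument (same singular-vector decomposition of $A'-A$, same bound $\sigma'_i-\sigma_i\le\sigma_{\max}/B$), so nothing to add there. For part (2) you take a genuinely different and more elementary route than the paper: the paper interprets both solutions canonically as $x'=(A')^{+}b$, $x^{*}=A^{+}b$ and invokes the perturbation machinery of Grcar (Theorem 5.3 together with Lemma 2.2 on the lower-bound matrix norm $\norm{A}_l$, giving $\norm{x^*-x'}_2/\norm{x'}_2\le\norm{\Delta A}_2/\norm{A}_l\le\norm{A^+}_2\norm{\Delta A}_2$), whereas you construct $x^{*}=x'+A^{+}(A'-A)x'$ directly and use submultiplicativity. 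Your construction is self-contained and correct whenever $A$ has full rank $r=\min(m,n)$, and in that regime it is arguably cleaner than citing an external theorem.

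However, there is a genuine gap in the step where you claim solvability of $Ay=(A'-A)x'$ from ``$\mathrm{range}(A'-A)\subseteq\mathrm{range}(U)=\mathrm{range}(A)$.'' The equality $\mathrm{range}(U)=\mathrm{range}(A)$ is false precisely when $A$ is rank-deficient: then $\mathrm{range}(A)=\mathrm{span}\{u_i:\sigma_i>0\}$ is a proper subspace of $\mathrm{range}(U)$, while $(A'-A)x'=\sum_{i:\sigma_i<\sigma_{\max}/B}(\sigma'_i-\sigma_i)u_i(v_i^{T}x')$ generally has components along the $u_i$ with $\sigma_i=0$, so $A y=(A'-A)x'$ need not be solvable, and indeed $Ax=b$ itself may be inconsistent. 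This is not a corner case here --- a singular $A$ is exactly what the attack produces and the defense is built for. The gap is patchable: either observe that when $\mathrm{rank}(A)<r$ one has $\kappa_2(A)=\infty$ so (2) carries no content (under the paper's convention that ``solution'' means the canonical $A^{+}b$, which always exists), or do as the paper does and prove the bound for the pseudoinverse solutions via a least-squares perturbation result. As written, though, the justification offered for solvability is incorrect in the very regime the proposition is meant to cover, so you should restrict the direct construction to the full-rank case and handle the rank-deficient case separately.
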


\begin{proof}[Proof of Proposition~\ref{prop:cond_def}]

Let $\epsilon = 1/B$. As $A, A'$ have the same $U,V$ in their respective SVD, $A' - A = \sum_i (\sigma_i' - \sigma_i)u_i v_i^T$ where $u_i, v_i$ are columns of the matrix $U,V$. The largest value of $\sigma_i - \sigma_i'$ can be $\epsilon\sigma_{\max}$. Thus, from definition of matrix 2-operator norm and the fact that $\norm{A}_2 = \sigma_{\max}$, we obtain $\norm{A' - A}_2 < \epsilon \sigma_{\max}$.

Next, let $A' = A + \Delta A$ and let the solution obtained using $A'$ be $x'$ and one using $A$ be $x^*$. (Here by solution we mean the canonical $A^+ b$).  According to~\cite{grcar2010matrix}, a lower bound matrix norm $\norm{A}_l$ is defined and by Theorem 5.3 of~\cite{grcar2010matrix}, it can be shown that $\frac{\norm{x^* - x'}_2}{\norm{x'}_2} \leq \frac{\norm{\Delta A}_2}{\norm{A}_l}$. Further, according to Lemma 2.2 of~\cite{grcar2010matrix}, we have $1 \leq \norm{A}_l \norm{A^+}_2$. Using this and the fact we already proved that $\norm{\Delta A}_2 < \epsilon \sigma_{\max} = \epsilon \norm{A}_2$, we get $\frac{\norm{x^* - x'}_2}{\norm{x'}_2} \leq \epsilon \norm{A^+}_2 \norm{A}_2 = \epsilon \kappa_2(A)$.

\end{proof}

%\section{Final instructions}
%\begin{itemize}

%\item You should directly generate PDF files using \verb+pdflatex+.

%\item You can check which fonts a PDF files uses.  In Acrobat Reader, select the
%  menu Files$>$Document Properties$>$Fonts and select Show All Fonts. You can
%  also use the program \verb+pdffonts+ which comes with \verb+xpdf+ and is
%  available out-of-the-box on most Linux machines.

%\item The IEEE has recommendations for generating PDF files whose fonts are also
%  acceptable for NeurIPS. Please see
%  \url{http://www.emfield.org/icuwb2010/downloads/IEEE-PDF-SpecV32.pdf}

%\item \verb+xfig+ "patterned" shapes are implemented with bitmap fonts.  Use
%  "solid" shapes instead.

%\item The \verb+\bbold+ package almost always uses bitmap fonts.  You should use
%  the equivalent AMS Fonts:
%\begin{verbatim}
%   \usepackage{amsfonts}
%\end{verbatim}
%followed by, e.g., \verb+\mathbb{R}+, \verb+\mathbb{N}+, or \verb+\mathbb{C}+
%for $\mathbb{R}$, $\mathbb{N}$ or $\mathbb{C}$.  You can also use the following
%workaround for reals, natural and complex:
%\begin{verbatim}
 %  \newcommand{\RR}{I\!\!R} %real numbers
 %  \newcommand{\Nat}{I\!\!N} %natural numbers
 %  \newcommand{\CC}{I\!\!\!\!C} %complex numbers
%\end{verbatim}
%Note that \verb+amsfonts+ is automatically loaded by the %\verb+amssymb+ package.

%\end{itemize}

%If your file contains type 3 fonts or non embedded TrueType fonts, we will ask
%you to fix it.

%%%%%%%%%%%%%%%%%%%%%%%%%%%%%%%%%%%%%%%%%%%%%%%%%%%%%%%%%%%%%%%%%%%%%%%%%%%%%%%
%%%%%%%%%%%%%%%%%%%%%%%%%%%%%%%%%%%%%%%%%%%%%%%%%%%%%%%%%%%%%%%%%%%%%%%%%%%%%%%

\end{document}